\newcommand{\calcfactor}[1]{%
  \dimexpr#1\textwidth-2\tabcolsep-1.5\arrayrulewidth\relax
}
\newcolumntype{P}[1]{p{\calcfactor{#1}}}
\DeclareMathOperator*{\argmin}{arg\,min}
\newtheorem{proposition}{Proposition}
\newtheorem{corollary}{Corollary}
\title{Kantorovich Strikes Back!\\Wasserstein GANs are not Optimal Transport?}
\author{%
  Alexander Korotin\thanks{Equal contribution.} \\
  Skolkovo Institute of Science and Technology\\
  Artificial Intelligence Research Institute\\
  Moscow, Russia\\
  \texttt{a.korotin@skoltech.ru} \\
   \And
   Alexander Kolesov* \\
  Skolkovo Institute of Science and Technology\\
  Moscow, Russia\\
  \texttt{a.kolesov@skoltech.ru} \\
   \And
   Evgeny Burnaev \\
  Skolkovo Institute of Science and Technology\\
  Artificial Intelligence Research Institute\\
  Moscow, Russia\\
  \texttt{e.burnaev@skoltech.ru} \\
}
\begin{document}

\maketitle

\vspace{-5.5mm}\begin{abstract}
  \vspace{-2mm}Wasserstein Generative Adversarial Networks (WGANs) are the popular generative models built on the theory of Optimal Transport (OT) and the Kantorovich duality. Despite the success of WGANs, it is still unclear how well the underlying OT dual solvers approximate the OT cost (Wasserstein-1 distance, $\mathbb{W}_{1}$) and the OT gradient needed to update the generator. In this paper, we address these questions. We construct 1-Lipschitz functions and use them to build ray monotone transport plans. This strategy yields pairs of continuous benchmark distributions with the analytically known OT plan, OT cost and OT gradient in high-dimensional spaces such as spaces of images. We thoroughly evaluate popular WGAN dual form solvers (gradient penalty, spectral normalization, entropic regularization, etc.) using these benchmark pairs. Even though these solvers perform well in WGANs, none of them faithfully compute $\mathbb{W}_{1}$ in high dimensions. Nevertheless, many provide a meaningful approximation of the OT gradient. These observations suggest that these solvers should not be treated as good estimators of $\mathbb{W}_{1}$, but to some extent they indeed can be used in variational problems requiring the minimization of $\mathbb{W}_{1}$.
\end{abstract}

\vspace{-0mm}The Wasserstein-1 distance  \cite{arjovsky2017wasserstein} ($\mathbb{W}_{1}$) is a popular loss function to learn generative models. It has numerous advantages compared to the vanilla GAN loss \cite{goodfellow2014generative}. For example, $\mathbb{W}_{1}$ is correctly defined if the distributions' supports differ \cite{arjovsky2017towards}. Besides, it correlates with the sample quality, provides improved stability of the optimization process and does not suffer from the vanishing gradients issue \cite[\wasyparagraph 4]{arjovsky2017wasserstein}.

\vspace{-1mm}Generative models which employ $\mathbb{W}_{1}$ as the loss to update the generator are called the Wasserstein GANs (WGANs). To compute $\mathbb{W}_{1}$, they use its variational approximation based on the \textbf{Kantorovich duality} \cite{kantorovitch1958translocation} and the \textbf{Optimal Transport} (OT) theory \cite{villani2008optimal,santambrogio2015optimal}. Since the introduction of the original WGANs with the weight clipping method \cite{arjovsky2017wasserstein}, a lot of alternative techniques (\textit{neural dual OT solvers}) to compute $\mathbb{W}_{1}$ have been proposed: gradient penalties \cite{gulrajani2017improved,petzka2017regularization,wei2018improving}, entropic regularization \cite{sanjabi2018convergence}, architectural constraints \cite{miyato2018spectral,anil2019sorting}, batch-based methods \cite{mallasto2019q,liu2018two}, maximin methods \cite{nhan2019threeplayer,korotin2021neural}, etc.

\vspace{-1mm}Despite the popularity of WGANs, it still remains unclear to what extent their success is connected to OT and $\mathbb{W}_{1}$ rather than, e.g., to a good choice of regularization \cite[\wasyparagraph 7]{stanczuk2021wasserstein}. Due to the limited amount of pairs of distributions with known $\mathbb{W}_{1}$, it is challenging to evaluate existing dual OT solvers.

\textbf{Contributions.} We develop a generic methodology based on the \textit{transport rays} (\wasyparagraph\ref{sec-ray-monotone}) to evaluate dual OT solvers  for the Wasserstein-1 distance ($\mathbb{W}_{1}$). Our main contributions are as follows:
\begin{itemize}[leftmargin=*]
    \item We use $1$-Lipschitz functions to construct pairs of \textit{continuous} distributions that we use as a benchmark with analytically-known OT cost, map and gradient for $\mathbb{W}_{1}$ transport (\wasyparagraph\ref{sec-construction}, \wasyparagraph\ref{sec-benchmark-pairs}).
    \item We use these \textit{benchmark distributions} to evaluate (\wasyparagraph\ref{sec-evaluation}) popular WGAN dual OT solvers (\wasyparagraph\ref{sec-dual-solvers}) in high-dimensional spaces, including the spaces of $32\times 32$ CIFAR-10 images, $64\times 64$ CelebA faces.
\end{itemize}
Related works \cite{mallasto2019well,stanczuk2021wasserstein} consider \textit{discrete} distributions and show that some solvers fail to estimate $\mathbb{W}_{1}$. In contrast to them, we study how well the solvers compute the gradient of $\mathbb{W}_{1}$ (\textit{OT gradient}), as it is the OT gradient which is used to update the generator in WGANs, not the value of $\mathbb{W}_{1}$. We use \textit{continuous} distributions since in the discrete case the OT gradient may be ill-defined (\wasyparagraph \ref{sec-background}).
 



\textbf{Notation.}
We work in the $\mathbb{R}^{D}$ space that is endowed with the Euclidean norm $||\cdot||_{2}$. We use $\mu_{L}$ to denote the Lesbegue measure on $\mathbb{R}^{D}$. For a measurable map $T:\mathbb{R}^{D}\rightarrow\mathbb{R}^{D}$, we denote the associated pushforward operator by $T\sharp$. We consider Borel probability distributions $\mathbb{P},\mathbb{Q}$ on $\mathbb{R}^{D}$ with finite first moments. We use  $\Pi(\mathbb{P},\mathbb{Q})$ to denote the set of probability distributions on $\mathbb{R}^{D} \times \mathbb{R}^{D}$ with marginals $\mathbb{P}$ and $\mathbb{Q}$ (transport plans).
All the integrals are computed over $\mathbb{R}^{D}$, if not stated otherwise. We write $||f||_{L} \leq C$ if $f:\mathbb{R}^{D}\rightarrow\mathbb{R}$ is  $C$-Lipschitz.  

\section{Background on Optimal Transport}
\label{sec-background}
 
\textbf{Primal Formulation.} For distributions $\mathbb{P}, \mathbb{Q}$, the \textbf{Monge}'s formulation of the  Wasserstein-1 ($\mathbb{W}_{1}$) distance, i.e., OT with the distance cost function $\|x-y\|_{2}$, is given by (Figure \ref{fig:monge-ot})
\begin{equation}
\label{primal-monge-w1}
 \mathbb{W}_{1}(\mathbb{P}, \mathbb{Q}) \stackrel{def}{=} \min_{T \sharp \mathbb{P} = \mathbb{Q}}\int  ||x - T(x)||_{2}d\mathbb{P}(x),
\end{equation}
where $\min$ is taken over measurable functions $T: \mathbb{R}^{D} \to \mathbb{R}^{D}$  (transport maps) that map $\mathbb{P}$ to $\mathbb{Q}$. The optimal $T^{*}$ is called the \textit{optimal transport map} (OT map). Note that \eqref{primal-monge-w1} is not symmetric, and this formulation does not allow for mass splitting, i.e., for some $\mathbb{P},\mathbb{Q}$, there is no map
$T$ that satisfies $T\sharp\mathbb{P}=\mathbb{Q}$ \cite[Remark 2.4]{peyre2019computational}. Thus, \textbf{Kantorovich} \cite{kantorovitch1958translocation} proposed the following relaxation (Figure \ref{fig:kantorovich-ot}):
\begin{equation}
\label{primal-kantorovich-w1}
 \mathbb{W}_{1}(\mathbb{P},\mathbb{Q}) \stackrel{def}{=} \min_{\pi \in \Pi (\mathbb{P}, \mathbb{Q})} \int_{\mathbb{R}^{D} \times \mathbb{R}^{D}} ||x - y||_{2} d\pi(x,y),
\end{equation}
where $\min$ is taken over transport plans $\pi\!\in\!\Pi(\mathbb{P},\mathbb{Q})$. The optimal $\pi^{*} \!\in\! \Pi(\mathbb{P},\mathbb{Q})$ is called the \textit{optimal transport plan} (OT plan). If $\pi^{*}\!=\![\text{id}_{\mathbb{R}^{D}},T^{*}]\sharp\mathbb{P}$ for some map $T^{*}:\mathbb{R}^{D}\rightarrow\mathbb{R}^{D}$, then $T^{*}$ minimizes formulation \eqref{primal-monge-w1}.
In general, there might exist more than one OT plan $\pi^{*}$ or OT map $T^{*}$.
 \begin{figure}[!h]
\begin{subfigure}[b]{0.49\linewidth}
\centering
\includegraphics[width=0.9\linewidth]{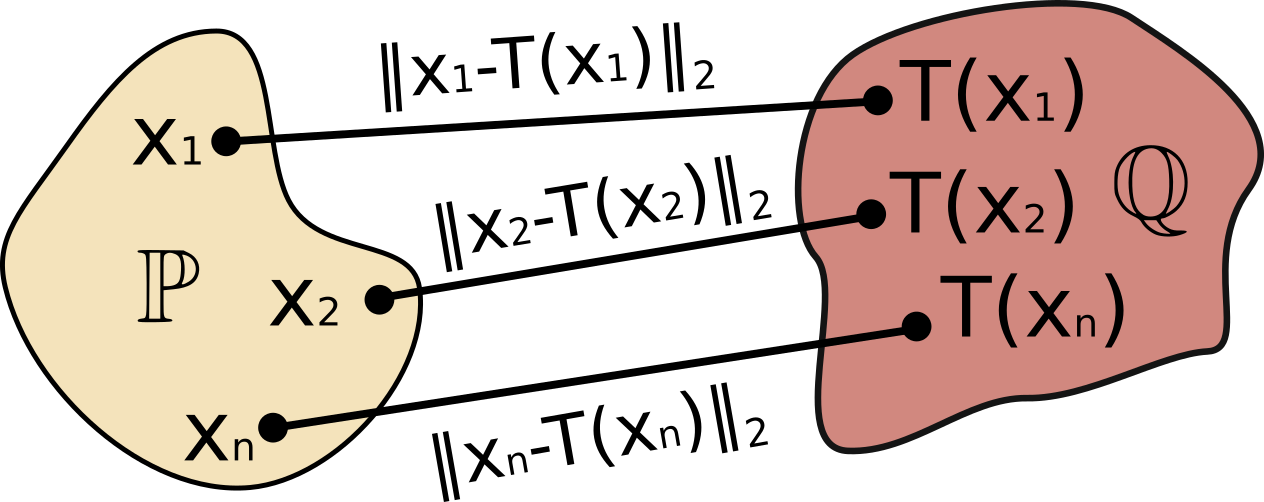}
\caption{\centering Monge's OT formulation \eqref{primal-monge-w1}.}
\label{fig:monge-ot}
\end{subfigure}
\begin{subfigure}[b]{0.49\linewidth}
\centering
\includegraphics[width=0.9\linewidth]{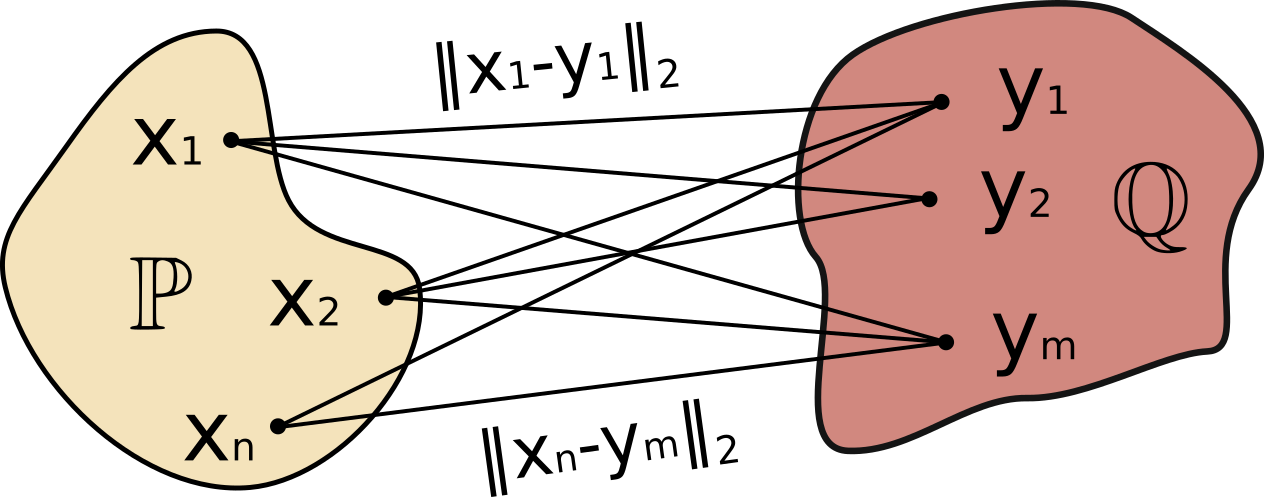}
\caption{\centering Kantorovich's OT formulation \eqref{primal-kantorovich-w1}.}
\label{fig:kantorovich-ot}
\end{subfigure}
\caption{Monge's and Kantorovich's OT fomulations of the Wasserstein-1 distance ($\mathbb{W}_{1}$).}
\end{figure}

\textbf{Dual formulation.}
 For distributions $\mathbb{P},\mathbb{Q}$, the dual formulation of  $\mathbb{W}_{1}$ is given by  \cite[Thm. 5.10]{villani2008optimal}:
 \vspace{-1mm}
\begin{equation}
\label{ot-dual-two-potentials}
 \mathbb{W}_{1}(\mathbb{P},\mathbb{Q}) = \max_{f \oplus g \leq ||\cdot||_{2}} \int  f(x)d\mathbb{P}(x) + \int  g(y)d\mathbb{Q}(y),\vspace{-0.5mm}
\end{equation}
where $\max$ is taken over $f,g: \mathbb{R}^{D}\rightarrow\mathbb{R}$ satisfying $f(x) + g(y) \leq \| x - y \|_{2}$ for all $ x,y \in \mathbb{R}^{D}$. 
By using the $c$-transform $f^{c}(y) \stackrel{def}{=} \min\limits_{x \in \mathbb{R}^{D}} \lbrace ||x-y||_{2} - f(x)\rbrace$ \cite[\wasyparagraph 5]{villani2008optimal}, one rewrites \eqref{ot-dual-two-potentials} as
\begin{equation}
\label{ot-dual-c-transform}
 \mathbb{W}_{1}(\mathbb{P},\mathbb{Q}) = \max_{ f} \int  f(x)d\mathbb{P}(x) + \int  f^{c}(y)d\mathbb{Q}(y).
\end{equation}
In accordance with \cite[Case 5.16]{villani2008optimal}, dual form \eqref{ot-dual-c-transform} can be further restricted to 1-Lipschitz functions. In this case, it holds $f^{c}(y)=-f(y)$ \cite[Case 5.4]{villani2008optimal}, and the alternative duality formula  for $\mathbb{W}_{1}$is 
\begin{equation}
\label{ot-dual-1-lip}
 \mathbb{W}_{1}(\mathbb{P},\mathbb{Q}) = \max_{ ||f||_{L} \leq 1} \int  f(x)d\mathbb{P}(x) - \int  f(y)d\mathbb{Q}(y).
 \end{equation}
 In WGAN literature \cite{gulrajani2017improved,arjovsky2017towards}, function $f$ is typically called the \textit{critic} (or discriminator). In OT literature \cite{villani2008optimal,santambrogio2015optimal,villani2003topics}, functions $f$, $f^{c}$, $g$ are commonly refered as the (Kantorovich) \textit{potentials}.
 
\begin{wrapfigure}{r}{0.45\textwidth}
  \vspace{2mm}
  \begin{center}
    \includegraphics[width=0.99\linewidth]{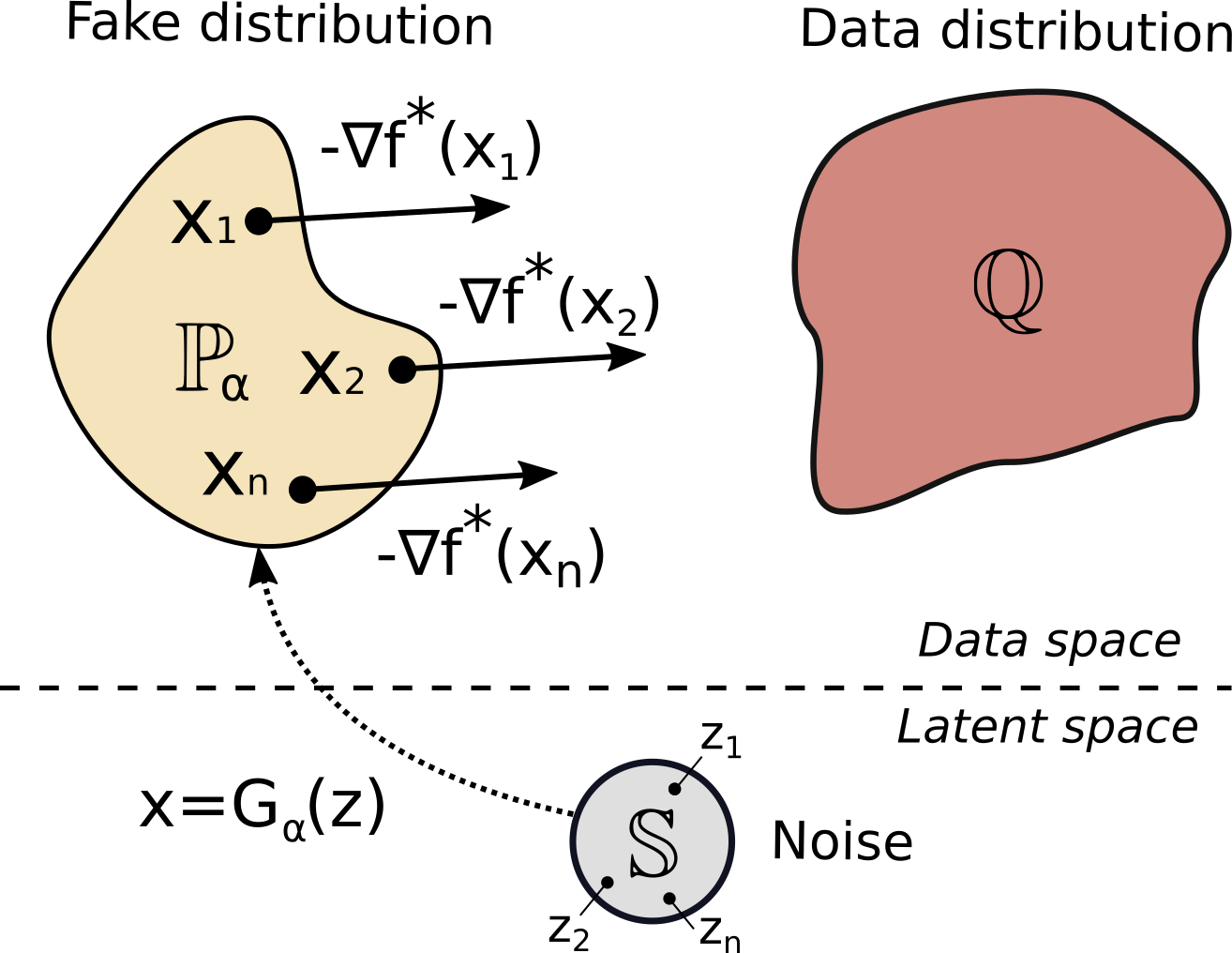}
  \end{center}
  \vspace{-2mm}
  \caption{\centering The anti-gradient $-\nabla f^{*}(x)$ shows where to move the mass of each $x=G_{\alpha}(z)$ to make the generated $\mathbb{P}_{\alpha}$ closer to $\mathbb{Q}$ in $\mathbb{W}_{1}$.}
  \vspace{-3mm}
  \label{fig:ot-grad-def}
\end{wrapfigure}\textbf{Optimal transport in GANs.} Derivatives of $\mathbb W_1$ are used implicitly in generative modeling \cite{arjovsky2017wasserstein,petzka2017regularization,miyato2018spectral,gulrajani2017improved,nhan2019threeplayer,sanjabi2018convergence,seguy2017large,liu2019wasserstein}  that incorporates $\mathbb{W}_1$ loss, in which case $\mathbb{P} = \mathbb{P}_{\alpha}$ is a parametric distribution  and $\mathbb{Q}$ is the data distribution. Typically, $\mathbb{P}_{\alpha}=G_{\alpha}\sharp\mathbb{S}$ is the distribution generated from a fixed latent distribution $\mathbb{S}$ by a generator network $G_{\alpha}$. The goal is to find parameters $\alpha$ that minimize $\mathbb{W}_{1}(\mathbb{P}_{\alpha},\mathbb{Q})$ via gradient descent. The loss function for the generator is:
 \begin{equation}
\nonumber
 \mathbb{W}_{1}(\mathbb{P}_{\alpha},\mathbb{Q}) \!=\! \int_{z}\! f^{*}(G_{\alpha}(z))d\mathbb{S}(z) -\!\int\! f^{*}(y)d\mathbb{Q}(y),
 \end{equation}
where $f^{*}$ is the optimal potential in \eqref{ot-dual-1-lip}. The loss derivative (Figure \ref{fig:ot-grad-def}) is given by \cite[Eq. 3]{genevay2017gan}:
\begin{equation}
\frac{\partial \mathbb{W}_{1}(\mathbb{P_\alpha},\mathbb{Q})}{\partial \alpha} \!=\! \int_{z}\! \mathbf{J}_{\alpha} G_\alpha(z)^T \nabla f^*\big(G_{\alpha}(z)\big) d\mathbb{S}(z),
\nonumber
\end{equation}
where $\mathbf{J}_{\alpha} G_\alpha(z)^T$ is the transpose of the Jacobian matrix of $G_\alpha(z)$ w.r.t.\ parameters $\alpha$. This result still holds without assuming the potentials are fixed \cite[Theorem 3]{arjovsky2017wasserstein} by the envelope theorem \cite{milgrom2002envelope}. 

In practice, the optimal potential $f^{*}$ is unknown. Therefore, WGANs approximate it with a network $f_{\theta}:\mathbb{R}^{D}\rightarrow\mathbb{R}$ (potential) by maximizing \eqref{ot-dual-1-lip}, \eqref{ot-dual-c-transform} or \eqref{ot-dual-two-potentials} via the stochastic gradient ascent (SGA). This is usually associated with \textit{evaluating} the $\mathbb{W}_{1}$ loss. However, note that \underline{\textit{the loss value plays no role}} in generator updates. Only the gradient $\nabla f^{*}$ of the potential is needed. We call it the \textit{OT gradient}. We use a generic phrase \textit{OT solver} to refer to any algorithm which is capable of recovering $\nabla f^{*}$.
 
\textbf{Quantitative evaluation of OT solvers.} Existing solvers are typically tested as the loss in WGANs without evaluating the actual OT performance.
The quality of the generated samples is  evaluated by standard metrics such as FID  \cite{heusel2017gans} or IS \cite{salimans2016improved}. These metrics do not provide understanding about the
quality of the solver itself since they depend on components of the model that are not related to OT.

In \cite{pinetz2019estimation,mallasto2019well,stanczuk2021wasserstein}, the authors use discrete $\mathbb{P},\mathbb{Q}$ to show that some solvers imprecisely compute $\mathbb{W}_{1}$. Their approach is \textit{not applicable} to evaluation of the OT gradient, as $\nabla f^{*}$ is ill-defined in the discrete case. For example, when $\mathbb{P}=\delta_{0}$, $\mathbb{Q}=\delta_{1}$, it holds that $f^{*}=-[x]_{+}$ is an optimal potential, but it is not even differentiable at $x=0= \text{Supp}(\mathbb{P})$. The existence of the OT gradient is studied, e.g., in \cite{houdard2021existence}.



\section{Neural Dual Solvers for the Wasserstein-1 Distance}
\label{sec-dual-solvers}
Our proposed benchmark is useful for testing any OT solver that computes $\nabla f^{*}$ or $\mathbb{W}_{1}$. We evaluate only neural solvers which are based on \eqref{ot-dual-1-lip}, \eqref{ot-dual-c-transform}, or \eqref{ot-dual-two-potentials} and used in WGANs. We provide an overview of these methods below. We group them by the dual formulations which they use. 

 
 
\vspace{-1mm}Most solvers approximate the potential by a network $f_{\theta}:\mathbb{R}^{D}\rightarrow\mathbb{R}$ and learn it via maximizing \eqref{ot-dual-1-lip} with SGA on batches from $\mathbb{P},\mathbb{Q}$. The main challenge is to enforce the 1-Lipschitz constraint for $f_{\theta}$.

\vspace{-1mm}$\lfloor \textbf{WC} \rfloor$ In \cite{arjovsky2017wasserstein}, the space $\Theta$ of parameters is restricted to a compact, e.g., to a hypercube $[-c,c]^{\dim\theta}$. With mild assumptions on the architecture, $f_{\theta}$ is provably Lipschitz continuous with some \textit{unknown} constant $C$, i.e., $||f_{\theta}||_{L} \leq C$. The main practical issue is tuning the boundary $c$ of the set.

 
 
$\lfloor \textbf{GP} \rfloor$ The authors of \cite{gulrajani2017improved} prove that with mild assumptions on the OT  plan $\pi^{*}$, the equation $||\nabla f^{*}(z)||_{2}=1$ holds almost surely for $z=tx\!+\!(1-t)y$ with $(x,y)$ distributed as the OT plan $\pi^{*}$ and  $t\sim\text{Uniform}[0,1]$. Thus, they \textit{softly} penalize $f$ for being not $1$-Lipschitz and optimize
\begin{equation}
    \mathbb{W}_{1}(\mathbb{P},\mathbb{Q}) \approx \max_{f} \left\lbrace\int  f(x)d\mathbb{P}(x) - \int f(y)d\mathbb{Q}(y) - \lambda  \mathcal{R}(f)\right\rbrace, \quad \lambda > 0.
    \label{wgan-1-reg}
\end{equation}
The \textit{gradient penalty} $\mathcal{R}_{\text{GP}}(f)$  equals
$\int(||\nabla f(r) ||_{2} - 1)^{2}d\mu(r),$ where $\mu$ is the distribution of a random variable $r =
xt$ + $(1-t)y$ with $t\sim \text{Uniform}[0,1]$ and $(x,y)\sim\mathbb{P}\!\times\!\mathbb{Q}$ (as $\pi^{*}$ is unknown). In \cite{milne2021wasserstein}, it is proved that \eqref{wgan-1-reg} with $\mathcal{R}=\mathcal{R}_{\text{GP}}$ is a specific OT formulation called \textit{congested} OT.

$\lfloor \textbf{LP} \rfloor$ In \cite{petzka2017regularization}, the authors show that $\mathcal{R}_{\text{GP}}(f)$ suffers from instabilities and high magnitudes. They introduce the \textit{Lipschitz penalty} $\mathcal{R}_{\text{LP}}(f)=\int (\max\{0, ||\nabla f(r)||_{2} - 1\})^{2}d\mu(r)$ 
 resolving the issues.
 


$\lfloor \textbf{SN} \rfloor$ In \cite{miyato2018spectral}, the authors optimize \eqref{ot-dual-1-lip} and use the power iteration method \cite{golub2000eigenvalue} to normalize the weight matrices of linear layers of net $f_{\theta}$ by their spectral norms. This provably makes $f$ globally Lipschitz continuous, which does not hold for \cite{gulrajani2017improved,petzka2017regularization} which are based on the soft penalization \eqref{wgan-1-reg}.
 
$\lfloor \textbf{SO} \rfloor$ The authors of \cite{anil2019sorting} claim that spectral normalization negatively affects the expressiveness of the network.
They propose to ortho-normalize weight matrices and use GroupSort activations \cite{chernodub2016norm}. Such networks $f_{\theta}$ are $1$-Lipschitz and universally approximate $1$-Lipschitz functions \cite[\wasyparagraph 4]{anil2019sorting}.


Below we overview methods based on \eqref{ot-dual-c-transform} or \eqref{ot-dual-two-potentials} which do not require enforcing $1$-Lipschitz continuity for the potential. Unlike the above-mentioned methods, they mostly require 2 neural networks.
 
$\lfloor \textbf{LS} \rfloor$ 
In \cite{sanjabi2018convergence,seguy2017large,genevay2016stochastic,daniels2021score}, the authors optimize the following \textit{unconstrained} regularized form \eqref{ot-dual-two-potentials}:
\begin{equation}
 \mathbb{W}_{1}(\mathbb{P},\mathbb{Q}) \approx \max_{f,g} \left\lbrace \int f(x)d\mathbb{P}(x) + \int  g(y) d\mathbb{Q}(y)   - \mathcal{R}(f,g)\right\rbrace,  
 \label{lsot-formula}
\end{equation}
where $\mathcal{R}(f,g)$ is the \textit{entropic} or \textit{quadratic} regularizer \cite[Eq.5]{sanjabi2018convergence} which softly penalizes the potentials $f,g$  for disobeying $f \oplus g \leq ||\cdot||_{2}$. In practice, $f,g$ are neural networks $f_{\theta},g_{\omega}:\mathbb{R}^{D}\rightarrow\mathbb{R}$.

$\lfloor \textbf{MM:B} \rfloor$
The authors of \cite{mallasto2019q,mallasto2019well} expand the dual formulation \eqref{ot-dual-c-transform} with the $c$-transform:
\begin{equation}
\label{mallasto_q_p}
 \mathbb{W}_{1}(\mathbb{P},\mathbb{Q}) = \max_f \left\lbrace \int f(x)d\mathbb{P}(x) + \int \min_{x \in \mathbb{R}^{D}} [ \|x-y\|_{2} - f(y) ]d\mathbb{Q}(y)\right\rbrace.
\end{equation}
During optimization, they restrict the inner minimization to the current mini-batch from $\mathbb{P}$. This leads to \textit{overestimation} of the inner problem's solution since the minimum is taken over a restricted subset. Recently, a more tricky version of this approach appeared \cite[\wasyparagraph 3]{kwon2021training}. We call it $\lfloor \textbf{MM:Bv2} \rfloor$.

$\lfloor \textbf{MM} \rfloor$ In \cite{nhan2019threeplayer}, the authors use a saddle point formulation equivalent to \eqref{mallasto_q_p}: \begin{equation}
\mathbb{W}_{1}(\mathbb{P},\mathbb{Q})=\max_{f}\int f(x)d\mathbb{P}(x)+
    \min_{H}\int\big[\|H(y)-y\|_{2}-f(H(y))\big]d\mathbb{Q}(y),
    \label{w1-dual-expanded-function}
\end{equation}
where the minimization is performed over functions $H:\mathbb{R}^{D}\rightarrow\mathbb{R}^{D}$.
The authors use neural networks $f_{\theta}$ and $H_{\omega}$ to parametrize the potential and the minimizer of the inner problem (\textit{mover}). To train $\theta,\omega$, the authors apply stochastic gradient ascent/descent (SGAD) over mini-batches from $\mathbb{P},\mathbb{Q}$.

$\lfloor \textbf{MM:R} \rfloor$ 
One may also recover the OT gradient $\nabla f^{*}$ from the OT map $T^{*}$. Consider the form
\begin{equation}
\mathbb{W}_{1}(\mathbb{P},\mathbb{Q})=\max_{g}\min_{T}\int \big[\|T(x)-x\|_{2}-g(T(x))\big]d\mathbb{P}(x)+
    \int g(y) d\mathbb{Q}(y),
    \label{w1-dual-expanded-function-rev}
\end{equation}
which is a \textit{reversed} \cite{korotin2021neural} version of \eqref{w1-dual-expanded-function}, i.e., the roles of $\mathbb{P},\mathbb{Q}$ are swapped and $T:\mathbb{R}^{D}\rightarrow\mathbb{R}^{D}$. For \textit{some} optimal saddle points $(g^{*},T^{*})$ of \eqref{w1-dual-expanded-function-rev} it holds that mover $T^{*}$ is an OT map \cite[Lemma 4]{korotin2022neural}, \cite[Lemma 2]{gazdieva2022unpaired}, \cite{tanaka2019discriminator}, \cite{fan2021scalable}. With mild assumptions on $\mathbb{P},\mathbb{Q}$, one may recover $T^{*}$ and use the identity $\nabla f^{*}(x) = \frac{x - T^{*}(x)}{\|x - T^{*}(x)\|_{2}}$ \cite[\wasyparagraph 1]{evans1999differential} to obtain the OT gradient from $T^{*}$.
\section{Constructing Benchmark Distributions for Dual Solvers}
\label{sec-methodology}

In this section, we develop a generic methodology to construct pairs $(\mathbb{P},\mathbb{Q})$ with computable ground truth OT plan, OT cost and OT gradient. Our approach is inspired by the insights about OT plans in \cite[\wasyparagraph 3.1]{santambrogio2015optimal}, \cite{granieri2009distance}, \cite[\wasyparagraph 3-7]{evans1999differential}. In \wasyparagraph\ref{sec-ray-monotone}, we give the required preliminaries. In \wasyparagraph\ref{sec-construction}, we provide our method to build benchmark pairs. We construct them in \wasyparagraph\ref{sec-benchmark-pairs}. The proofs are given in Appendix \ref{sec-proofs}.

\subsection{Ray Monotone Transport Plans and 1-Lipschitz Functions}
\label{sec-ray-monotone}

Let $u:\mathbb{R}^{D}\rightarrow\mathbb{R}$ be a $1$-Lipschitz function. Recall that due to the Rademacher's theorem, $u$ is differentiable $\mu_{L}$-almost everywhere. For every $x,y\in\mathbb{R}^{D}$ satisfying $u(x)-u(y)=\|x-y\|_{2}$ it holds that $u$ is affine on the segment $[x, y]$, i.e., $u(z)=tu(x)+(1-t)u(y)$ for $z=tx+(1-t)y$ with $t\in[0,1]$ \cite[Lemma 3.5]{santambrogio2015optimal}. Moreover, $u$ is differentiable at all $z\in(x,y)$ and $\nabla u(z)=\frac{x-y}{\|x-y\|_{2}}$ \cite[Lemma 3.6]{santambrogio2015optimal}. Following \cite[Definition 3.7]{santambrogio2015optimal}, we call a \textit{transport ray} any non-trivial (different from a singleton) segment $[x, y]$ such that $u(x)-u(y) = \|x-y\|_{2}$, which is maximal for the inclusion among segments of this form. The unit vector $\frac{x-y}{\|x-y\|_{2}}$ is called the  direction of a transport ray. Two transport rays can only intersect at their boundary points \cite[Corollary 3.8]{sanjabi2018convergence}. In general, not all points belong to transport rays. For example, for the function $u(x)=\frac{1}{2}x$ there are no transport rays at all. In Figures \ref{fig:2d-rays} and \ref{fig:transport-rays}, we provide examples of transport rays for $1$-Lipschitz functions $u$ in $D=2$.

For a $1$-Lipschitz function $u:\mathbb{R}^{D}\rightarrow\mathbb{R}$, we say that a transport plan $\pi\in\Pi(\mathbb{P},\mathbb{Q})$ is $u$-\textit{ray monotone} (decreasing) if $u(x)-u(y)=\|x-y\|_{2}$ holds $\pi$-almost surely for $x,y\in\mathbb{R}^{D}$. The idea of the definition is that such plans distribute the probability mass of $x\sim\mathbb{P}$ among $y\in\mathbb{R}^{D}$ such that $y=x$ (no mass movement) and/or $y$ which lie on the same transport ray as $x$ but below $x$, i.e., $u(y)<u(x)$. 
If $x$ is not contained in a transport ray, then $\pi(y|x)=\delta_{x}$, i.e., $\pi$ necessarily does not move $x$.

\begin{proposition}[Ray monotone transport plans are optimal]
Let $\pi\in\Pi(\mathbb{P},\mathbb{Q})$ be a $u$-ray monotone transport plan for a $1$-Lipschitz function $u$. Then it is an optimal plan between $\mathbb{P},\mathbb{Q}$. Besides, $u$ is an optimal potential, i.e., it attains the maximum in dual formulation \eqref{ot-dual-1-lip}.
\label{prop-opt-monotone}
\end{proposition}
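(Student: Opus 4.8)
The plan is to run the classical weak-duality sandwich. First I would record weak duality in the form adapted to \eqref{ot-dual-1-lip}: for any $1$-Lipschitz $f$ and any competitor $\pi'\in\Pi(\mathbb{P},\mathbb{Q})$, the pointwise bound $f(x)-f(y)\le\|x-y\|_{2}$ holds for all $x,y$, so integrating against $\pi'$ and using that its marginals are $\mathbb{P}$ and $\mathbb{Q}$ gives
\[
\int f\,d\mathbb{P}-\int f\,d\mathbb{Q}\;=\;\int\big(f(x)-f(y)\big)\,d\pi'(x,y)\;\le\;\int\|x-y\|_{2}\,d\pi'(x,y).
\]
All these integrals are finite: $\mathbb{P},\mathbb{Q}$ have finite first moments, so $\int\|x-y\|_{2}\,d\pi'<\infty$ by the triangle inequality, and a $1$-Lipschitz function grows at most linearly, so $\int|f|\,d\mathbb{P},\int|f|\,d\mathbb{Q}<\infty$.

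Second, I would evaluate the cost of the given plan $\pi$ against the particular potential $u$. Since $\pi$ is $u$-ray monotone, $u(x)-u(y)=\|x-y\|_{2}$ holds $\pi$-almost surely, so (using again that $\pi$ has marginals $\mathbb{P},\mathbb{Q}$ and the integrability just noted)
\[
\int\|x-y\|_{2}\,d\pi(x,y)\;=\;\int\big(u(x)-u(y)\big)\,d\pi(x,y)\;=\;\int u\,d\mathbb{P}-\int u\,d\mathbb{Q}.
\]

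Combining the two displays, for every $\pi'\in\Pi(\mathbb{P},\mathbb{Q})$ we get
\[
\int\|x-y\|_{2}\,d\pi'(x,y)\;\ge\;\int u\,d\mathbb{P}-\int u\,d\mathbb{Q}\;=\;\int\|x-y\|_{2}\,d\pi(x,y),
\]
so $\pi$ attains the minimum in \eqref{primal-kantorovich-w1}, i.e.\ it is an OT plan, and its cost equals $\mathbb{W}_{1}(\mathbb{P},\mathbb{Q})$. Plugging this back into weak duality, $u$ is $1$-Lipschitz and realizes $\int u\,d\mathbb{P}-\int u\,d\mathbb{Q}=\mathbb{W}_{1}(\mathbb{P},\mathbb{Q})$, which is the largest value attainable in \eqref{ot-dual-1-lip}; hence $u$ is an optimal potential.

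I do not expect a genuine obstacle here: the argument is purely the duality inequality plus the almost-sure equality built into the hypothesis. The only points that need care are the bookkeeping on finiteness of the integrals (finite first moments together with linear growth of Lipschitz functions) and the legitimacy of integrating the $\pi$-a.s.\ identity $u(x)-u(y)=\|x-y\|_{2}$ termwise, which follows once $u$ is known to be integrable against each marginal. Notably, no structural information about transport rays themselves is needed for this proposition — that machinery becomes relevant only later, when one has to construct such a $\pi$ in the first place.
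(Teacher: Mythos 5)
Your argument is correct and is essentially the same weak-duality sandwich the paper uses: the paper anchors the chain of inequalities at an already-given OT plan $\pi^{*}$, whereas you compare $\pi$ against an arbitrary competitor $\pi'$, but both reduce to integrating $u(x)-u(y)\le\|x-y\|_{2}$ and exploiting the $\pi$-a.s.\ equality. Your added care about integrability (finite first moments plus linear growth of Lipschitz functions) and the fact that you avoid presupposing existence of an optimizer are small, harmless refinements rather than a different route.
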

For a distribution $\mathbb{P}$, we say that distribution $\mathbb{Q}$ is a $u$-\textit{ray-forward} of $\mathbb{P}$ if there exists a measurable function $T:\mathbb{R}^{D}\rightarrow\mathbb{R}^{D}$ satisfying $T\sharp\mathbb{P}=\mathbb{Q}$ and $u(x)-u\big(T(x)\big)=\|x-T(x)\|_{2}$ holds $\mathbb{P}$-almost surely for all $x\in\mathbb{R}^{D}$. We say that such a $T$ is a $u$-\textit{ray-monotone} transport map from $\mathbb{P}$ to $\mathbb{Q}$.
Note that the deterministic plan $\pi=[\text{id}_{\mathbb{R}^{D}},T]\sharp\mathbb{P}$ is $u$-ray monotone. We have the following corollary:
\begin{corollary}[Ray monotone transport maps are optimal] Let $T$ be a $u$-ray monotone transport map from $\mathbb{P}$ to $\mathbb{Q}$. Then the plan $\pi=[\text{\normalfont id}_{\mathbb{R}^{D}},T]\sharp\mathbb{P}$ is optimal and $T$ is an OT map from $\mathbb{P}$ to $\mathbb{Q}$.
\label{lemma-map-ray-monotone}
\end{corollary}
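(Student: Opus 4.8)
The plan is to derive the corollary directly from Proposition~\ref{prop-opt-monotone} by observing that the deterministic plan induced by $T$ is itself $u$-ray monotone, and then reading off optimality of $T$ from the remark connecting the Monge and Kantorovich formulations. First I would verify that $\pi=[\text{id}_{\mathbb{R}^{D}},T]\sharp\mathbb{P}$ is a legitimate transport plan: since $T$ is measurable, the map $x\mapsto(x,T(x))$ is measurable, so $\pi$ is a well-defined Borel probability measure on $\mathbb{R}^{D}\times\mathbb{R}^{D}$; its first marginal is $\text{id}_{\mathbb{R}^{D}}\sharp\mathbb{P}=\mathbb{P}$ and its second marginal is $T\sharp\mathbb{P}=\mathbb{Q}$ by hypothesis, hence $\pi\in\Pi(\mathbb{P},\mathbb{Q})$.

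Next I would translate the $u$-ray-monotone property of the map $T$ into the $u$-ray-monotone property of the plan $\pi$. By the change-of-variables formula for pushforwards, a statement about $(x,y)$ holds $\pi$-almost surely if and only if it holds for $(x,T(x))$ at $\mathbb{P}$-almost every $x$. By the definition of a $u$-ray-monotone transport map, $u(x)-u\big(T(x)\big)=\|x-T(x)\|_{2}$ for $\mathbb{P}$-a.e.\ $x$, which is exactly the assertion that $u(x)-u(y)=\|x-y\|_{2}$ holds $\pi$-almost surely. Thus $\pi$ is a $u$-ray monotone transport plan for the $1$-Lipschitz function $u$.

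Finally I would invoke Proposition~\ref{prop-opt-monotone}: being $u$-ray monotone, the plan $\pi$ is optimal between $\mathbb{P}$ and $\mathbb{Q}$ (and $u$ attains the maximum in \eqref{ot-dual-1-lip}). Because $\pi$ has the special deterministic form $[\text{id}_{\mathbb{R}^{D}},T]\sharp\mathbb{P}$, the remark following \eqref{primal-kantorovich-w1} then yields that $T$ minimizes the Monge formulation \eqref{primal-monge-w1}, i.e.\ $T$ is an OT map from $\mathbb{P}$ to $\mathbb{Q}$, which completes the proof.

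I do not expect a genuine obstacle: the substantive work is already contained in Proposition~\ref{prop-opt-monotone}, and the only points needing a little care are the measurability of $x\mapsto(x,T(x))$ and the correct bookkeeping of the ``$\pi$-almost surely'' versus ``$\mathbb{P}$-almost surely'' quantifiers under the pushforward, both of which are routine.
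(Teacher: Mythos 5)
Your proof is correct and follows essentially the same route the paper takes: the paper simply remarks inline that $\pi=[\text{id}_{\mathbb{R}^{D}},T]\sharp\mathbb{P}$ is $u$-ray monotone and then invokes Proposition~\ref{prop-opt-monotone}, which is exactly the argument you spell out. Your added bookkeeping on measurability and the $\pi$-a.s.\ versus $\mathbb{P}$-a.s.\ translation is correct and makes explicit what the paper leaves implicit.
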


\vspace{-1.5mm}In \wasyparagraph\ref{sec-construction} below, we derive our recipe to construct \textbf{benchmark pairs} $(\mathbb{P},\mathbb{Q})$ such that $\mathbb{Q}$ is a $u$-ray-forward of $\mathbb{P}$ with user-defined $u$ and analytically known $T$. In such pairs $\mathbb{P}$ is accessible by samples and is also possible to sample from $\mathbb{Q}$ by pushing $x\sim\mathbb{P}$ forward by $T$. Since $T$ is an OT map, the ground truth OT cost is $\mathbb{W}_{1}(\mathbb{P},\mathbb{Q})=\int\|x-T(x)\|_{2}d\mathbb{P}(x)$. It admits \textit{unbiased} Monte Carlo estimates from samples $x\sim\mathbb{P}$. Moreover, with mild assumptions (\wasyparagraph\ref{sec-benchmark-pairs}), $\nabla u$ is the \textit{unique} OT gradient. Our benchmark pairs can be used to test how well OT solvers recover $\mathbb{W}_{1}$ and the OT gradient.

\subsection{Method to Construct Benchmark Pairs}
\label{sec-construction}

Let $u$ be a known $1$-Lipschitz function.
We aim to find its transport rays and construct a $u$-ray-forward map $T$ and distribution $\mathbb{Q}=T\sharp\mathbb{P}$ (for a given $\mathbb{P}$) for testing dual OT solvers. First, we describe the parametric class of $1$-Lipschitz functions $u$ which we employ in our benchmark. Second, we explain how to compute the transport rays of $u$. Finally, we explain how to define $u$-ray monotone maps.

\textbf{\underline{Part 1}. Parameterizing  $1$-Lipschitz functions.} Inspired by the distance representation of optimal potentials \cite{granieri2009distance}, as $u:\mathbb{R}^{D}\rightarrow\mathbb{R}$, we employ the following $1$-Lipschitz \textit{MinFunnel} functions:
\begin{equation}
    u(x)\stackrel{def}{=}\min_{n}\{u_{n}(x)\}=\min_{n} \left\lbrace \|x-a_{n}\|_{2}+b_{n}\right\rbrace,
    \label{u-parametric}
\end{equation}
where $a_{n}\in \mathbb{R}^{D}$ and $b_{n}\in\mathbb{R}$ are the parameters. Each \textit{funnel} ${u_{n}(x)=\|x-a_{n}\|_{2}+b_{n}}$ has $\|\nabla u_{n}(x)\|_{2}=1$ when $x\neq a_{n}$. Thus, $u$ is also $1$-Lipschitz as it is their minimum. Note that for $\mu_{L}$-almost every $x$ it holds that $u$ is differentiable at $x$ and $\|\nabla u(x)\|_{2}=1$.

\begin{proposition}[MinFunnels are universal approximators of 1-Lipschitz functions on compact sets] Let $\mathcal{S}\subset\mathbb{R}^{D}$ be a compact set and $f^{*}:\mathcal{S}\rightarrow\mathbb{R}$ be a $1$-Lipschitz function. Then for every $\epsilon>0$ there exists $N$ and $\{a_{n},b_{n}\}_{n=1}^{N}$ such that function \eqref{u-parametric} satisfies $\sup\limits_{x\in\mathcal{S}}|u(x)-f^{*}(x)|\leq\epsilon$.
\label{prop-universal-approximation}
\vspace{-2mm}
\end{proposition}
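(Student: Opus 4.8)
The plan is to show that MinFunnel functions can approximate any $1$-Lipschitz $f^{*}$ on a compact set $\mathcal{S}$ by mimicking the classical representation of a $1$-Lipschitz function as an infimum of cone functions. The key observation is that $f^{*}$ is $1$-Lipschitz on $\mathcal{S}$ if and only if $f^{*}(y) \le f^{*}(x) + \|x - y\|_{2}$ for all $x, y \in \mathcal{S}$; equivalently, defining $b_{x} \stackrel{def}{=} -f^{*}(x)$ and $a_{x} \stackrel{def}{=} x$, we have $f^{*}(y) = \min_{x \in \mathcal{S}}\{\|y - x\|_{2} + b_{x}\}$, since the value $x = y$ achieves equality and every other term is $\ge f^{*}(y)$ by the Lipschitz bound. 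Thus $f^{*}$ is \emph{exactly} an infimum of uncountably many funnels. The proposition reduces to showing that finitely many of these funnels suffice to get within $\epsilon$ uniformly on $\mathcal{S}$.

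First I would extend $f^{*}$ to a $1$-Lipschitz function on all of $\mathbb{R}^{D}$ (by the McShane/Kirszbraun extension, e.g. $\tilde f(x) = \inf_{s\in\mathcal S}\{f^{*}(s) + \|x-s\|_{2}\}$), so that all funnels $u_{s}(x) = \|x - a_{s}\|_{2} + b_{s}$ with $a_{s} = s$, $b_{s} = -f^{*}(s)$ are defined everywhere and the identity $f^{*} = \min_{s\in\mathcal S} u_{s}$ continues to hold pointwise on $\mathcal{S}$. Next, since $\mathcal{S}$ is compact, it is totally bounded: choose a finite $\tfrac{\epsilon}{2}$-net $\{s_{1},\dots,s_{N}\} \subset \mathcal{S}$, and set $u(x) \stackrel{def}{=} \min_{n=1,\dots,N}\{\|x - s_{n}\|_{2} + b_{s_{n}}\}$ with $b_{s_{n}} = -f^{*}(s_{n})$. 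This is a MinFunnel of the required form \eqref{u-parametric} with $a_{n} = s_{n}$ and $b_{n} = -f^{*}(s_{n})$.

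It remains to bound $\sup_{x\in\mathcal S}|u(x) - f^{*}(x)|$. One inequality is immediate: $u(x) \ge \min_{s\in\mathcal S} u_{s}(x) = f^{*}(x)$ for $x\in\mathcal S$, since $u$ is a minimum over a subset of the full index set. For the other direction, fix $x\in\mathcal S$ and pick $s_{n}$ in the net with $\|x - s_{n}\|_{2} \le \tfrac{\epsilon}{2}$; then
\begin{equation}
u(x) \le \|x - s_{n}\|_{2} + b_{s_{n}} = \|x - s_{n}\|_{2} - f^{*}(s_{n}) \le \tfrac{\epsilon}{2} - f^{*}(s_{n}) \le \tfrac{\epsilon}{2} + \bigl(\tfrac{\epsilon}{2} - f^{*}(x)\bigr) = f^{*}(x) + \epsilon,
\end{equation}
where the last step uses $|f^{*}(s_{n}) - f^{*}(x)| \le \|s_{n} - x\|_{2} \le \tfrac{\epsilon}{2}$. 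Combining, $0 \le u(x) - f^{*}(x) \le \epsilon$ on $\mathcal{S}$, which gives the claim. I do not expect a genuine obstacle here; the only points that need care are the exact (not merely approximate) representation $f^{*} = \min_{s} u_{s}$ and making sure the compactness is used to pass from infinitely many funnels to a finite net — both are routine. If one wants to avoid the extension step, one can work directly on $\mathcal{S}$ throughout, but defining $u$ on all of $\mathbb{R}^{D}$ as in \eqref{u-parametric} is cleaner.
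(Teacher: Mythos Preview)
Your approach is the same as the paper's: take a finite $\tfrac{\epsilon}{2}$-net $\{s_n\}$ of $\mathcal S$, center a funnel at each net point with height determined by $f^{*}$, and bound $u-f^{*}$ on both sides via the Lipschitz inequality. The McShane extension you mention is harmless but unnecessary, since the estimate is only needed on $\mathcal S$.

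However, the proof as written has a sign error that breaks both bounds. You set $b_x = -f^{*}(x)$, but then $\min_{x\in\mathcal S}\{\|y-x\|_{2}+b_x\} = \min_{x}\{\|y-x\|_{2}-f^{*}(x)\} = -f^{*}(y)$, not $f^{*}(y)$: Lipschitzness gives $\|y-x\|_{2}-f^{*}(x)\ge -f^{*}(y)$ with equality at $x=y$. The error resurfaces in your displayed chain, where the final step asserts $\tfrac{\epsilon}{2}+\bigl(\tfrac{\epsilon}{2}-f^{*}(x)\bigr)=f^{*}(x)+\epsilon$; the left side is $\epsilon-f^{*}(x)$. The fix is simply to take $b_n = f^{*}(s_n)$ (this is exactly the paper's choice). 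Then $f^{*}(x)\le f^{*}(s_n)+\|x-s_n\|_{2}$ for every $n$ gives $f^{*}(x)\le u(x)$, and $u(x)\le \|x-s_{n(x)}\|_{2}+f^{*}(s_{n(x)})\le \tfrac{\epsilon}{2}+f^{*}(x)+\tfrac{\epsilon}{2}$ gives the upper bound, yielding $0\le u(x)-f^{*}(x)\le\epsilon$ on $\mathcal S$.
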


\begin{figure}[!t]
\begin{subfigure}[b]{0.262\linewidth}
\centering
\includegraphics[width=\linewidth]{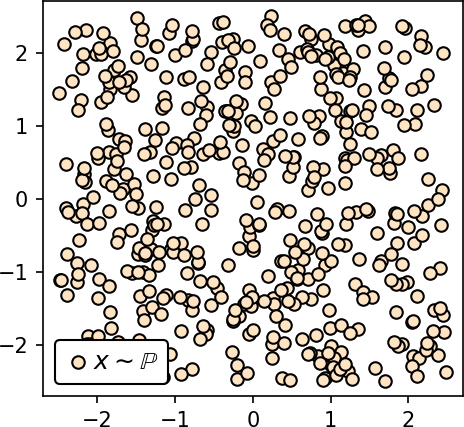}
\caption{\centering Samples $x\sim\mathbb{P}$.\vspace{3.43mm}}
\end{subfigure}
\begin{subfigure}[b]{0.24\linewidth}
\centering
\includegraphics[width=\linewidth]{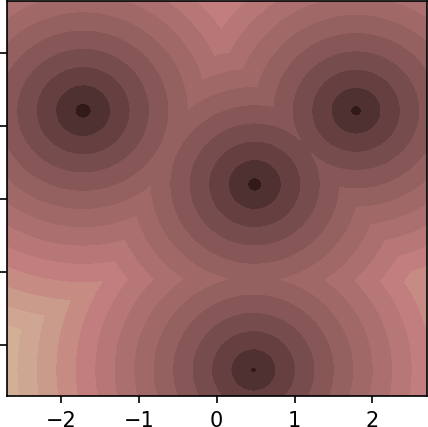}
\caption{\centering $1$-Lipshitz potential\protect\linebreak  $u:\mathbb{R}^{D}\rightarrow\mathbb{R}$.}
\label{fig:2d-gt-potential}
\end{subfigure}
\begin{subfigure}[b]{0.24\linewidth}
\centering
\includegraphics[width=\linewidth]{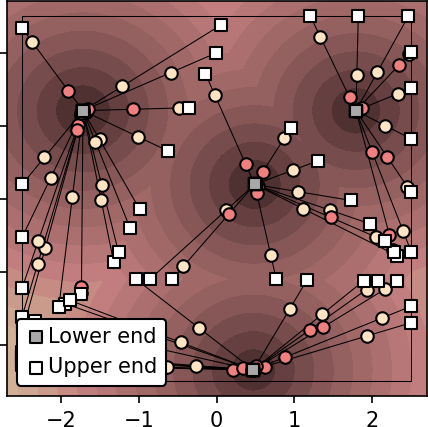}
\caption{\centering Transport rays for $x$ \protect\linebreak and map $y=T(x)$.}
\label{fig:2d-rays}
\end{subfigure}
\begin{subfigure}[b]{0.24\linewidth}
\centering
\includegraphics[width=\linewidth]{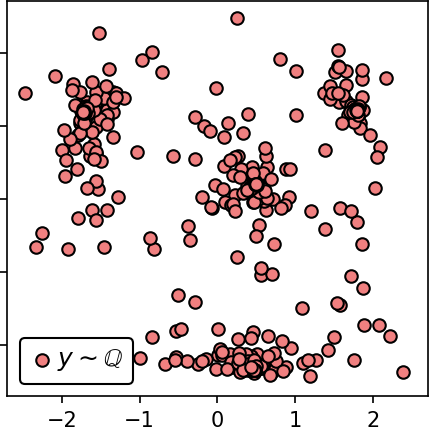}
\caption{\centering Pushforward samples\protect\linebreak $y=T(x)\sim\mathbb{Q}$.}
\end{subfigure}\vspace{-2mm} 
\caption{\centering Our methodology to construct benchmark pairs $(\mathbb{P},\mathbb{Q})$. An example with $(D,N)=(2,4)$. We pick a distribution $\mathbb{P}$ and a MinFunnel function $u:\mathbb{R}^{D}\rightarrow\mathbb{R}$ \eqref{u-parametric}. To sample $y\sim\mathbb{Q}$, we get $x\sim\mathbb{P}$, compute its transport ray and move $x$ along the ray with the $u$-ray monotone map $T$ \eqref{T-def-benchmark}.}
\label{fig:teaser}
\end{figure}
\begin{figure}[!t]
\vspace{-2mm}
\begin{subfigure}[b]{0.49\linewidth}
\centering
\includegraphics[width=0.99\linewidth]{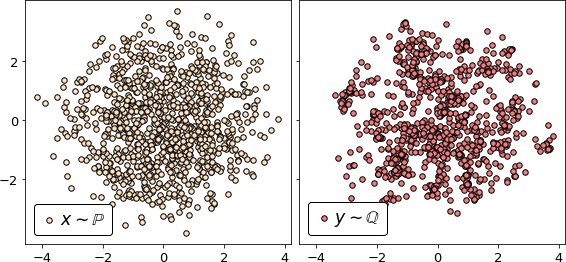}
\caption{\centering The pair with $N=64, D=4$.}
\end{subfigure}
\begin{subfigure}[b]{0.49\linewidth}
\centering
\includegraphics[width=0.99\linewidth]{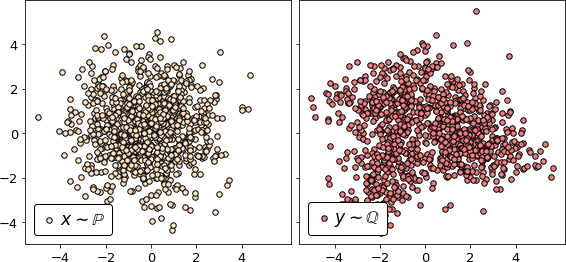}
\caption{\centering The pair with $N=4, D=32$.}
\end{subfigure}
\vspace{-1mm}
\caption{\centering Visualization of constructed high-dimensional benchmark pairs.\protect\linebreak In each pair, we show random samples projected onto $2$ principal components of $\mathbb{Q}$.}
\label{fig:benchmark-pair-hd-pca}
\end{figure}
The proposition is included only for completeness of the exposition and is not principal for our construction. We do not aim to approximate a specific $f^{*}$ by $u$. In practice, we found that simply picking a \textit{random} MinFunnel $u$ and using it to construct a benchmark pair is reasonable (\wasyparagraph\ref{sec-benchmark-pairs}).

\textbf{\underline{Part 2}. Computing the transport rays.} MinFunnels are practically very convenient as the transport ray for a given $x\in\mathbb{R}^{D}$ can be analytically computed in $O(ND)$ time, see our proposition below.

\begin{proposition}[Transport rays of a MinFunnel] Consider MinFunnel \eqref{u-parametric} function $u:\mathbb{R}^{D}\rightarrow\mathbb{R}$ with $n$ distinct centers $a_{n}$ such that for all $n_{1}\neq n_{2}$ it holds that $\|a_{n_{1}}-a_{n_{2}}\|_{2}\neq |b_{n_{1}}-b_{n_{2}}|$. Let $x\in\mathbb{R}^{D}$ be a point such that $u$ is differentiable at $x$. Then $\text{ray}(x)=[a_{m}, x+r\cdot v]$, where
\begin{equation}
    m\stackrel{def}{=}\argmin_{n}\left\lbrace u_{n}(x)\right\rbrace;\qquad v\stackrel{def}{=}\frac{x-a_{m}}{\|x-a_{m}\|_{2}};\qquad r\stackrel{def}{=}\min_{n}r_{n}\in\mathbb{R}\cup\{+\infty\},
    \nonumber
\end{equation}
and $r_{n}\stackrel{def}{=}\frac{1}{2}\frac{ \big[\|a_{n}-x\|^{2}_{2}-|u(x)-b_{n}|^{2}\big]}{\big[\big(u(x)-b_{n}\big)-\langle v, x-a_{n}\rangle\big]}$ for $n\neq m$ and $r_{m}\stackrel{def}{=}+\infty$. Here in the definition of $r$ the $\min$ is taken only over $n$ for which $r_{n}>0$ and $r_n\geq b_{n}-u_{m}(x)$. Also, for $c\in\mathbb{R}$ we define $c/0\!=
\!+ \infty$.
\label{prop-intersection}
\end{proposition}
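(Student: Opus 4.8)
The plan is to work directly with the explicit funnels $u_n(x)=\|x-a_n\|_2+b_n$ and to identify $\text{ray}(x)$ as the maximal segment on which $u$ coincides with a single funnel $u_m$ and is affine of slope one in the fixed direction $v=\frac{x-a_m}{\|x-a_m\|_2}$.

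\textbf{Step 1 (local structure at $x$).} I would first use that differentiability of $u$ at $x$ together with the non-degeneracy assumption $\|a_{n_1}-a_{n_2}\|_2\neq|b_{n_1}-b_{n_2}|$ forces $m=\argmin_n u_n(x)$ to be the \emph{unique} minimizer, forces $x\neq a_m$, and gives $\nabla u(x)=\nabla u_m(x)=v$. Indeed, if two funnels attained the minimum at $x$ and $u$ were still differentiable there, then (after ruling out $x=a_{n_i}$, which itself would force $\|a_{n_i}-a_n\|_2=|b_{n_i}-b_n|$ for a competing $n$) their gradients $\frac{x-a_{n_i}}{\|x-a_{n_i}\|_2}$ would have to coincide, and colinearity plus the equality of funnel values turns this into $\|a_{n_1}-a_{n_2}\|_2=|b_{n_1}-b_{n_2}|$, a contradiction. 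The same argument shows $u_n(x)>u_m(x)$ \emph{strictly} for every $n\neq m$.

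\textbf{Step 2 (the endpoint $a_m$).} For $z=a_m+tv$ with $t\in[0,\|x-a_m\|_2]$, the triangle inequality and $u_m(x)\le u_n(x)$ give $u_n(z)\ge u_m(z)=t+b_m$ for all $n$, hence $u\equiv u_m$ on $[a_m,x]$ and $u(a_m)=b_m$; thus $u(x)-u(a_m)=\|x-a_m\|_2$, so $[a_m,x]$ lies in a transport ray of direction $v$, consistent with $\nabla u=v$ from the preliminaries of \wasyparagraph\ref{sec-ray-monotone}. To see $a_m$ is actually an endpoint, note that at $a_m$ every other funnel is strictly larger, so by continuity $u\equiv u_m$ on a full punctured neighbourhood of $a_m$; hence $u$ has a genuine cone vertex there and cannot be affine on any segment containing $a_m$ in its interior, so the ray cannot be extended past $a_m$.

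\textbf{Step 3 (the endpoint $x+rv$ and the formula).} Moving from $x$ in direction $v$, $u_m(x+\rho v)=u(x)+\rho$ is exactly affine. Set $h_n(\rho)=u_n(x+\rho v)-u(x)-\rho$; then $h_n'(\rho)=\frac{\langle v,\,x+\rho v-a_n\rangle}{\|x+\rho v-a_n\|_2}-1\le0$ by Cauchy--Schwarz, so $h_n$ is non-increasing, and $h_n(0)=u_n(x)-u_m(x)>0$ by Step 1. Thus $h_n$ either stays positive ($u_n$ never overtakes $u_m$ along the ray) or has a unique zero $\rho=r_n>0$, beyond which $u_n<u_m$. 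Squaring $\|x+\rho v-a_n\|_2=u(x)+\rho-b_n$ cancels the $\rho^2$ term and yields precisely the displayed linear expression for $r_n$; this linear root is a genuine zero of $h_n$ exactly when $u(x)+r_n-b_n\ge0$, i.e. $r_n\ge b_n-u_m(x)$ (the condition that no spurious root was introduced), and such a root is automatically positive — which is exactly why the $\min$ is restricted to $n$ with $r_n>0$ and $r_n\ge b_n-u_m(x)$, with the convention $c/0=+\infty$ covering the case of no overtaking and $r_m=+\infty$ being vacuously consistent. Hence $u$ is affine of slope one along $v$ up to $x+rv$ with $r=\min_n r_n$, and together with Step 2 this gives $\text{ray}(x)=[a_m,x+rv]$.

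\textbf{Step 4 (maximality at $x+rv$) — the main obstacle.} The delicate point is to confirm the ray genuinely stops at $x+rv$: just past it $u$ equals the overtaking funnel $u_n$ (for the minimizing index), and one must exclude that $u$ remains affine of slope one in direction $v$. Here $\nabla u_n(x+rv)=v$ only if $a_n$ lies on the ray line on the far side, and a short case analysis — using non-degeneracy to forbid two distinct funnels agreeing on a segment and to forbid the $0/0$ degeneracy in the formula for $r_n$ — shows the overtaking funnel is either off the ray line or on the approaching side, so $\nabla u$ jumps at $x+rv$ and the ray cannot be extended. Combining Steps 2--4 with the affineness/gradient statements recalled in \wasyparagraph\ref{sec-ray-monotone} and the optimality in Proposition~\ref{prop-opt-monotone} completes the identification. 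I expect this last edge-case bookkeeping (collinear centres, tangencies, the $c/0$ convention) to be the only real friction; Steps 1--3 are direct computations.
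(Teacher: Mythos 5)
Your structure mirrors the paper's four-part proof: Steps 1 and 3 follow Parts 1, 2 and 4 almost line for line, while your Step 2 lower-endpoint argument (a cone vertex at $a_m$, using that $u_n(a_m)>u_m(a_m)$ strictly for every $n\neq m$, which itself needs the hypothesis $\|a_m-a_n\|_2\neq|b_m-b_n|$) is a clean alternative to the paper's Part 3, which instead argues by contradiction through an auxiliary minimizer $m'$ at a hypothetical point $x_0$ below $a_m$. Your version is shorter; spell out the strict inequality.

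The genuine gap is Step 4, which you only sketch, and it is not mere bookkeeping: it is where the paper's Part 4 spends most of its effort, and it also quietly underwrites the \emph{unique} zero of $h_n$ you asserted in Step 3. What must actually be excluded is the tangency case in which $a_{n^*}$ lies on the ray line ahead of $x$ with $a_{n^*}=x+\big(b_{n^*}-u(x)\big)v$: there $h_{n^*}$ vanishes identically from that point on, so $u_m\equiv u_{n^*}$ along the ray (the zero of $h_{n^*}$ is a whole half-line, the ray would extend past $x+rv$), and simultaneously the linear equation for $r_{n^*}$ degenerates to $0=0$, so the displayed formula becomes meaningless. The computation ruling this out is short but essential: writing $a_m=x-\|x-a_m\|_2 v$ and $u(x)=\|x-a_m\|_2+b_m$ gives $a_{n^*}-a_m=(b_{n^*}-b_m)v$, hence $\|a_{n^*}-a_m\|_2=|b_{n^*}-b_m|$, contradicting the hypothesis. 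The paper carries exactly this out as the ``right side zero'' sub-case of Part 4; until you do too, neither the displayed formula for $r_n$ nor the maximality of $[a_m,x+rv]$ is actually established.
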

The condition on $a_{n},b_{n}$ is imposed to avoid inconvenient cases when the center of a funnel lies on some another funnel. In practice, we compute the transport rays for a batch of points $x$ with tensor operations. We provide an example of transport rays of a random MinFunnel $u$ in Figure \ref{fig:transport-rays}.

\textbf{\underline{Part 3}. Defining the ray monotone map.} 
Let $\mathbb{P}$ be a distribution on $\mathbb{R}^{D}$ and $u$ be a MinFunnel. We aim to construct a $u$-ray monotone map $T:\mathbb{R}^{D}\rightarrow\mathbb{R}^{D}$ and define a distribution $\mathbb{Q}=T\sharp\mathbb{P}$.

If $\nabla u(x)$ does not exist, we define $T(x)=x$ (the set of such points is $\mu_{L}$-negligible). Otherwise, since $u$ is a MinFunnel, we have $\|\nabla u(x)\|_{2}=1$. Thus, the $u$-ray forward map $T$ may take any value $T(x)\in [x_{0},x]$, where $x_{0}$ is the left (lower) endpoint of $\text{ray}(x)=[x_{0},x_{1}]$.
According to the analysis in \wasyparagraph\ref{sec-ray-monotone}, any (measurable) map $T$ defined by this principle is $u$-ray monotone. For such a map $T$ one may put $\mathbb{Q}=T\sharp\mathbb{P}$.
As a result, for the pair $(\mathbb{P},\mathbb{Q})$, function $T$ is an OT map (Corollary \ref{lemma-map-ray-monotone}), possibly non-unique. Thus, the ground truth $\mathbb{W}_{1}(\mathbb{P},\mathbb{Q})$ can be estimated from samples $x\sim\mathbb{P}$.

As we are also interested in testing how well OT dual solvers recover the OT \textbf{gradient} $\nabla f^{*}$, we need this gradient to exist and be $\mathbb{P}$-\textit{unique}. From Proposition \ref{prop-opt-monotone} we know that $f^{*}=u$ is an optimal potential. However, it is not necessarily unique (up to a constant). For example, in the trivial case $T(x)\equiv x$ and $\mathbb{P}=\mathbb{Q}$, \textit{any} $1$-Lipschitz $f^{*}$ is optimal and $\nabla f^{*}$ is not unique. We show that with mild assumptions on $\mathbb{P},T,u$, the gradient can be designed to be unique and match $\nabla u(x)$.

\begin{figure}[!t]
\begin{subfigure}[b]{0.49\linewidth}
\centering
\includegraphics[width=0.97\linewidth]{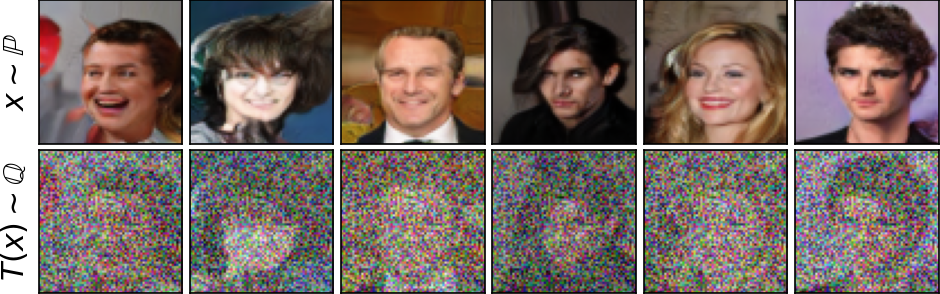}
\caption{\centering Samples from the pair with $(N,p)=(1,10)$.}
\label{fig:celeba-pair-1}
\end{subfigure}
\begin{subfigure}[b]{0.49\linewidth}
\centering
\includegraphics[width=0.97\linewidth]{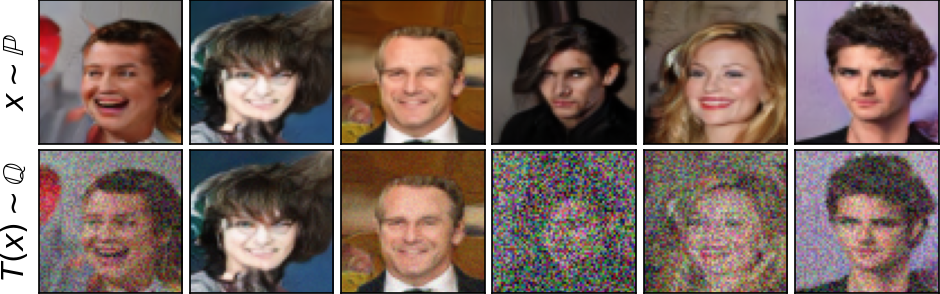}
\caption{\centering Samples from the pair with $(N,p)=(16,100)$.}
\label{fig:celeba-pair-16}
\end{subfigure}\vspace{2mm}

\begin{subfigure}[b]{0.336\linewidth}
\centering
\includegraphics[width=0.97\linewidth]{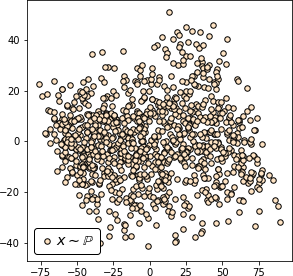}
\caption{\centering Samples  $x\sim\mathbb{P}$\protect\linebreak (synthetic celebrity faces).}
\label{fig:celeba-pca}
\end{subfigure}
\begin{subfigure}[b]{0.31\linewidth}
\centering
\includegraphics[width=0.97\linewidth]{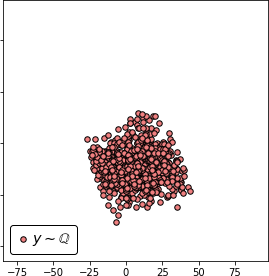}
\caption{\centering Samples  $y\sim\mathbb{Q}=T\sharp\mathbb{P}$ \protect\linebreak($N=1$ funnel).}
\label{fig:celeba-pca-pair-1}
\end{subfigure}
\begin{subfigure}[b]{0.31\linewidth}
\centering
\includegraphics[width=0.97\linewidth]{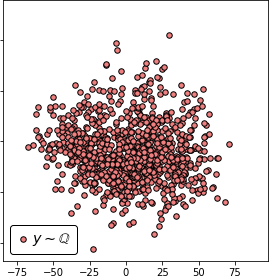}
\caption{\centering Samples  $y\sim\mathbb{Q}=T\sharp\mathbb{P}$\protect\linebreak ($N=16$ funnels).}
\label{fig:celeba-pca-pair-16}
\end{subfigure}

\vspace{-1mm}\caption{\centering Visualization of random samples from our constructed Celeba images benchmark pairs.\protect\linebreak In the last three plots, we show distributions projected to $2$ principal components of $\mathbb{P}$.}
\label{fig:celeba-benchmark-pairs}
\end{figure}
\begin{proposition}[Uniqueness of the OT gradient]
Let $\mathbb{P}$ be absolutely continuous and let $u:\mathbb{R}^{D}\rightarrow\mathbb{R}$ be 1-Lipschitz. Let $T$ be a $u$-ray-monotone map for which $T(x)\neq x$ holds $\mathbb{P}$-almost surely, i.e., it moves every piece of mass of $\mathbb{P}$. Define $\mathbb{Q}=T\sharp\mathbb{P}$. Then for every optimal $f^{*}$ which maximizes \eqref{ot-dual-1-lip} the equality $\nabla f^{*}(x)=\nabla u(x)$ holds $\mathbb{P}$-almost surely, i.e., the OT gradient $\nabla u$ is $\mathbb{P}$-unique.
\label{prop-ot-grad-unique}
\end{proposition}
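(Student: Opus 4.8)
The plan is to combine complementary slackness for the optimal plan $\pi=[\mathrm{id}_{\mathbb{R}^{D}},T]\sharp\mathbb{P}$ with a pointwise differentiation argument at the upper endpoint of a transport ray. First, by Corollary~\ref{lemma-map-ray-monotone} the plan $\pi$ is optimal between $\mathbb{P}$ and $\mathbb{Q}=T\sharp\mathbb{P}$. Let $f^{*}$ be any maximizer of \eqref{ot-dual-1-lip} (one exists by Proposition~\ref{prop-opt-monotone}, namely $u$). Since $f^{*}$ is $1$-Lipschitz and $\mathbb{P},\mathbb{Q}$ have finite first moments, strong duality reads $\mathbb{W}_{1}(\mathbb{P},\mathbb{Q})=\int f^{*}d\mathbb{P}-\int f^{*}d\mathbb{Q}=\int\big(f^{*}(x)-f^{*}(y)\big)d\pi(x,y)=\int\|x-y\|_{2}\,d\pi(x,y)$. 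Using $f^{*}(x)-f^{*}(y)\le\|x-y\|_{2}$ everywhere, the integrand $\|x-y\|_{2}-\big(f^{*}(x)-f^{*}(y)\big)$ is nonnegative with zero $\pi$-integral, hence vanishes $\pi$-a.s. Because $\pi=[\mathrm{id}_{\mathbb{R}^{D}},T]\sharp\mathbb{P}$, this says $f^{*}(x)-f^{*}\big(T(x)\big)=\|x-T(x)\|_{2}$ for $\mathbb{P}$-a.e.\ $x$. The same identity with $u$ in place of $f^{*}$ is exactly the $u$-ray-monotonicity of $T$.

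Second, I isolate the key pointwise lemma: if $w:\mathbb{R}^{D}\to\mathbb{R}$ is $1$-Lipschitz, $w(x)-w\big(T(x)\big)=\|x-T(x)\|_{2}$ with $T(x)\neq x$, and $w$ is differentiable at $x$, then $\nabla w(x)=v:=\tfrac{x-T(x)}{\|x-T(x)\|_{2}}$. To see this, set $L=\|x-T(x)\|_{2}>0$ and observe that for $s\in[0,L]$ the two $1$-Lipschitz estimates $w(x)-w(x-sv)\le s$ and $w(x-sv)-w\big(T(x)\big)\le L-s$ (note $x-sv-T(x)=(L-s)v$) add to $w(x)-w\big(T(x)\big)\le L$; since this is an equality, both estimates are equalities, so $s\mapsto w(x-sv)$ equals $w(x)-s$ on $[0,L]$. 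Hence the one-sided directional derivative of $w$ at $x$ in direction $-v$ equals $-1$, and differentiability at $x$ forces $\langle\nabla w(x),v\rangle=1$. Combined with $\|\nabla w(x)\|_{2}\le 1=\|v\|_{2}$, the equality case of Cauchy--Schwarz gives $\nabla w(x)=v$.

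Third, I assemble a set of full $\mathbb{P}$-measure on which all hypotheses of the lemma hold simultaneously for both $w=u$ and $w=f^{*}$: the complementary-slackness identities hold $\mathbb{P}$-a.s.\ by the first step; $T(x)\neq x$ holds $\mathbb{P}$-a.s.\ by assumption; and $u,f^{*}$ are $1$-Lipschitz, hence differentiable $\mu_{L}$-a.e.\ by Rademacher's theorem, hence differentiable $\mathbb{P}$-a.e.\ because $\mathbb{P}$ is absolutely continuous. On this set the lemma yields $\nabla f^{*}(x)=v=\nabla u(x)$, which is precisely the claim.

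The main obstacle is the endpoint issue in the pointwise lemma: $x$ is the \emph{upper} endpoint of the segment $[T(x),x]$, so one cannot simply invoke the standard fact that the gradient is constant along the \emph{interior} of a transport ray (\cite[Lemma 3.6]{santambrogio2015optimal}) to read off $\nabla w(x)$. The one-sided-derivative plus Cauchy--Schwarz argument above circumvents this, using only differentiability at $x$ together with the affine behaviour of $w$ on the side of $x$ towards $T(x)$. A secondary point to handle carefully is that $\mu_{L}$-a.e.\ differentiability is upgraded to $\mathbb{P}$-a.e.\ solely through the absolute continuity of $\mathbb{P}$ (this is where that hypothesis enters), and that the hypothesis $T(x)\neq x$ $\mathbb{P}$-a.s.\ is exactly what excludes the degenerate situation $\mathbb{P}=\mathbb{Q}$ in which $v$ is undefined and the OT gradient is genuinely non-unique.
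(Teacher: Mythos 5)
Your proof is correct and follows essentially the same route as the paper's: both derive the complementary-slackness identity $f^{*}(x)-f^{*}(T(x))=\|x-T(x)\|_{2}$ $\mathbb{P}$-a.s.\ from strong duality, then upgrade $\mu_{L}$-a.e.\ differentiability (Rademacher) to $\mathbb{P}$-a.e.\ via absolute continuity, and read off $\nabla f^{*}(x)=\nabla u(x)=\frac{x-T(x)}{\|x-T(x)\|_{2}}$. Your explicit one-sided-derivative plus Cauchy--Schwarz lemma is a welcome refinement: the paper simply cites \wasyparagraph\ref{sec-ray-monotone} (which, via \cite[Lemma 3.6]{santambrogio2015optimal}, only gives the gradient formula at \emph{interior} points of a transport ray) and then asserts the conclusion at $x$, whereas your lemma also covers the case in which $x$ happens to be the upper endpoint of the ray containing $[T(x),x]$, closing a small gap that the paper's argument leaves implicit.
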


\subsection{Benchmark Pairs}
\label{sec-benchmark-pairs}

We use our methodology (\wasyparagraph\ref{sec-construction}) to construct \textbf{high-dimensional} ($D=2,2^{2},\dots, 2^{7}$) benchmark pairs and benchmark pairs on the space of $64\times 64$ RGB \textbf{images} of celebrity faces ($D=12288$). 

For convenience, we use absolutely continuous $\mathbb{P}$ supported on a hypercube ${\mathcal{S}=[-B,B]^{D}\subset\mathbb{R}^{D}}$. In particular, for points $x\in\mathcal{S}$, we consider  transport rays \textit{truncated} to $\mathcal{S}$, i.e., $\text{ray}(x)\cap \mathcal{S}$. We construct the $u$-ray monotone map ${T:\mathbb{R}^{D}\rightarrow\mathbb{R}^{D}}$ by the following principle. Let $[x_{0},x_{1}]\subset\mathcal{S}$ be the truncated transport ray of $u$ for $x\in\mathcal{S}$. We pick a parameter $p>1$ and define the map $T$ as follows:
\begin{equation}
    T(x)\stackrel{def}{=}\bigg[\frac{\|x-x_{0}\|_{2}}{\|x_{0}-x_{1}\|_{2}}\bigg]^{p}x_{0}+\bigg(1-\bigg[\frac{\|x-x_{0}\|_{2}}{\|x_{0}-x_{1}\|_{2}}\bigg]^{p}\bigg)x_{1}.
    \label{T-def-benchmark}
\end{equation}
This is a power function, i.e., if a ray is parametrized as $[0,1]$, it moves the mass along the ray by $t\!\mapsto\! t^{p}$. Since $\mathbb{P}$-almost all the points in $\mathcal{S}$ belong to (truncated) rays, we have $T(x)\!\neq\! x$ on $\mathcal{S}$.

\textbf{High-dimensional benchmark pairs.} In dimensions $D=2,2^{2},\dots,2^{7}$, we put $\mathbb{P}$ to be the standard uniform distribution on $\mathcal{S}=[-2.5,2.5]^{D}$.
In each dimension $D$, we consider $N=4,16,64,256$ funnels and pick random parameters $a_{n}\sim\text{Uniform}([-2.5,2.5]^{D})$ and $b_{n}\sim\mathcal{N}(0, 0.1)$. For this initialization of $a_{n}, b_{n}$, the assumptions of Proposition \ref{prop-intersection} hold with probability 1. The random seed is hardcoded. We use $p=8$. For the case $D=2$, $N=4$, we visualize the input distribution $\mathbb{P}$, the function $u$, the constructed map $T$ and the output distribution $\mathbb{Q}=T\sharp\mathbb{P}$ in Figure \ref{fig:teaser}. We show examples of constructed $(\mathbb{P},\mathbb{Q})$ for higher dimensions in Figure \ref{fig:benchmark-pair-hd-pca}.

\textbf{Images benchmark pairs} for CIFAR-10 and Celeba. As $\mathbb{P}$ we consider the synthetic distributions of generated $32\times 32$ RGB images ($D=3072$) and $64\times 64$ RGB images ($D=12288$). To generate these images, we use the WGAN-QC \cite{liu2019wasserstein} generator model trained on CIFAR-10 \cite{krizhevsky2009learning} and Celeba \cite{liu2015faceattributes} datasets, respectively. For CIFAR-10, we train WGAN-QC by using its publicly available code.\footnote{\url{https://github.com/harryliew/WGAN-QC}} For Celeba, we pick a readily available pre-trained generator from the related \textbf{Wasserstein-2 benchmark}\footnote{\url{https://github.com/iamalexkorotin/Wasserstein2Benchmark}} \cite[\wasyparagraph 4.1]{korotin2021neural}.  To make $\mathbb{P}$ absolutely continuous, we add the Gaussian noise with axis-wise $\sigma=0.01$. Then we truncate the distribution to $\mathcal{S}=[-1.1,1.1]^{D}$, i.e., we reject samples which are out of $\mathcal{S}$. We construct two benchmark pairs per each dataset (Figures \ref{fig:celeba-benchmark-pairs}, \ref{fig:cifar-benchmark-pairs}) with $(N,p)=(1,10),(16,100)$ and $a_{n}\sim\text{Uniform}([-1.,1.]^{D})$, $b_{n}\sim\mathcal{N}(0,0.1)$.

\begin{figure}[!t]
\begin{subfigure}[b]{0.49\linewidth}
\centering
\includegraphics[width=0.97\linewidth]{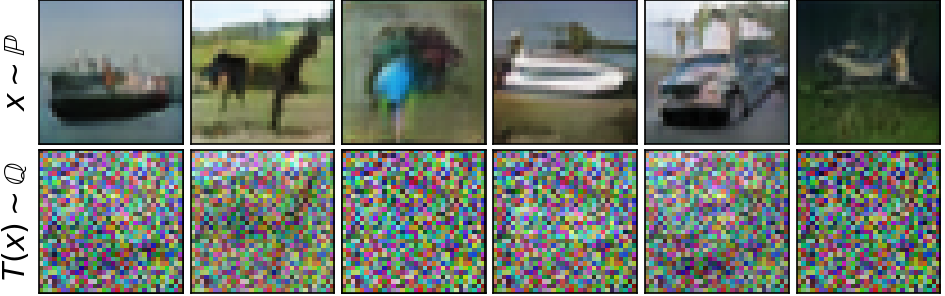}\caption{\centering Samples from the pair with $(N,p)=(1,10)$.}
\label{fig:cifar-pair-1}
\end{subfigure}
\begin{subfigure}[b]{0.49\linewidth}
\centering
\includegraphics[width=0.97\linewidth]{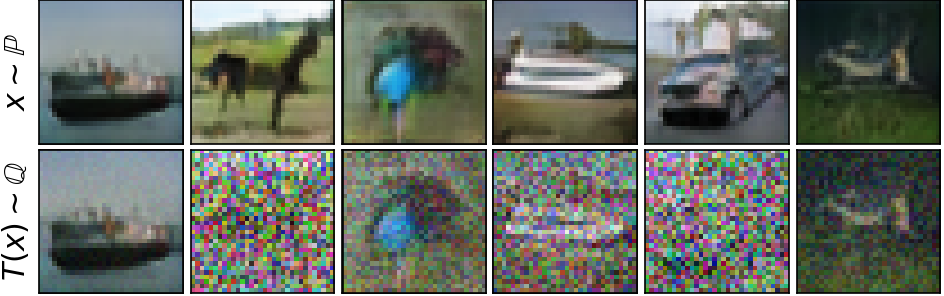}
\caption{\centering Samples from the pair with $(N,p)=(16,100)$.}
\label{fig:cifar-pair-16}
\end{subfigure}\vspace{2mm}

\begin{subfigure}[b]{0.336\linewidth}
\centering
\includegraphics[width=0.97\linewidth]{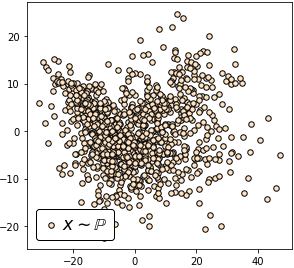}
\caption{\centering Samples  $x\sim\mathbb{P}$\protect\linebreak (synthetic CIFAR-10 images).}
\label{fig:cifar-isomap}
\end{subfigure}
\begin{subfigure}[b]{0.31\linewidth}
\centering
\includegraphics[width=0.97\linewidth]{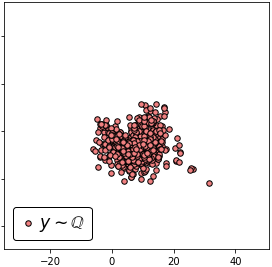}
\caption{\centering Samples  $y\sim\mathbb{Q}=T\sharp\mathbb{P}$ \protect\linebreak($N=1$ funnel).}
\label{fig:cifar-isomap-pair-1}
\end{subfigure}
\begin{subfigure}[b]{0.31\linewidth}
\centering
\includegraphics[width=0.97\linewidth]{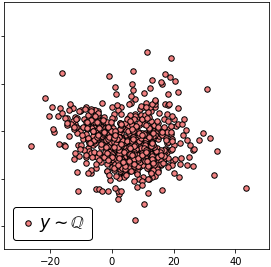}
\caption{\centering Samples  $y\sim\mathbb{Q}=T\sharp\mathbb{P}$\protect\linebreak ($N=16$ funnels).}
\label{fig:cifar-isomap-pair-16}
\end{subfigure}

\caption{\centering Visualization of random samples from our constructed CIFAR-10 images benchmark pairs.\protect\linebreak In the last three plots, we show distributions projected to $2$ principal components of $\mathbb{P}$.}
\label{fig:cifar-benchmark-pairs}
\end{figure}
In both high-dimensional and images pairs $(\mathbb{P},\mathbb{Q})$, the input $\mathbb{P}$ is an absolutely continuous distribution supported on a hypercube $\mathcal{S}$. By the design, the distribution $\mathbb{P}$ is accessible by random samples. To sample from $\mathbb{Q}$, we first sample $x\sim\mathbb{P}$, then compute its transport ray (Proposition \ref{prop-intersection}), truncate it to $\mathcal{S}$, and produce $y=T(x)\sim\mathbb{Q}$ by \eqref{T-def-benchmark}. The sampler for $\mathbb{Q}$ outputs only $y$, i.e., information about $u$, $T$, $x$ is hidden from the user and employed only when estimating the ground truth $\mathbb{W}_{1}(\mathbb{P},\mathbb{Q})$ or $\nabla u$. 

\underline{\textbf{Reversed pairs.}} Doing preliminary tests, we found that for solvers it is more challenging to compute the OT gradient for pair $(\mathbb{Q},\mathbb{P})$ rather than $(\mathbb{P},\mathbb{Q})$. In particular, in our images benchmark, the pair $(\mathbb{Q},\mathbb{P})$ reflects the practical WGAN scenario better. Indeed, in WGANs, the solvers move the generated distribution (bad images, $\mathbb{Q}$ in our construction) to the real distribution (good images, $\mathbb{P}$). 
Recall that $\mathbb{Q}=T\sharp\mathbb{P}$, where $T$ is a differentiable bijection \eqref{T-def-benchmark} along the (truncated) transport rays of a MinFunnel $u$. As a result, $\mathbb{Q}$ is absolutely continuous and the OT gradient for the reverse $(\mathbb{Q},\mathbb{P})$ is $\mathbb{Q}$-unique (Proposition  \ref{prop-ot-grad-unique}). For this pair, the optimal potential is $-u$ and its gradient is $-\nabla u$. To conclude, in all the experiments, we feed the \underline{\textbf{reverse pair}} $(\mathbb{P},\mathbb{Q}):=(\mathbb{Q},\mathbb{P})$ to OT solvers in view. 

\section{Evaluation of Dual Solvers and Discussion}
\label{sec-evaluation}

In this section, we evaluate WGAN dual OT solvers (\wasyparagraph\ref{sec-dual-solvers}) on our constructed benchmark (\wasyparagraph\ref{sec-methodology}). 
Technical details of the implemetation are given in Appendix \ref{sec-exp-details}. The code is written in \texttt{PyTorch} framework and is publicly available together with all the constructed benchmark distributions at
 \begin{center}
     \url{https://github.com/justkolesov/Wasserstein1Benchmark}
 \end{center}

\textbf{Metrics.} For $\mathbb{W}_{1}$, we do not use any specific metric but simply report obtained $\widehat{\mathbb{W}}_{1}$ and the ground truth $\mathbb{W}_{1}$. To quantify the recovered gradient $\nabla \hat{f}$, we use $\mathcal{L}^{2}$ and cosine similarity metrics \cite[\wasyparagraph 4.2]{korotin2021neural}:
\vspace{-1mm}
\begin{equation}
    \mathcal{L}^{2}(\nabla\hat{f},\nabla f^{*})\stackrel{def}{=}
    \|\nabla \hat{f}-\nabla f^{*}\|^{2}_{\mathcal{L}^{2}};
    \qquad
    \text{cos}(\nabla\hat{f},\nabla f^{*})\stackrel{def}{=}\frac{\langle \nabla\hat{f},\nabla f^{*}\rangle_{\mathcal{L}^{2}}}{\|\nabla \hat{f}\|_{\mathcal{L}^{2}}\cdot \|\nabla f^{*}\|_{\mathcal{L}^{2}}},
    \label{metrics-def}\vspace{-0.7mm}
\end{equation}
where ${\langle\nabla f_{1}, \nabla f_{2}\rangle_{\mathcal{L}^{2}}\stackrel{def}{=}\int \langle \nabla f_{1}(x), \nabla f_{2}(x)\rangle d\mathbb{P}(x)}$, ${\|\nabla f\|^{2}_{\mathcal{L}^{2}}\stackrel{def}{=}\langle\nabla f_{1}, \nabla f_{2}\rangle_{\mathcal{L}^{2}}}$ and $\nabla f^{*}$ is the ground truth OT gradient.
The $\mathcal{L}^{2}$ metric compares the gradients $\nabla \hat{f},\nabla f^{*}$ as elements of the space $\mathcal{L}^{2}(\mathbb{P})$ of quadratically integrable w.r.t. $\mathbb{P}$  functions. The cosine compares their directions \textit{regardless }of the magnitude (Figure \ref{metrics-def}). To estimate the metrics, we use $2^{13}$ samples $x\!\sim\!\mathbb{P}$. 

$\lfloor \textbf{DOT} \rceil$ For completeness, we add the \textit{empirical} (batched) OT \cite{fatras2019learning,fatras2021minibatch,nguyen2021improving,nguyen2021bomb} to evaluation. For batches ${X\sim\mathbb{P},}$ ${Y\sim\mathbb{Q}}$, we use $\widehat{\mathbb{W}}_{1}(X,Y)$ computed by a \textbf{discrete} solver as an estimate of $\mathbb{W}_{1}(\mathbb{P},\mathbb{Q})$. 
The solver can be combined with the automatic differentiation to approximate $\nabla f^{*}$ on the batch $X$.
\textbf{High-dimensional pairs}. We test the solvers and report the estimated $\mathbb{W}_1$ value and metrics \eqref{metrics-def} in Tables \ref{table:cos-hd}, \ref{table:l2-hd}, \ref{table:w1-hd} (Appendix \ref{sec-metrics}). We use fully-connected nets as potentials $f_{\theta},g_{\omega}$ and movers $T_{\theta},H_{\omega}$ (in the maximin solvers). In Figure \ref{fig:surfaces}, we show the potentials learned by solvers for $D=2, N=4$.

\begin{figure}[!t]\vspace{-5mm}
\begin{subfigure}[b]{0.205\linewidth}
\centering
\includegraphics[width=\linewidth]{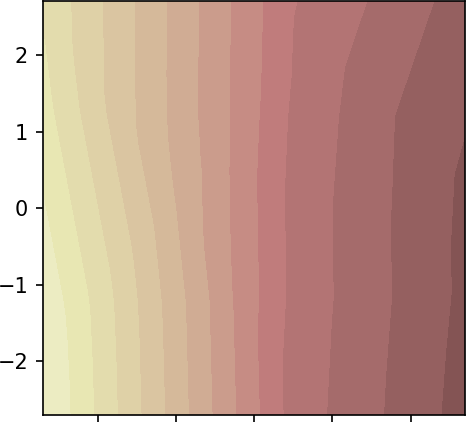}
\caption{\centering $\lfloor \textbf{WC}\rceil$.}
\label{fig:surface-wc}
\end{subfigure}
\begin{subfigure}[b]{0.19\linewidth}
\centering
\includegraphics[width=\linewidth]{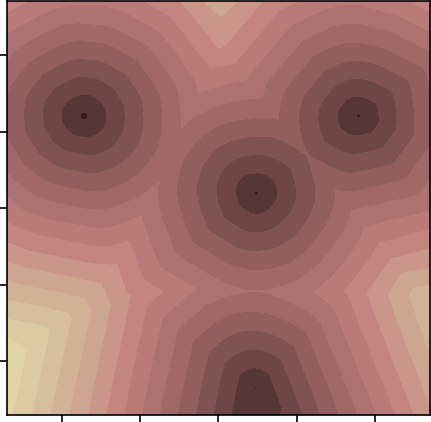}
\caption{\centering $\lfloor \textbf{GP}\rceil$.}
\label{fig:surface-gp}
\end{subfigure}
\begin{subfigure}[b]{0.19\linewidth}
\centering
\includegraphics[width=\linewidth]{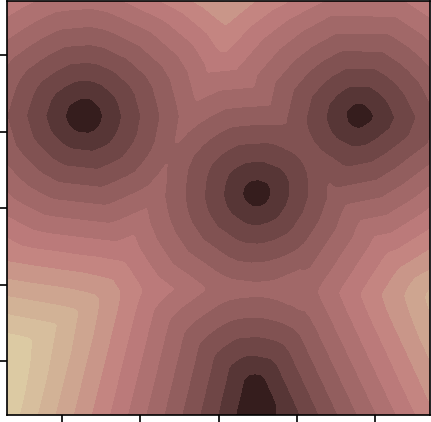}
\caption{\centering $\lfloor \textbf{LP}\rceil$.}
\label{fig:surface-LP}
\end{subfigure}
\begin{subfigure}[b]{0.19\linewidth}
\centering
\includegraphics[width=\linewidth]{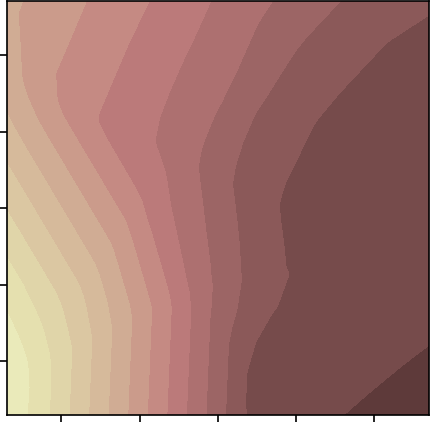}
\caption{\centering $\lfloor \textbf{SN}\rceil$.}
\label{fig:surface-sn}
\end{subfigure}
\begin{subfigure}[b]{0.19\linewidth}
\centering
\includegraphics[width=\linewidth]{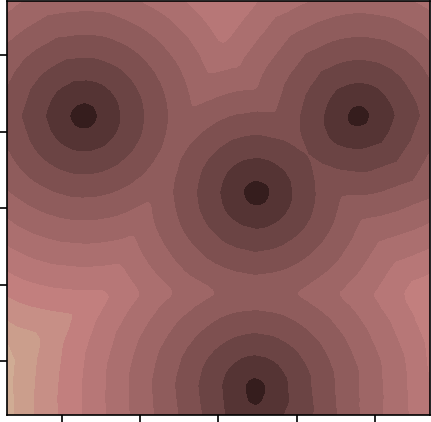}
\caption{\centering $\lfloor \textbf{SO}\rceil$.}
\label{fig:surface-so}
\end{subfigure}\vspace{2mm}

\begin{subfigure}[b]{0.205\linewidth}
\centering
\includegraphics[width=\linewidth]{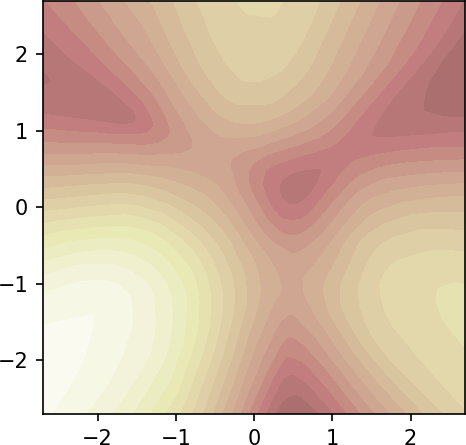}
\caption{\centering $\lfloor \textbf{LS}\rceil$.}
\label{fig:surface-ls}
\end{subfigure}
\begin{subfigure}[b]{0.19\linewidth}
\centering
\includegraphics[width=\linewidth]{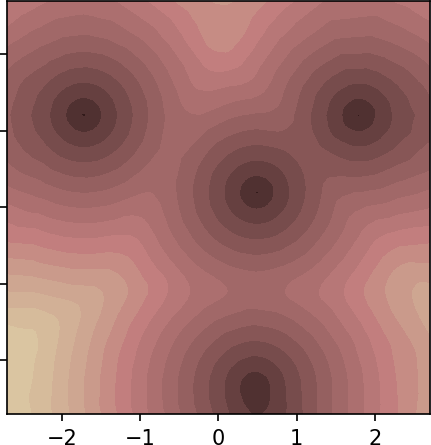}
\caption{\centering $\lfloor \textbf{MM:B}\rceil$.}
\label{fig:surface-qp}
\end{subfigure}
\begin{subfigure}[b]{0.19\linewidth}
\centering
\includegraphics[width=\linewidth]{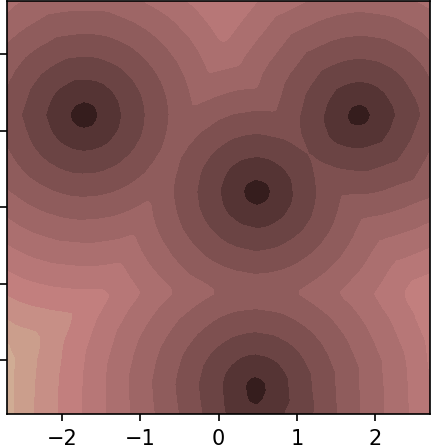}
\caption{\centering $\lfloor \textbf{MM:Bv2}\rceil$.}
\label{fig:surface-cowgan}
\end{subfigure}
\begin{subfigure}[b]{0.19\linewidth}
\centering
\includegraphics[width=\linewidth]{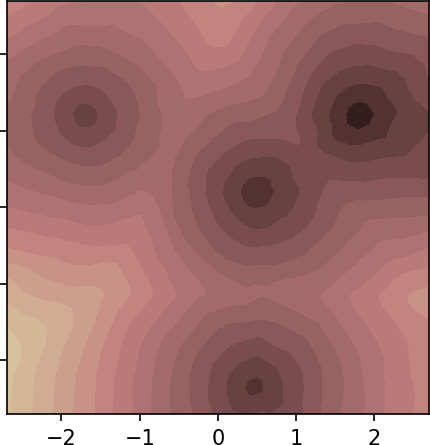}
\caption{\centering $\lfloor \textbf{MM}\rceil$.}
\label{fig:surface-mm}
\end{subfigure}
\begin{subfigure}[b]{0.19\linewidth}
\centering
\includegraphics[width=\linewidth]{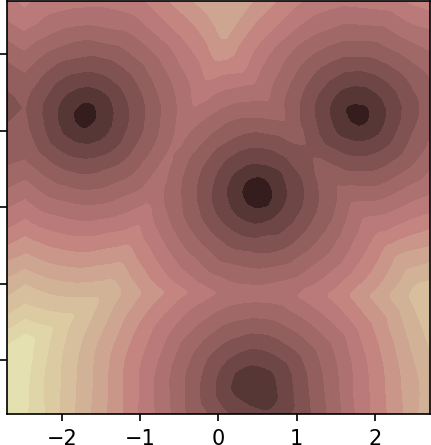}
\caption{\centering $\lfloor \textbf{MM:R}\rceil$.}
\label{fig:surface-mmr}
\end{subfigure}\vspace{-1mm}
\caption{\centering Surfaces of potentials $f_{\theta}$ learned by OT solvers on the pair with $D=2, N=N$.
\protect \linebreak For $\lceil\textbf{MM:R}\rfloor$ we plot (minus) second potential $-g_{\omega}$ as the solver does not compute the first one.
\protect\linebreak The ground truth optimal potential is in Figure \ref{fig:2d-gt-potential}.}
\label{fig:surfaces}
\end{figure}

As Table \ref{table:w1-hd} shows, all the solvers randomly over/underestimate $\mathbb{W}_{1}$, typically with notable error. Consequently, \textit{none of the solvers can be viewed as precise estimators of the OT cost}. Thus, our discussion below primarily focuses on the estimation of the OT gradient needed in WGANs.

\vspace{-1mm}Original $\lfloor\textbf{WC}\rceil$ leads to a pathological value surface of the learned potential (Figure \ref{fig:surface-wc}), which is far from the ground truth. This agrees with the claims of \cite[\wasyparagraph 3]{gulrajani2017improved}. Despite this, even in high dimensions the $\cos$ metric is positive (Table \ref{table:cos-wc}), i.e., the recovered gradient correlates with the OT gradient. Popular $\lfloor\textbf{GP}\rceil$ notably improves upon $\lfloor\textbf{WC}\rceil$ and provides higher $\cos$ values (Table \ref{table:cos-gp}). According to the results, its modification $\lfloor\textbf{LP}\rceil$ does not provide any major improvement (Table \ref{table:cos-lp}).

\vspace{-1mm}Surprisingly, $\lfloor\textbf{SN}\rceil$ performs worse than $\lfloor\textbf{GP}\rceil$ and is comparable to $\lfloor\textbf{WC}\rceil$ in $\cos$ metric (Table \ref{table:cos-sn}). We presume that this happens since the spectral norm negatively affects the expressive power of the neural net \cite{anil2019sorting}. Even when $D=2$ (Figure \ref{fig:surface-sn}), $\lfloor\textbf{SN}\rceil$ fails to recover the optimal potential. $\lfloor\textbf{SO}\rceil$ replaces spectral norm with orthogonalization and uses GroupSort activations which improve performance. The solver scores higher $\cos$ values (Table \ref{table:cos-so}) which are comparable to $\lfloor\textbf{GP}\rceil$.

\vspace{-1mm}Solver $\lfloor\textbf{LS}\rceil$ recovers biased potential since \eqref{lsot-formula} is the duality formula for regularized OT which yields biased optimal potentials (Table \ref{table:cos-ls}). The bias is huge for large $D,N$ when the benchmark pairs are complex. This is analogous to results for the same solver in the Wasserstein-2 ($\mathbb{W}_{2}$) benchmark \cite{korotin2021neural}.

\vspace{-1mm}Batch-based $\lfloor\textbf{MM:B}\rceil$ suffers from the bias in high dimensions (Table \ref{table:cos-qp}) due to the overestimation of the value of the inner problem in \eqref{mallasto_q_p}. Note that $\cos$ metric values are even negative in high dimensions. The same is reported in the $\mathbb{W}_{2}$ benchmark \cite{korotin2021neural}. $\lfloor\textbf{MM-Bv2}\rceil$ uses a more tricky optimization scheme and yields high $\cos$ values (Table \ref{table:cos-cowgan}). It captures the direction of the OT gradient but \textit{extremely} overestimates its magnitude, see high values of $\mathcal{L}^{2}$ in Table \ref{table:l2-cowgan}.

\vspace{-1mm}Maximin solvers $\lfloor\textbf{MM}\rceil$, $\lfloor\textbf{MM:R}\rceil$ perform comparably to $\lfloor\textbf{GP}\rceil$, $\lfloor\textbf{SO}\rceil$, $\lfloor\textbf{LP}\rceil$, see Tables \ref{table:cos-mm}, \ref{table:cos-mmr}. However, their training takes longer and sometimes diverges as it solves a saddle point problem. This agrees with the $\mathbb{W}_{2}$ benchmark \cite[\wasyparagraph 4.3]{korotin2021neural}. Using these solvers in GANs is not easy as it yields a challenging \textit{min-max-min} optimization problem. Importantly, $\lfloor \textbf{MM:R}\rceil$ recovers the OT map which can itself be used as a generative model (\textit{outside} the context of GANs) in computer vision tasks. Here we refer the reader to the recent \textit{neural optimal transport} (NOT) methods \cite{korotin2022neural,korotin2022kernel,asadulaev2022neural,korotin2022wasserstein,gazdieva2022unpaired,rout2021generative,fan2021scalable}.

\begin{table}[!t]
\hspace{-9mm}
\scriptsize
\begin{tabular}{|P{.06}|P{.07}|P{.08}|P{.07}|P{.07}|P{.07}|P{.07}|P{.07}|P{.11}|P{.13}|P{.09}|P{.11}|P{.1}| } 
\hline
                                   &        &  $\lfloor\textbf{WC}\rceil$ & $\lfloor\textbf{GP}\rceil$ & $\lfloor\textbf{LP}\rceil$ & $\lfloor\textbf{SN}\rceil$ & $\lfloor\textbf{SO}\rceil$ & $\lfloor\textbf{LS}\rceil$ & $\lfloor\textbf{MM:B}\rceil$ & $\lfloor\textbf{MM:Bv2}\rceil$ & $\lfloor\textbf{MM}\rceil$ & $\lfloor\textbf{MM:R}\rceil$ & $\lfloor\textbf{DOT}\rceil$ \\ \hline
\vspace{1mm}\multirow{2}{*}{$cos$}             & \vspace{0.1mm} \centering \textbf{HD}     &    \vspace{0.1 mm}\hspace{2mm}\large{\color{red}\Sadey}  &   \vspace{0.1 mm}\hspace{1mm}\large{\color{ForestGreen}\Smiley}&   \vspace{0.1 mm}\hspace{1mm}\large{\color{ForestGreen}\Smiley}   &    \vspace{0.1 mm}\hspace{1mm}\large{\color{red}\Sadey}   &  \centering\large{\color{ForestGreen}\Smiley}  &    \vspace{0.1 mm}\hspace{1mm}\large{\color{red}\Sadey}  &   \vspace{0.1 mm}\hspace{4mm}\large{\color{orange}\Neutrey}    &  \vspace{0.1 mm}\hspace{5mm}\large{\color{orange}\Neutrey}        &  \vspace{0.1 mm}\hspace{2mm}\large{\color{orange}\Neutrey}    &     \vspace{0.1 mm}\hspace{3mm}\large{\color{ForestGreen}\Smiley}    &   \vspace{0.1 mm}\hspace{3mm}\large{\color{red}\Sadey}  \\ \cline{2-13} & \vspace{0.1mm} \centering \textbf{IMG} & \vspace{0.1 mm}\hspace{2mm}\large{\color{orange}\Neutrey}    &    \vspace{0.1 mm}\hspace{1mm}\large{\color{ForestGreen}\Smiley}  &   \vspace{0.1 mm}\hspace{1mm}\large{\color{ForestGreen}\Smiley}   & \vspace{0.1 mm}\hspace{1mm}\large{\color{orange}\Neutrey}    &   \vspace{0.1mm}\hspace{2mm}-  &     \vspace{0.1 mm}\hspace{1mm}\large{\color{orange}\Neutrey}      &\vspace{0.1 mm}\hspace{4mm}\large{\color{orange}\Neutrey}         &\vspace{0.1 mm}\hspace{5mm}\large{\color{orange}\Neutrey}     & \vspace{0.1 mm}\hspace{2mm}\large{\color{red}\Sadey} & \vspace{0.1 mm}\hspace{3mm}\large{\color{ForestGreen}\Smiley}    & \vspace{0.1 mm}\hspace{3mm}\large{\color{ForestGreen}\Smiley}    \\ \hline
\vspace{1mm}\multirow{2}{*}{$\mathcal{L}^{2}$} & \vspace{0.1mm} \centering \textbf{HD}    &  \vspace{0.1 mm}\hspace{2mm}\large{\color{red}\Sadey}  & \vspace{0.1 mm}\hspace{1mm}\large{\color{orange}\Neutrey}      & \vspace{0.1 mm}\hspace{1mm}\large{\color{ForestGreen}\Smiley}    & \vspace{0.1 mm}\hspace{1mm}\large{\color{red}\Sadey}   & \vspace{0.1 mm}\hspace{1mm}\large{\color{orange}\Neutrey}    &  \vspace{0.1 mm}\hspace{1mm}\large{\color{red}\Sadey}   & \vspace{0.1 mm}\hspace{4mm}\large{\color{orange}\Neutrey}     &\vspace{0.1 mm}\hspace{5mm}\large{\color{orange}\Neutrey}          & \vspace{0.1 mm}\hspace{2mm}\large{\color{red}\Sadey}      &  \vspace{0.1 mm}\hspace{3mm}\large{\color{orange}\Neutrey}    &  \vspace{0.1 mm}\hspace{3mm}\large{\color{red}\Sadey}   \\ \cline{2-13} 
& \vspace{0.1mm} \centering \textbf{IMG} &  \vspace{0.1 mm}\hspace{2mm}\large{\color{red}\Sadey}  &\vspace{0.1 mm}\hspace{1mm}\large{\color{red}\Sadey}    &\vspace{0.1 mm}\hspace{1mm}\large{\color{red}\Sadey}        & \vspace{0.1 mm}\hspace{1mm}\large{\color{red}\Sadey} & \vspace{0.1mm}\hspace{2mm}-   &\vspace{0.1 mm}\hspace{1mm}\large{\color{red}\Sadey}    & \vspace{0.1 mm}\hspace{4mm}\large{\color{red}\Sadey}     &\vspace{0.1 mm}\hspace{5mm}\large{\color{red}\Sadey}        & \vspace{0.1 mm}\hspace{2mm}\large{\color{red}\Sadey}   &  \vspace{0.1 mm}\hspace{3mm}\large{\color{ForestGreen}\Smiley}    &  \vspace{0.1 mm}\hspace{3mm}\large{\color{red}\Sadey}   \\ \hline
 \vspace{1mm}\multirow{2}{*}{$\mathbb{W}_{1}$}  & \vspace{0.1mm} \centering \textbf{HD}    & \vspace{0.1 mm}\hspace{2mm}\large{\color{red}\Sadey}    &  \vspace{0.1 mm}\hspace{1mm}\large{\color{orange}\Neutrey}   &  \vspace{0.1 mm}\hspace{1mm}\large{\color{ForestGreen}\Smiley}   &\vspace{0.1 mm}\hspace{1mm}\large{\color{red}\Sadey}     & \vspace{0.1 mm}\hspace{1mm}\large{\color{orange}\Neutrey}    &  \vspace{0.1 mm}\hspace{1mm}\large{\color{red}\Sadey}    &  \vspace{0.1 mm}\hspace{4mm}\large{\color{red}\Sadey}     & \vspace{0.1 mm}\hspace{5mm}\large{\color{orange}\Neutrey}        & \vspace{0.1 mm}\hspace{2mm}\large{\color{orange}\Neutrey}    &  \vspace{0.1 mm}\hspace{3mm}\large{\color{orange}\Neutrey}     &  \vspace{0.1 mm}\hspace{3mm}\large{\color{red}\Sadey}    \\ \cline{2-13} 
& \vspace{0.1mm} \centering \textbf{IMG} & \vspace{0.1 mm}\hspace{2mm}\large{\color{red}\Sadey}    & \vspace{0.1 mm}\hspace{1mm}\large{\color{red}\Sadey}    &\vspace{0.1 mm}\hspace{1mm}\large{\color{red}\Sadey}     & \vspace{0.1 mm}\hspace{1mm}\large{\color{red}\Sadey}    &  \vspace{0.1mm}\hspace{2mm}-  & \vspace{0.1 mm}\hspace{1mm}\large{\color{orange}\Neutrey}   & \vspace{0.1 mm}\hspace{4mm}\large{\color{red}\Sadey}      &  \vspace{0.1 mm}\hspace{5mm}\large{\color{orange}\Neutrey}      & \vspace{0.1 mm}\hspace{2mm}\large{\color{orange}\Neutrey}   & \vspace{0.1 mm}\hspace{3mm}\large{\color{orange}\Neutrey}     &  \vspace{0.1 mm}\hspace{3mm}\large{\color{ForestGreen}\Smiley}    \\ \hline
\end{tabular}
\vspace{1mm}
\caption{\centering The summary of WGAN dual OT solvers' performance in $\cos$, $\mathcal{L}^{2}$ and $\mathbb{W}_{1}$ metrics on our high-dimensional (HD) and images (IMG) benchmark pairs. For details, see Appendix \ref{sec-metrics}.}
\label{table:emojis-dual-solvers}
\end{table}

Empirical $\lfloor\textbf{DOT}\rceil$ provides precise estimates of $\mathbb{W}_{1}$ (Table \ref{table:w1-dot}) and the OT gradient (Table \ref{table:cos-dot}) only in small dimensions. In high dimensions, it intolerably overestimates $\mathbb{W}_{1}$; its gradient is almost orthogonal to the ground truth ($\cos\lesssim 0$). This is due to the exponential (in $D$) sample complexity of DOT \cite{weed2019sharp}. Thus, (unregularized) DOT is an imprecise estimator of $\mathbb{W}_{1}$ or the OT gradient in high $D$.

\textbf{Images pairs.}
Here we do not consider $\lfloor\textbf{SO}\rceil$ as its authors do not provide convolutional architectures with GroupSort and orthonormalization. We use DCGAN \cite{radford2015unsupervised} as potentials $f_{\theta},g_{\omega}$.
In $\lfloor\textbf{MM}\rceil$  and $\lfloor\textbf{MM:R}\rceil$, the movers $T_{\theta},H_{\omega}$ are UNets \cite{ronneberger2015u}. The evaluation results on Celeba benchmark pairs are given in Tables  \ref{table:cos-images},  \ref{table:l2-images}, \ref{table:w1-images} and on CIFAR-10 benchmark pairs -- in Tables \ref{table:cos-cifar-images},  \ref{table:l2-cifar-images},  \ref{table:w1-cifar-images} (Appendix \ref{sec-metrics}).

\vspace{-1mm}Surprisingly, our images benchmark pairs turned to be simpler than some high-dimensional benchmark pairs. Although $(\mathbb{P},\mathbb{Q})$ are absolutely continuous and supported on $[-1.1, 1.1]^{D}$, their actual probability mass is still concentrated around small low-dimensional sub-manifold of data. We suppose that this is one of the causes for the reasonable performance of most solvers. In particular, we see that even $\lfloor\textbf{DOT}\rceil,\lfloor\textbf{LS}\rceil$, $\lfloor\textbf{MM:B}\rceil$ score $\cos>0$ in images benchmark pairs, although they struggled to produce good results on high-dimensional pairs.

\vspace{-1mm}Solvers $\lfloor\textbf{GP}\rceil$, $\lfloor\textbf{LP}\rceil$, $\lfloor\textbf{MM:R}\rceil$ provide very high $\cos > 0.9$ (Table \ref{table:cos-images}). However, only $\lfloor\textbf{MM:R}\rceil$ provides precise approximation of $\nabla f^{*}$ in $\mathcal{L}^{2}$ norm (Table \ref{table:l2-images}). Interestingly, maximin $\lfloor\textbf{MM}\rceil$ diverges on our images benchmark pairs (tuning the hyper-parameters did not help). Similar to the evaluation in high-dimensional pairs,  $\lfloor\textbf{WC}\rceil$ and $\lfloor\textbf{SN}\rceil$ show moderate $\cos>0$, but its value is notably smaller than that of the top-performing methods. Solver $\lfloor\textbf{MM:Bv2}\rceil$ demonstrates meaningful estimate of $\mathbb{W}_{1}$ (Table \ref{table:w1-images}). Nevertheless, its recovered gradient is almost orthogonal to the ground truth ($\cos\approx 0$, see Table \ref{table:cos-images}), and the values of $\mathcal{L}^{2}$ metric are \textbf{extremely} high (Table \ref{table:l2-images}).





\section{Discussion}
\label{sec-discussion}

Our methodology creates pairs of continuous distributions with known ground truth OT cost and gradient, filling the missing gap of benchmarking $\mathbb{W}_{1}$ dual solvers. This development allows us to evaluate the OT performance of WGAN dual methods. The experimental results are summarized in Table \ref{table:emojis-dual-solvers}. Our evaluation shows that these solvers should \textbf{not} be considered as \textbf{meaningful estimators} of $\mathbb{W}_{1}$ as they exhibit large error. However, the OT gradient recovered by these solvers shows positive $\cos$ with ground truth. This suggests that most methods could still be used to \textbf{minimize} $\mathbb{W}_{1}$ in variational problems, e.g., Wasserstein GANs.

\textbf{Computational complexity.} Evaluation of all the OT solvers on our high-dimensional and images benchmark pairs  takes less than $50$ hours on a single GPU GTX 1080ti (11 GB VRAM).

\textbf{Potential Impact.} Our benchmark distributions can be used to evaluate future dual OT solvers in high-dimensional spaces, a crucial step to improve the transparency and replicability of OT and WGAN-related research. We expect our benchmark to become a standard benchmark for $\mathbb{W}_{1}$ optimal
transport as part of the ongoing effort of advancing computational OT.

\textbf{Limitations (benchmark).} We rely on MinFunnels as optimal Kantorovich potentials to generate benchmark pairs. Also, we limit our pairs to be absolutely continuous distributions. It is unclear whether our benchmark sufficiently reflects the real-world scenarios in which the WGAN solvers are used. Nevertheless, our methodology is generic and can be used to construct new benchmark pairs.

\textbf{Limitations (evaluation).} We evaluate how well the OT solvers compute OT cost and gradient but do not assess their performance in GAN settings. Studying this question is a promising future research avenue which could help to develop new OT-based methods for generative modeling.

\textsc{Acknowledgements}. The work was supported by the Analytical center under the RF Government (subsidy agreement 000000D730321P5Q0002, Grant No. 70-2021-00145 02.11.2021).


\bibliographystyle{plain}
\bibliography{references}

\section*{Checklist}
\begin{enumerate}

\item For all authors...
\begin{enumerate}
  \item Do the main claims made in the abstract and introduction accurately reflect the paper's contributions and scope? \newline\answerYes{}
  \item Did you describe the limitations of your work?
  \newline\answerYes{}
  \item Did you discuss any potential negative societal impacts of your work?
\newline\answerNA{}
  \item Have you read the ethics review guidelines and ensured that your paper conforms to them?
    \answerYes{}
\end{enumerate}

\item If you are including theoretical results...
\begin{enumerate}
  \item Did you state the full set of assumptions of all theoretical results?
    \newline\answerYes{All the assumptions are stated in the main text.}
        \item Did you include complete proofs of all theoretical results?
    \newline\answerYes{All the proofs are given in the appendices.}
\end{enumerate}

\item If you ran experiments...
\begin{enumerate}
  \item Did you include the code, data, and instructions needed to reproduce the main experimental results (either in the supplemental material or as a URL)?
\newline\answerYes{The code and the instructions are included in the supplementary material. The datasets and assets that we use are publicly available.}
  \item Did you specify all the training details (e.g., data splits, hyperparameters, how they were chosen)?
    \newline\answerYes{See the supplementary material (appendices + code).}
    \item Did you report error bars (e.g., with respect to the random seed after running experiments multiple times)?
    \newline\answerNo{}
    \item Did you include the total amount of compute and the type of resources used (e.g., type of GPUs, internal cluster, or cloud provider)?
    \newline\answerYes{See the Appendix.}
\end{enumerate}

\item If you are using existing assets (e.g., code, data, models) or curating/releasing new assets...
\begin{enumerate}
  \item If your work uses existing assets, did you cite the creators?
  \newline\answerYes{See the discussion in section \wasyparagraph\ref{sec-benchmark-pairs}}.
  \item Did you mention the license of the assets?
    \newline\answerNo{We refer to the datasets' and assets' public pages.}
  \item Did you include any new assets either in the supplemental material or as a URL?
  \newline\answerYes{See the supplementary material}
  \item Did you discuss whether and how consent was obtained from people whose data you're using/curating?
  \newline\answerNA{}
  \item Did you discuss whether the data you are using/curating contains personally identifiable information or offensive content?
  \newline\answerNo{For CelebA faces dataset, we refer to the original authors publication.}
\end{enumerate}

\item If you used crowdsourcing or conducted research with human subjects...
\begin{enumerate}
  \item Did you include the full text of instructions given to participants and screenshots, if applicable?
  \newline\answerNA{}
  \item Did you describe any potential participant risks, with links to Institutional Review Board (IRB) approvals, if applicable?
  \newline\answerNA{}
  \item Did you include the estimated hourly wage paid to participants and the total amount spent on participant compensation?
  \newline\answerNA{}
\end{enumerate}

\end{enumerate}


\newpage
\appendix

\section{Proofs}
\label{sec-proofs}

\begin{proof}[Proof of Proposition \ref{prop-opt-monotone}] For all $x,y\in\mathbb{R}^{D}$, it holds $u(x)-u(y)\leq \|u(x)-u(y)\|_{2}\leq \|x-y\|_{2}$ since $u$ is $1$-Lipschitz. Let $\pi^{*}$ be an OT plan. We compute
\begin{eqnarray}
    \mathbb{W}_{1}(\mathbb{P},\mathbb{Q})=\int\limits_{\mathbb{R}^{D}\times\mathbb{R}^{D}}\hspace{-2mm}\|x-y\|_{2}d\pi^{*}(x,y)\geq \int\limits_{\mathbb{R}^{D}\times\mathbb{R}^{D}}\hspace{-2mm}\big(u(x)-u(y)\big)d\pi^{*}(x,y)=
    \label{primal-to-dual}
    \\
    \int u(x)d\mathbb{P}(x)-\int u(x)d\mathbb{Q}(y)=\int\limits_{\mathbb{R}^{D}\times\mathbb{R}^{D}}\hspace{-2mm}\big(u(x)-u(y)\big)d\pi(x,y)=\int\limits_{\mathbb{R}^{D}\times\mathbb{R}^{D}}\hspace{-2mm}\|x-y\|_{2}d\pi(x,y),
    \label{monotone-cost-is-optimal}
\end{eqnarray}
where in line \eqref{monotone-cost-is-optimal}, we use the fact that $u(x)-u(y)=\|x-y\|_{2}$ holds $\pi$-almost surely for all $x,y\in\mathbb{R}^{D}$. As a result, the transport cost of $\pi$ is not greater than the OT cost and $\pi$ is an OT plan. Thus, \eqref{primal-to-dual} is an equality, and $u$ attains the maximum in \eqref{ot-dual-1-lip}, i.e., it is an optimal dual potential.
\end{proof}

\begin{proof}[Proof of Proposition \ref{prop-universal-approximation}] Since $\mathcal{S}\subset\mathbb{R}^{D}$ is a compact set, there exists a finite $\frac{\epsilon}{2}$-coverage, i.e., a set of points $a_{n}\in\mathbb{R}^{D}$ ($n=1,2,\dots,N$) such that for all $x$ there exists $n(x)$ with $\|x-a_{n(x)}\|_{2}\leq \frac{1}{2}\epsilon$. We put $b_{n}=f^{*}(a_{n})$ and consider the MinFunnel $u$ \eqref{u-parametric} with parameters $\{a_{n},b_{n}\}_{n=1}^{N}$.

Now we pick any $x\in\mathcal{S}$ and show that $|u(x)-f^{*}(x)|\leq \epsilon$. First, we note that
\begin{eqnarray}u(x)=\min_{n}\left\lbrace\|x-a_{n}\|_{2}+b_{n}\right\rbrace\leq \|x-a_{n(x)}\|_{2}+b_{n}\leq \frac{1}{2}\epsilon+f^{*}(a_{n(x)})\leq\nonumber
\\
\frac{1}{2}\epsilon+f^{*}(x)+\|f^{*}(a_{n(x)})-f^{*}(x)\|_{2}\leq \frac{1}{2}\epsilon+f^{*}(x)+\|a_{n(x)}-x\|_{2}=f^{*}(x)+\epsilon,
\label{f-upper-bound}
\end{eqnarray}
where we use the fact that $f^{*}$ is $1$-Lipschitz continuous. Now we note that for every $n$ it holds:
\begin{equation}
    f^{*}(x)\leq f^{*}(a_{n})+\|x-a_{n}\|_{2}=b_{n}+\|x-a_{n}\|_{2}.
    \label{f-lower-bound}
\end{equation}
By taking $\min$ over all $n$ in \eqref{f-lower-bound}, we get that $f^{*}(x)\leq u(x)$ for all $x$. By combining this fact with \eqref{f-upper-bound}, we see that $u(x)\in [f^{*}(x),f^{*}(x)+\epsilon]$ for all $x\in\mathcal{S}$, i.e., $\sup\limits_{x\in\mathcal{S}}|f^{*}(x)-u(x)|\leq\epsilon$.\end{proof}

Proposition \ref{prop-universal-approximation} yields that MinFunnels \eqref{u-parametric} can approximate any $1$-Lipschitz optimal potential $f^{*}$ in \eqref{ot-dual-1-lip} in supremum norm. Be careful as this does necessarily guarantee that for $f^{*}$-ray monotone map $T^{*}:\mathbb{R}^{D}\rightarrow\mathbb{R}^{D}$, there exists a MinFunnel $u$ (approximating $f^{*}$) for which one may construct $u$-ray monotone map $T$ approximating $T^{*}$. We do not know if this holds for MinFunnel functions. However, this is indiffirent for us as we do not aim to approximate a specific OT map $T^{*}$ but construct a random one (\wasyparagraph\ref{sec-benchmark-pairs}).


\begin{proof}[Proof of Proposition \ref{prop-intersection}.] We split the proof into 4 parts.

\underline{\textbf{Part 1}} [Uniqueness of $m$.] We show that $\argmin_{n}\left\lbrace u_{n}(x)\right\rbrace$ contains only one index, i.e., $m$ is uniquely defined. Assume the contrary, i.e., that there exist $m\neq m'$ such that at $x$ we have $$u(x)=u_{m}(x)=\|x-a_{m}\|_{2}+b_{m}=u_{m'}(x)=\|x-a_{m'}\|_{2}+b_{m'}=\min_{n}\left\lbrace u_{n}(x)\right\rbrace.$$
Since $u$ is $1$-Lipschitz, we have $u(a_{m})\geq u(x)-\|x-a_{m}\|_{2}=u_{m}(x)-\|x-a_{m}\|_{2}=b_{m}.$ On the other hand,  $u(a_{m})=\min_{n}\{u_{n}(a_{m})\}\leq u_{m}(a_{m})= \|a_{m}-a_{m}\|+b_{m}=b_{m}.$ We combine these two inequalities and obtain $u(a_{m})=u_{m}(a_{m})=b_{m}$. This means that
\begin{equation}u(x)-u(a_{m})=u_{m}(x)-u_{m}(a_{m})=\|x-a_{m}\|_{2}.
\label{u-affine-m}
\end{equation}
Consequently, $u$ is affine on the segment $[a_{m}, x]$, see \cite[Lemma 3.5]{santambrogio2015optimal}.
Analogously, we get that $u(a_{m'})=u_{m'}(a_{m'})=b_{m'}$ and $u$ is affine on the segment $[a_{m'}, x]$. By the assumption of the proposition, $u$ is diffirentiable at $x$. As a result, we obtain $\nabla u(x)=\frac{x-a_{m}}{\|x-a_{m}\|_{2}}=\frac{x-a_{m'}}{\|x-a_{m'}\|_{2}}$. This yields that vectors $x-a_{m'}$ and $x-a_{m}$ are collinear, i.e., $a_{m'}\in [x,a_{m}]$ or $a_{m}\in [x,a_{m'}]$. Without loss of generality, we consider the first case. We have $u(a_{m'})=b_{m'}=\|a_{m'}-a_{m}\|_{2}+b_{m}$. This is a contradiction since $\|a_{m'}-a_{m}\|_{2}\neq |b_{m}-b_{m'}|$ by the assumption of the proposition. We conclude that $\argmin_{n}\left\lbrace u_{n}(x)\right\rbrace$ contains only one index $m$. In particular, we see that $x$ is not an intersection point of funnels $u_{n}$. Also, $x$ is not a center $a_{m}$ as a funnel is not differentiable at its center.

\underline{\textbf{Part 2}} [Direction of $\text{ray}(x)$.] Since $u$ is $1$-Lipschitz, we have $u(a_{m})\geq u(x)-\|a_{m}-x\|_{2}=u_{m}(x)-\|a_{m}-x\|_{2}=b_{m}$. On the other hand, $u(a_{m})=\min_{n}\{u_{n}(a_{m})\}\leq u_{m}(a_{m})= \|a_{m}-a_{m}\|_{2}+b_{m}$. Thus, $u(a_{m})=u_{m}(a_{m})=b_{m}$, equation \eqref{u-affine-m} holds, $u$ is affine on $[x,a_{m}]$ and $\nabla u(x)=\nabla u_{m}(x)=\frac{x-a_{m}}{\|x-a_{m}\|_{2}}$. Thus, the direction of a transport ray of $x$ is given by $v\stackrel{def}{=}\nabla u(x)=\frac{x-a_{m}}{\|x-a_{m}\|_{2}}$. By the definition of a transport ray (\wasyparagraph\ref{sec-ray-monotone}), we conclude that $[a_{m},x]\subset \text{ray}(x)$.

\underline{\textbf{Part 3}} [Lower endpoint of $\text{ray}(x)$.] We prove that $a_{m}$ is the lower endpoint. Assume that there exists $x_{0}\neq a_{m}$ such that $a_{m}\in [x_{0},x]$ and $u(x_{0})+\|a_{m}-x_{0}\|_{2}=u(a_{m})=u_{m}(a_{m})=b_{m}$. Consider any $m'\in\argmin_{n}\{u_{n}(x_{0})\}$, i.e., $u(x_0)=u_{m'}(x_{0})=\|x_{0}-a_{m'}\|_{2}+b_{m'}$. Note that $m\neq m'$ since $$u(x_{0})=u(a_{m})-\underbrace{\|x_{0}-a_{m}\|_{2}}_{>0}<u(a_{m})=u_{m}(a_{m})=\min_{x'\in\mathbb{R}^{D}}u_{m}(x').$$
We are going to prove that $a_{m}\in [a_{m'},x]$. Again, we note that due to $1$-Lipschitz continuity of $u$, we have $u(a_{m'})\geq u(x_{0})-\|x_{0}-a_{m'}\|^{2}=u_{m'}(x_{0})-\|x_{0}-a_{m'}\|^{2}=b_{m'}$. On the other hand, $u(a_{m'})= \min_{n}\{u_{n}(a_{n})\}\leq u_{m'}(a_{m'})=b_{m'}$. Thus, $u(a_{m'})=u_{m'}(a_{m'})=b_{m'}$. We have
$$u(x_{0})-u(a_{m'})=u_{m'}(x_{0})-u_{m'}(a_{m'})=\|x_{0}-a_{m'}\|_{2}.$$
We also know that $u(a_{m})-u(x_{0})=u_{m}(a_{m})-u_{m}(x_{0})=\|a_{m}-x_{0}\|_{2}$. By summing these equalities, we get
$u(a_{m})-u(a_{m'})=\|a_{m}-a_{m'}\|_{2}$. On the other hand, the same quantity equals $b_{m}-b_{m'}$. Thus, $\|a_{m}-a_{m'}\|_{2}=|b_{m}-b_{m'}|$, which is a contradiction to the assumption of the proposition.

\underline{\textbf{Part 4}} [Upper endpoint of $\text{ray}(x)$.] The upper endpoint is a point $x+rv$ such that $r$ is the maximal non-negative value for which $u(x+rv)=u(x)+r$. Note that $x+rv$ is the point where $u_{m}=u_{n}$ for some $n\neq m$. If there is no intersection, the ray is infinite and $r=+\infty$. Let us find where $u_{m}$ equals $u_{n}$ ($m\neq n$) on the ray $x+r_{n}v$ ($r_{n}>0$). We need to find $r_{n}> 0$ by solving
$$u_{m}(x+r_{n}v)=u_{m}(x)+r_{n}=u_{n}(x+r_{n}v)=\|r_{n}v+x-a_{n}\|_{2}+b_{n},$$
or, equivalently,
\begin{equation}
    u_{m}(x)+(r_{n}-b_{n})=\|r_{n}v+x-a_{n}\|_{2}.
    \label{r-cond-simplified}
\end{equation}
The left side must be non-negative, i.e., $r_{n}\geq b_{n}-u_{m}(x)$. We take the square of both sides:
$$u_{m}^{2}(x)+(r_{n}-b_{n})^{2}+2(r_{n}-b_{n})u_{m}(x)=\|x-a_{n}\|^{2}_{2}+r_{n}^{2}+2r\langle x-a_{n},v\rangle.$$
This is a linear equation in $r_{n}$ as $r_{n}^{2}$ terms vanish. We derive
\begin{equation}
r_{n}\cdot \big[\big(u_{m}(x)-b_{n}\big)-\langle v, x-a_{n}\rangle\big]=\frac{1}{2} \big[\|a_{n}-x\|^{2}_{2}-|u_{m}(x)-b_{n}|^{2}\big].
\label{r-linear}
\end{equation}
Consider the case when the right side is \underline{\textbf{zero}}, i.e., $\|a_{n}-x\|_{2}=|u_{m}(x)-b_{n}|$. We know that $u_{m}(x)=u(x)<u_{n}(x)=\|x-a_{n}\|_{2}+b_{n}$. Thus, $\|a_{n}-x\|_{2}=b_{n}-u_{m}(x)$, and \eqref{r-linear} equals
\begin{equation}r_{n}\cdot \big[-\|x-a_{n}\|_{2}-\langle v, x-a_{n}\rangle\big]=0.
\label{expr-r}
\end{equation}
Recall that $\|v\|_{2}=1$. Thus, \eqref{expr-r} may have a positive solution $r_{n}$ only when $x=a_{n}$ or ${\big(x\neq a_{n})\wedge \big(v=-\frac{x-a_{n}}{\|x-a_{n}\|_{2}}\big)}$. In the \textbf{first} case,
$u_{n}(x)=u_{n}(a_{n})=b_{n}=u_{m}(x)$ which contradicts to $u_{m}(x)\neq u_{n}(x)$. Thus, this case is not possible. In the \textbf{second} case, \eqref{r-cond-simplified} equals
$$r_{n}-\|x-a_{n}\|_{2}=|1-\frac{r_{n}}{\|x-a_{n}\|_{2}}|\cdot \|x-a_{n}\|_{2}.$$
Thus, all $r_{n}\geq\|a_{n}-x\|_{2}=b_{n}-u_{m}(x)$ are the solutions. We pick $r_{n}=\|a_{n}-x\|_{2}$ and see that
$$u_{m}(x)+r_{n}=u_{m}(x+r_{n}v)=u_{n}(x+r_{n}v)=\|r_{n}v+x-a_{n}\|_{2}+b_{n}=b_{n},$$
i.e., the expression equals the lowest value $b_{n}$ of $u_{n}$. Thus, $x+r_{n}v=a_{n}$, i.e., it is the center of the funnel $u_{n}$. In particular, $x\in [a_{m},a_{n}]$. We also derive that 
$$u_{m}(x)+r_{n}=u_{m}(x)+\|x-a_{n}\|_{2}=u_{n}(a_{n})=b_{n}.$$
Recall that $u_{m}(x)=\|x-a_{m}\|_{2}+b_{m}$. Thus, $\|x-a_{m}\|_{2}+b_{m}+\|x-a_{n}\|_{2}=b_{n}$. Since $x\in [a_{m},a_{n}]$, we conclude that $\|a_{n}-a_{m}\|_{2}+b_{m}=b_{n}$. This provides $\|a_{n}-a_{m}\|_{2}= |b_{n}-b_{m}|$ which is a contradiction to the assumptions of the proposition. Thus, the second case is also not possible.

If the right side of \eqref{r-linear} is \underline{\textbf{non-zero}}, we derive 
\begin{equation}r_{n}=\frac{1}{2} \big[\|a_{n}-x\|^{2}_{2}-|u(x)-b_{n}|^{2}\big]/\big[\big(u(x)-b_{n}\big)-\langle v, x-a_{n}\rangle\big].
\label{rn-formula}
\end{equation}
When the denominator is zero, we put $r_{n}=+\infty$, i.e., funnels $u_{m},u_{n}$ do not intersect in any $x+r_{n}v$.

To conclude, the intersection with $u_{n}$ does not happen when $r_{n}<b_{n}-u_{m}(x)$. Otherwise, the intersection happens at a point $x+r_{n}v$, where $r_{n}$ is defined by \eqref{rn-formula} and equals $+\infty$ if denominator/numerator is zero (no intersection). The upper endpoint of $\text{ray}(x)$ is given by $x+rv$ with $r=\min r_{n}$ (the first intersection), where the $\min$ is taken over $r_{n}$ such that $r_{n}\geq b_{n}-u_{m}(x)$.\end{proof}

\begin{proof}[Proof of Proposition \ref{prop-ot-grad-unique}]We compute $\mathbb{W}_{1}$ by substituting $f^{*}$ to \eqref{ot-dual-1-lip}:
\begin{eqnarray}
    \mathbb{W}_{1}(\mathbb{P},\mathbb{Q})=\int f^{*}(x)d\mathbb{P}(x)-\int f^{*}(y)d\mathbb{Q}(y)=\int f^{*}(x)d\mathbb{P}(x)-\int f^{*}\big(T(x)\big)d\mathbb{P}(x)=
    \label{change-variables-again}
    \\
    \int \big[f^{*}(x)- f^{*}\big(T(x)\big)\big]d\mathbb{P}(x)\leq \int \|x-T(x)\|_{2}d\mathbb{P}(x)=\mathbb{W}_{1}(\mathbb{P},\mathbb{Q}),
    \label{again-lip}
\end{eqnarray}
where in line \eqref{change-variables-again}, we use the change of variables for $y=T(x)$ and equality $T\sharp\mathbb{P}=\mathbb{Q}$. In line \eqref{again-lip}, we use the fact that $f^{*}$ is $1$-Lipschitz. From lines \eqref{change-variables-again} and \eqref{again-lip}, we conclude that $f^{*}(x)- f^{*}\big(T(x)\big)=\|x-T(x)\|_{2}$ holds $\mathbb{P}$-almost surely, $T$ is $f^{*}$-ray monotone and \eqref{again-lip} is the equality. Recall that we have $T(x)\neq x$ by the assumption. Consequently (\wasyparagraph\ref{sec-ray-monotone}), $f^{*}$ is affine on a segment $[x,T(x)]$ which is contained in some transport ray of $f^{*}$. If $f^{*}$ is diffirentiable at $x$, then it necessarily holds
\begin{equation}
    \nabla f^{*}(x)=\frac{x-T(x)}{\|x-T(x)\|_{2}}.
    \label{derivative-ot-unique}
\end{equation} Since $\mathbb{P}$ is absolutely continuous, the set of points $x$ where $\nabla f^{*}(x)$ does not exist is $\mathbb{P}$-neblibigle. Therefore, \eqref{derivative-ot-unique} holds $\mathbb{P}$-almost surely. By conducting the same analysis for $u$, we derive that $\nabla u(x)$ also equals \eqref{derivative-ot-unique}. Therefore, $\nabla u(x)=\nabla f^{*}(x)$ holds $\mathbb{P}$-almost surely.
\end{proof}

\section{Implementation Details}
\label{sec-exp-details}

 In this section, we provide the details of the training of the OT solvers that we consider. The \texttt{PyTorch} source code to create the benchmark pairs and evaluate existing solvers is publicly available at
 \begin{center}
     \url{https://github.com/justkolesov/Wasserstein1Benchmark}
 \end{center}

\subsection{Neural Networks}

\textbf{High-dimensional pairs.} We use multi-layer perceptrons (MLP) for potentials $f_{\theta}$ and $g_{\omega}$ (where applicable) with ReLU activations and the following hidden layer sizes:
\begin{equation}[\max(2D,128),\max(2D,128),\max(D,128)].
\label{mlp-hidden-layers}
\end{equation}
Here $D$ is the dimension of the ambient space. In $\lfloor \textbf{SN}\rceil$\footnote{\url{github.com/christiancosgrove/pytorch-spectral-normalization-gan }}, we apply spectral normalization to weights of the linear layers by using the power iteration method. In $\lfloor \textbf{SO}\rceil$\footnote{\url{ github.com/cemanil/LNets}}, we use FullSort activations instead of ReLU. Besides, we enforce ortho-normality on weight matrices by using \texttt{geotorch.orthogonal}.\footnote{\url{github.com/Lezcano/geotorch}} In maximin $\lfloor \textbf{MM}\rceil$, $\lfloor \textbf{MM:R}\rceil$, mover $T_{\theta}$ (or $H_{\omega}$) is a ReLU MLP with the same layer sizes as \eqref{mlp-hidden-layers}.

\textbf{Images pairs.} The potential $f_{\theta}$ (or $g_{\omega}$) has DCGAN\footnote{\url{pytorch.org/tutorials/beginner/dcgan_faces_tutorial.html}} architecture without the batch normalization. In $\lfloor \textbf{SN}\rceil$, we apply spectral normalization to weights of the linear layers by using the power iteration method. In maximin $\lfloor \textbf{MM}\rceil$, $\lfloor \textbf{MM:R}\rceil$, mover $T_{\theta}$ (or $H_{\omega}$) has  UNet\footnote{\url{github.com/milesial/Pytorch-UNet}} architecture.

Note that we remove the last $\tanh$ layer which is used in DCGAN as it directly contradicts the Kantorovich's duality formula \eqref{ot-dual-1-lip}. The potential $f$ should not be bounded. For example, $\mathbb{P}=\text{Uniform}\big([-a,0]\big)$ and $\mathbb{Q}=\text{Uniform}\big([0,a]\big)$, the optimal potential is $u(x)=-x$. It can not be learned by a net whose outputs are limited to $(-1,1)$ when $|a|>1$. In practice, we found that DCGAN \textbf{with} the output $\tanh$ really fails to learn anything meaningful on our benchmark ($\cos\approx 0$).

\subsection{Optimization}

In all the cases, we use \textbf{Adam} optimizer \cite{kingma2014adam} with $lr=2\cdot 10^{-4}$ and $\beta_{1}=0$, $\beta_{2}=0.9$. Working in high dimensions, we set the batch size to $1024$. In the images case, the batch size is $32$.

Doing preliminary experiments, we noted that even when the \textit{train} loss (which the solver optimizes) converges, the \textit{test} metrics, e.g., cosine similarity, still may continue improving. Thus, we optimize the methods until their \textit{test} $\cos$ metric converges as well. The details are summarized in Table \ref{table:hyperparameters}. Specifically for dimension $D=2$, we increase the number of iterations for each solver $5$ times.

\begin{table}[!h]
\centering
\footnotesize
\begin{tabular}{|c|c|c|}
\hline
\textbf{\textsc{Solver}} & \textbf{\textsc{High-dimensional pairs}} & \textbf{\textsc{Images pairs}} \\
\hline
$\lfloor \textbf{WC} \rceil$ & 5000 iters, c = 0.04 & 5000 iters, c = 0.04\\ \hline  
$\lfloor \textbf{GP} \rceil$ & 40000 iters, $\lambda_{GP}$ = 10   & 20000 iters, $\lambda_{GP}$ = 10\\ \hline     
$\lfloor \textbf{LP} \rceil$ & 40000 iters, $\lambda_{LP}$ = 10& 20000 iters, $\lambda_{LP}$ = 10\\ \hline 
$\lfloor \textbf{SN} \rceil$ &  5000 iters, 5 power iterations \cite{miyato2018spectral}& 5000 iters, 5 power iterations\\ \hline   
$\lfloor \textbf{SO} \rceil$ & 15000 iters, full sort & N / A\\  \hline   
$\lfloor \textbf{LS} \rceil$ & \makecell{10000 iters, $\mathcal{L}^2$ reg.,\\ $\epsilon$ = 0.01 \cite[Eq. 7]{seguy2017large}} & \makecell{10000 iters, $\mathcal{L}^2$ reg.,\\ $\epsilon$ = 0.01  } \\  \hline    
$\lfloor \textbf{MM:B} \rceil$ & 15000 iters & 20000 iters \\ \hline     
$\lfloor \textbf{MM:Bv2} \rceil$ & 15000 iters &20000 iters \\ \hline     
$\lfloor \textbf{MM} \rceil$ & \makecell{15000 iters, 12 steps of $H_{\omega}$ \\per 1 step of $f_{\theta}$.}  & \makecell{10000 iters, 12 steps of $H_{\omega}$ \\per 1 step of $f_{\theta}$.} \\ \hline         
$\lfloor \textbf{MM:R} \rceil$ & \makecell{15000 iters, 12 steps of $T_{\theta}$ \\per 1 step of $g_{\omega}$.} &  \makecell{10000 iters, 12 steps of $T_{\theta}$ \\per 1 step of $g_{\omega}$.}\\  \hline        
\end{tabular}
\vspace{3mm}
\caption{Hyperparameters of the OT solvers.}
\label{table:hyperparameters}
\end{table}

\section{Metrics}
\label{sec-metrics}

\begin{wrapfigure}{r}{0.25\textwidth}
  \vspace{-12mm}
  \begin{center}
    \includegraphics[width=0.99\linewidth]{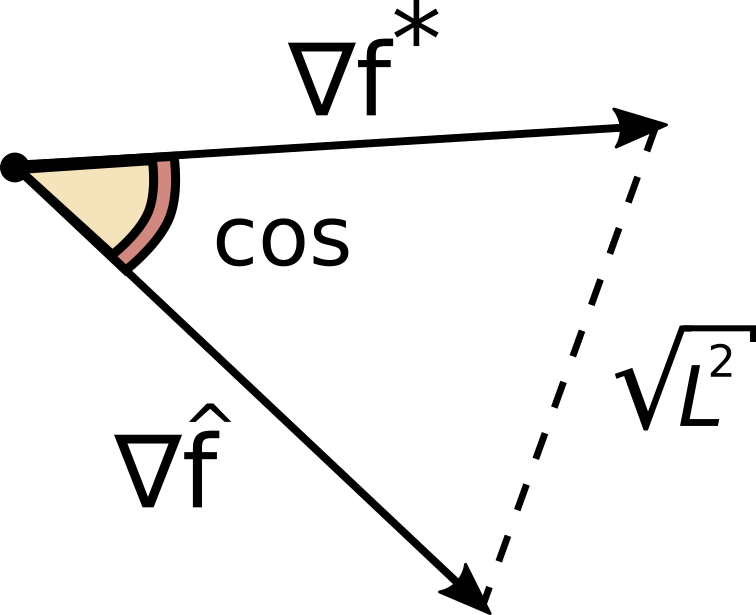}
  \end{center}
  \vspace{-2.5mm}
  \caption{\centering $\text{Cos}$ and $\mathcal{L}^{2}$ show the angle and distance between $\nabla \hat{f}$ and $\nabla f^{*}$ in $\mathcal{L}^{2}(\mathbb{P})$.}
  \label{fig:metrics-def}\vspace{-12mm}
\end{wrapfigure}For all the experiments, we report the metric values obtained after a single random restart. We do not report the results of multiple random restarts. We found that they mostly provide the same observations which do not affect the general trends of performance reported in our paper.

In $\lceil \textbf{DOT}\rfloor$, we compute all the metrics by using $2^{12}$ random samples $X\sim\mathbb{P}$, $Y\sim\mathbb{Q}$. In most neural solvers, we estimate $\mathbb{W}_{1}(\mathbb{P},\mathbb{Q})$  by using 
$$\mathbb{W}_{1}(\mathbb{P},\mathbb{Q})\approx \frac{1}{|X|}\sum_{x\in X}f_{\theta}(x)-\frac{1}{|Y|}\sum_{y\in Y}f_{\theta}(y),$$ where $X\sim\mathbb{P}$, $Y\sim\mathbb{Q}$ are random batches of size $2^{13}$. For simplicity, we omit the regularization terms, e.g., the gradient penalty, lipschitz penalty, etc. In $\lceil \textbf{LS}\rfloor$ and $\lceil \textbf{MM:R}\rfloor$, we use  $$\mathbb{W}_{1}(\mathbb{P},\mathbb{Q})\approx\frac{1}{|X|}\sum_{x\in X}f_{\theta}(x)-\frac{1}{|Y|}\sum_{y\in Y}g_{\omega}(y),\qquad\mathbb{W}_{1}(\mathbb{P},\mathbb{Q})\approx-\frac{1}{|X|}\sum_{x\in X}g_{\omega}(x)+\frac{1}{|Y|}\sum_{y\in Y}g_{\omega}(y).$$
The results for high-dimensional and images pairs are given below.

\vspace{-1mm}\textbf{Technical remark}. Doing preliminary experiments in small dimension $D=2$, we noted that most methods learn reasonable surfaces of the potentials (Figure \ref{fig:surfaces}) but struggle to provide high metrics, see, e.g., Table \ref{table:cos-hd}. The cause of this is a \textit{numerical error}, see below.

By the construction of our benchmark, we get $y\sim\mathbb{Q}$ by moving $x\sim\mathbb{P}$ (the uniform distribution on hypercube $\mathcal{S}$) along its $\text{ray}(x)=[x_{0},x_{1}]$ closer to the center $x_{0}=a_{n}$ of the funnel $u_{m}$ (Proposition \ref{prop-intersection}). We apply the function $t\mapsto t^{p}$ along the ray \eqref{T-def-benchmark}. In small dimensions, centers $a_{n}$ of funnels are sufficiently dense in the hypercube $\mathcal{S}$. As a result, points $x\sim\mathbb{P}$ with high probability are located next to centers of funnels, i.e., $\|x-x_{0}\|_{2}$ is small. In turn, $t=\|x-x_{0}\|_{2}/\|x_{0}-x_{1}\|_{2}$ is also small  and $t^{p}\approx 0$ ($p>1$), i.e., $y=T(x)\approx x_{0}$. Due to the numerical error of handling small numbers, $T(x)$ may output a point which is close to $x_{0}$ (the funnel center) but lies \textbf{not} on the ray $[x_{0},x_{1}]$ or simply the point $x_{0}$ itself. This makes the evaluation of $\nabla u(y)$ problematic. In the first case, it may differ from $\nabla u(x)$ as $y$ fell out of the $\text{ray}(x)$. In the second case, $u$ is not even differentiable at $x_{0}=a_{n}$.

The issue vanishes in higher dimensional pairs  $(D\geq 4)$ since the probability of a point $x\in\mathcal{S}$ to be close to the center of some funnel tends to zero. To fix the issue for $D=2$, one may put smaller $p$ and generate new benchmark pairs.  We use $p=8$ in all high-dimensional pairs for consistency.\vspace{-2mm}


\begin{table}[!h]
\scriptsize
\centering
\begin{tabular}{|c|c|c|c|c|c|c|c|c|c|c|c|c|}
\hline
N\textbackslash Solver & $\lfloor \textbf{WC}\rceil$ & $\lfloor \textbf{GP}\rceil$ & $\lfloor \textbf{LP}\rceil$ & $\lfloor \textbf{SN}\rceil$ &$\lfloor \textbf{LS}\rceil$& $\lfloor \textbf{MM:B}\rceil$ & $\lfloor \textbf{MM:Bv2}\rceil$  &$\lfloor \textbf{MM}\rceil$&$\lfloor \textbf{MM:R}\rceil$& $\lfloor \textbf{DOT}\rceil$ & True\\ \hline
1 & \color{orange}0.25 & \color{ForestGreen}0.94 & \color{ForestGreen}0.94 &  \color{orange}0.36 & \color{LimeGreen}0.60 & \color{LimeGreen}0.59 & \color{LimeGreen}0.50 & \color{red}0.11  & \color{ForestGreen}0.95 & \color{LimeGreen}0.78 & 1.00\\ \hline
16 &\color{orange}0.33 & \color{ForestGreen}0.92 &\color{ForestGreen}0.92 & \color{orange}0.35 & \color{orange}0.36 & \color{orange}0.38 & \color{red}0.09 & \color{orange}0. 46& \color{ForestGreen}0.91& \color{LimeGreen}0.55 & 1.00 \\ \hline
\end{tabular}
\vspace{2mm}
\caption{\centering 
$\text{Cos}\!\uparrow$ metric values of the OT gradient estimated by OT solvers on our \textsc{\textbf{Cifar-10 images}} benchmark pairs $(\mathbb{P},\mathbb{Q})$, dimension $D=3072$. Colors indicate the metric value: \protect\linebreak $\cos>{\color{ForestGreen}0.85}$, $\cos\in {\color{LimeGreen}(0.5,0.85]}$,
$\cos\in {\color{orange}(0.15,0.5]}$,
$\cos\leq {\color{red}0.15}$.}
\label{table:cos-cifar-images}\vspace{-5mm}
\end{table}
\begin{table}[!h]
\centering
\scriptsize
\begin{tabular}{|c|c|c|c|c|c|c|c|c|c|c|c|c|}
\hline
N\textbackslash Solver & $\lfloor \textbf{WC}\rceil$ & $\lfloor \textbf{GP}\rceil$ & $\lfloor \textbf{LP}\rceil$ & $\lfloor \textbf{SN}\rceil$ &$\lfloor \textbf{LS}\rceil$& $\lfloor \textbf{MM:B}\rceil$ & $\lfloor \textbf{MM:Bv2}\rceil$  &$\lfloor \textbf{MM}\rceil$&$\lfloor \textbf{MM:R}\rceil$& $\lfloor \textbf{DOT}\rceil$ & True\\ \hline
1 & \color{red}$\gg$10 & \color{red}1.79 & \color{red}1.70 & \color{red}2.60 & \color{orange}0.94 & \color{orange}1.04 & \color{red}1.74 & \color{red}$\gg$10  & \color{ForestGreen}0.08 & \color{LimeGreen}0.45 & 0.00\\ \hline
16 & \color{red}$\gg$10  &  \color{red}1.34 & \color{orange}1.12 & \color{red}3.32 &  \color{red}1.87 &  \color{red}1.28 & \color{red}$\gg$10 & \color{red}$\gg$10 & \color{ForestGreen}0.16 & \color{orange}0.95 & 0.00 \\ \hline
\end{tabular}
\vspace{2mm}
\caption{\centering 
$\mathcal{L}^{2}\!\downarrow$ metric values of the OT gradient estimated by OT solvers on our \textsc{\textbf{Cifar-10 images}} benchmark pairs $(\mathbb{P},\mathbb{Q})$, dimension $D=3072$. Colors indicate the metric value:\protect\linebreak $\mathcal{L}^{2}<{\color{ForestGreen}0.25}$, $\mathcal{L}^{2}\in {\color{LimeGreen}[0.25,0.65)}$,
$\mathcal{L}^{2}\in {\color{orange}[0.65,1.2)}$,
$\mathcal{L}^{2}\geq {\color{red}1.2}$.}
\label{table:l2-cifar-images}\vspace{-5mm}
\end{table}
\begin{table}[!h]
\centering
\scriptsize
\begin{tabular}{|c|c|c|c|c|c|c|c|c|c|c|c|c|}
\hline
N\textbackslash Solver & $\lfloor \textbf{WC}\rceil$ & $\lfloor \textbf{GP}\rceil$ & $\lfloor \textbf{LP}\rceil$ & $\lfloor \textbf{SN}\rceil$ &$\lfloor \textbf{LS}\rceil$& $\lfloor \textbf{MM:B}\rceil$ & $\lfloor \textbf{MM:Bv2}\rceil$  &$\lfloor \textbf{MM}\rceil$&$\lfloor \textbf{MM:R}\rceil$& $\lfloor \textbf{DOT}\rceil$ & True\\ \hline
1 &  \color{red}$\gg$100 & \color{red}66.25 & \color{red}66.79 &  \color{orange}20.44 &   \color{orange}17.00 &  \color{orange}15.90 &    \color{LimeGreen}35.26 &  \color{LimeGreen}35.10  &   \color{LimeGreen}24.29 & \color{ForestGreen}31.16 & 29.66\\ \hline
16 &  \color{red}$\gg$100 &   \color{red}36.16 &  \color{red}36.75 &   \color{LimeGreen}13.26 &   \color{red}5.54 &   \color{red}4.21 &   \color{LimeGreen}24.36 &  \color{red}$\gg$100 &  \color{LimeGreen}23.46 &   \color{orange}25.34 & 18.82 \\ \hline
\end{tabular}
\vspace{2mm}
\caption{\centering 
$\mathbb{W}_{1}$ metric values estimated by OT solvers on our \textsc{\textbf{Cifar-10 images}} benchmark pairs $(\mathbb{P},\mathbb{Q})$, dimension $D=3072$. Colors indicate the value of the relative deviation of $\widehat{\mathbb{W}}_{1}$ from $\mathbb{W}_{1}$, i.e., $\text{dev}\stackrel{def}{=}100\%\cdot \frac{|\mathbb{W}_{1}-\widehat{\mathbb{W}}_{1}|}{\mathbb{W}_{1}}$: $\text{dev}<{\color{ForestGreen}15\%}$, $\text{dev}\in {\color{LimeGreen}[15,30\%)}$,
$\text{dev}\in {\color{orange}[30,50)\%}$,
$\text{dev}\geq {\color{red}50\%}$}
\label{table:w1-cifar-images}\vspace{-5mm}
\end{table}

\begin{table}[!h]
\scriptsize
\centering
\begin{tabular}{|c|c|c|c|c|c|c|c|c|c|c|c|c|}
\hline
N\textbackslash Solver & $\lfloor \textbf{WC}\rceil$ & $\lfloor \textbf{GP}\rceil$ & $\lfloor \textbf{LP}\rceil$ & $\lfloor \textbf{SN}\rceil$ &$\lfloor \textbf{LS}\rceil$& $\lfloor \textbf{MM:B}\rceil$ & $\lfloor \textbf{MM:Bv2}\rceil$  &$\lfloor \textbf{MM}\rceil$&$\lfloor \textbf{MM:R}\rceil$& $\lfloor \textbf{DOT}\rceil$ & True\\ \hline
1 & \color{orange}0.35 & \color{ForestGreen}0.96 & \color{ForestGreen}0.95 &  \color{orange}0.36 & \color{orange}0.43 & \color{orange}0.43 & \color{orange}0.32 & \color{red}0.01  & \color{ForestGreen}0.97 & \color{LimeGreen}0.64 & 1.00\\ \hline
16 &\color{orange}0.48 & \color{ForestGreen}0.92 &\color{ForestGreen}0.92 & \color{orange}0.42 & \color{red}0.11 & \color{red}0.14 & \color{red}0.01 & \color{orange}0.20 & \color{ForestGreen}0.92& \color{orange}0.25 & 1.00 \\ \hline
\end{tabular}
\vspace{2mm}
\caption{\centering $\text{Cos}\!\uparrow$ metric values of the OT gradient estimated by OT solvers on our \textsc{\textbf{Celeba images}} benchmark pairs $(\mathbb{P},\mathbb{Q})$, dimension $D=12288$. Colors indicate the metric value: \protect\linebreak $\cos>{\color{ForestGreen}0.85}$, $\cos\in {\color{LimeGreen}(0.5,0.85]}$,
$\cos\in {\color{orange}(0.15,0.5]}$,
$\cos\leq {\color{red}0.15}$.}
\label{table:cos-images}\vspace{-5mm}
\end{table}
\begin{table}[!h]\vspace{-2mm}
\centering
\scriptsize
\begin{tabular}{|c|c|c|c|c|c|c|c|c|c|c|c|c|}
\hline
N\textbackslash Solver & $\lfloor \textbf{WC}\rceil$ & $\lfloor \textbf{GP}\rceil$ & $\lfloor \textbf{LP}\rceil$ & $\lfloor \textbf{SN}\rceil$ &$\lfloor \textbf{LS}\rceil$& $\lfloor \textbf{MM:B}\rceil$ & $\lfloor \textbf{MM:Bv2}\rceil$  &$\lfloor \textbf{MM}\rceil$&$\lfloor \textbf{MM:R}\rceil$& $\lfloor \textbf{DOT}\rceil$ & True\\ \hline
1 & \color{red}$\gg$10 & \color{red}7.69 & \color{red}7.80 & \color{red}55.50 & \color{red}2.16 & \color{red}1.52 & \color{red}3.48 & \color{red}$\gg$10  & \color{ForestGreen}0.04 & \color{orange}0.73 & 0.00\\ \hline
16 & \color{red}$\gg$10  &  \color{red}2.12 & \color{red}2.33 & \color{red}28.93 &  \color{red}3.79 &  \color{red}2.10 & \color{red}$\gg$10 & \color{red}$\gg$10 & \color{ForestGreen}0.16 & \color{red}1.43 & 0.00 \\ \hline
\end{tabular}
\vspace{2mm}
\caption{\centering $\mathcal{L}^{2}\!\downarrow$ metric values of the OT gradient estimated by OT solvers on our \textsc{\textbf{CelebA images}} benchmark pairs $(\mathbb{P},\mathbb{Q})$, dimension $D=12288$. Colors indicate the metric value:\protect\linebreak $\mathcal{L}^{2}<{\color{ForestGreen}0.25}$, $\mathcal{L}^{2}\in {\color{LimeGreen}[0.25,0.65)}$,
$\mathcal{L}^{2}\in {\color{orange}[0.65,1.2)}$,
$\mathcal{L}^{2}\geq {\color{red}1.2}$.}
\label{table:l2-images}\vspace{-5mm}
\end{table}
\begin{table}[!h]
\centering
\scriptsize
\begin{tabular}{|c|c|c|c|c|c|c|c|c|c|c|c|c|}
\hline
N\textbackslash Solver & $\lfloor \textbf{WC}\rceil$ & $\lfloor \textbf{GP}\rceil$ & $\lfloor \textbf{LP}\rceil$ & $\lfloor \textbf{SN}\rceil$ &$\lfloor \textbf{LS}\rceil$& $\lfloor \textbf{MM:B}\rceil$ & $\lfloor \textbf{MM:Bv2}\rceil$  &$\lfloor \textbf{MM}\rceil$&$\lfloor \textbf{MM:R}\rceil$& $\lfloor \textbf{DOT}\rceil$ & True\\ \hline
1 &  \color{red}$\gg$100 & \color{red}212.07 & \color{red}211.62 &  \color{red}178.77 &   \color{orange}33.22 &  \color{red}15.17 &    \color{orange}86.06 &  \color{red}$\gg$100  &   \color{orange}37.92 & \color{ForestGreen}66.03 & 58.24\\ \hline
16 &  \color{red}$\gg$100 &   \color{red}64.38 &  \color{red}66.01 &   \color{red}78.77 &   \color{red}3.15 &   \color{red}2.60 &   \color{red}51.05 &  \color{red}$\gg$100 &  \color{orange}41.24 &   \color{red}53.54 & 29.78 \\ \hline
\end{tabular}
\vspace{2mm}
\caption{\centering $\mathbb{W}_{1}$ metric values estimated by OT solvers on our \textsc{\textbf{CelebA images}} benchmark pairs $(\mathbb{P},\mathbb{Q})$, dimension $D=12288$. Colors indicate the value of the relative deviation of $\widehat{\mathbb{W}}_{1}$ from $\mathbb{W}_{1}$, i.e., $\text{dev}\stackrel{def}{=}100\%\cdot \frac{|\mathbb{W}_{1}-\widehat{\mathbb{W}}_{1}|}{\mathbb{W}_{1}}$: $\text{dev}<{\color{ForestGreen}15\%}$, $\text{dev}\in {\color{LimeGreen}[15,30\%)}$,
$\text{dev}\in {\color{orange}[30,50)\%}$,
$\text{dev}\geq {\color{red}50\%}$}
\label{table:w1-images}\vspace{-5mm}
\end{table}

\begin{table}[!h]
\vspace{-3mm}
\centering
\footnotesize
\begin{subtable}{0.42\linewidth}
\centering
\begin{tabular}{|c|c|c|c|c|}
\hline
D\textbackslash N &              \bf{4} &             \bf{16} &             \bf{64} &            \bf{256} \\   \hline    
\bf{2}   &     \color{red}0.07 &     \color{red}0.07 &     \color{red}0.07 &     \color{red}0.01 \\ \hline
\bf{4}   &  \color{orange}0.27 &     \color{red}0.09 &     \color{red}0.09 &     \color{red}0.04 \\\hline
\bf{8}   &  \color{orange}0.34 &  \color{orange}0.21 &     \color{red}0.12 &      \color{red}0.1 \\\hline
\bf{16}  &  \color{orange}0.33 &  \color{orange}0.24 &     \color{red}0.15 &  \color{orange}0.16 \\\hline
\bf{32}  &   \color{orange}0.4 &   \color{orange}0.2 &  \color{orange}0.18 &  \color{orange}0.17 \\\hline
\bf{64}  &  \color{orange}0.39 &   \color{orange}0.2 &     \color{red}0.13 &     \color{red}0.14 \\\hline
\bf{128} &  \color{orange}0.36 &  \color{orange}0.21 &     \color{red}0.14 &     \color{red}0.11 \\
\hline
\end{tabular}
\caption{\label{table:cos-wc}\centering $\lfloor \textbf{WC}\rceil$}
\end{subtable}
\begin{subtable}{0.42\linewidth}
\centering
\begin{tabular}{|c|c|c|c|c|} 
 \hline
D\textbackslash N &                   \bf{4} &                  \bf{16} &                \bf{64} &               \bf{256} \\\hline
\bf{2}   &    \color{LimeGreen}0.84 &    \color{LimeGreen}0.68 &     \color{orange}0.35 &     \color{orange}0.19 \\ \hline
\bf{4}   &  \color{ForestGreen}0.95 &  \color{ForestGreen}0.86 &  \color{LimeGreen}0.67 &     \color{orange}0.37 \\\hline
\bf{8}   &  \color{ForestGreen}0.96 &  \color{ForestGreen}0.92 &  \color{LimeGreen}0.82 &  \color{LimeGreen}0.59 \\\hline
\bf{16}  &  \color{ForestGreen}0.96 &  \color{ForestGreen}0.91 &  \color{LimeGreen}0.78 &  \color{LimeGreen}0.61 \\\hline
\bf{32}  &  \color{ForestGreen}0.95 &   \color{ForestGreen}0.9 &  \color{LimeGreen}0.73 &  \color{LimeGreen}0.56 \\\hline
\bf{64}  &  \color{ForestGreen}0.92 &    \color{LimeGreen}0.75 &  \color{LimeGreen}0.61 &  \color{LimeGreen}0.53 \\\hline
\bf{128} &  \color{ForestGreen}0.92 &     \color{LimeGreen}0.8 &  \color{LimeGreen}0.64 &  \color{LimeGreen}0.56 \\\hline
\end{tabular}
\caption{\label{table:cos-gp}\centering $\lfloor \textbf{GP}\rceil$}
\end{subtable}
\begin{subtable}{0.42\linewidth}
\centering
\begin{tabular}{|c|c|c|c|c|}
\hline
D\textbackslash N &                   \bf{4} &                  \bf{16} &                \bf{64} &               \bf{256} \\ \hline
\bf{2}   &    \color{LimeGreen}0.82 &    \color{LimeGreen}0.81 &  \color{LimeGreen}0.68 &     \color{orange}0.49 \\ \hline
\bf{4}   &  \color{ForestGreen}0.95 &   \color{ForestGreen}0.9 &  \color{LimeGreen}0.82 &  \color{LimeGreen}0.66 \\\hline
\bf{8}   &  \color{ForestGreen}0.96 &  \color{ForestGreen}0.92 &  \color{LimeGreen}0.83 &  \color{LimeGreen}0.68 \\\hline
\bf{16}  &  \color{ForestGreen}0.96 &  \color{ForestGreen}0.92 &  \color{LimeGreen}0.78 &   \color{LimeGreen}0.6 \\\hline
\bf{32}  &  \color{ForestGreen}0.95 &   \color{ForestGreen}0.9 &  \color{LimeGreen}0.73 &  \color{LimeGreen}0.55 \\\hline
\bf{64}  &  \color{ForestGreen}0.92 &    \color{LimeGreen}0.76 &   \color{LimeGreen}0.6 &  \color{LimeGreen}0.53 \\\hline
\bf{128} &  \color{ForestGreen}0.93 &    \color{LimeGreen}0.75 &  \color{LimeGreen}0.62 &  \color{LimeGreen}0.56 \\\hline
\end{tabular}
\caption{\label{table:cos-lp}\centering $\lfloor \textbf{LP}\rceil$}
\end{subtable}
\begin{subtable}{0.42\linewidth}
\centering
\begin{tabular}{|c|c|c|c|c|}
\hline
D\textbackslash N &              \bf{4} &             \bf{16} &             \bf{64} &            \bf{256} \\\hline
\bf{2}   &  \color{orange}0.21 &  \color{orange}0.16 &     \color{red}0.07 &     \color{red}0.04 \\ \hline
\bf{4}   &  \color{orange}0.36 &  \color{orange}0.24 &     \color{red}0.15 &      \color{red}0.1 \\\hline
\bf{8}   &  \color{orange}0.35 &  \color{orange}0.26 &  \color{orange}0.19 &     \color{red}0.13 \\\hline
\bf{16}  &  \color{orange}0.35 &  \color{orange}0.25 &  \color{orange}0.16 &  \color{orange}0.18 \\\hline
\bf{32}  &  \color{orange}0.42 &  \color{orange}0.22 &     \color{red}0.14 &     \color{red}0.14 \\\hline
\bf{64}  &  \color{orange}0.43 &  \color{orange}0.22 &     \color{red}0.12 &     \color{red}0.11 \\\hline
\bf{128} &  \color{orange}0.37 &  \color{orange}0.23 &     \color{red}0.14 &     \color{red}0.09 \\\hline
\end{tabular}
\caption{\label{table:cos-sn}\centering $\lfloor \textbf{SN}\rceil$}
\end{subtable}
\begin{subtable}{0.42\linewidth}
\centering
\begin{tabular}{|c|c|c|c|c|}
\hline
D\textbackslash N &                   \bf{4} &                  \bf{16} &                \bf{64} &               \bf{256} \\ \hline
\bf{2}   &    \color{LimeGreen}0.77 &    \color{LimeGreen}0.69 &  \color{LimeGreen}0.66 &  \color{LimeGreen}0.55 \\
\bf{4}   &  \color{ForestGreen}0.91 &    \color{LimeGreen}0.73 &      \color{orange}0.5 &     \color{orange}0.31 \\\hline
\bf{8}   &  \color{ForestGreen}0.89 &    \color{LimeGreen}0.75 &     \color{orange}0.47 &     \color{orange}0.26 \\\hline
\bf{16}  &  \color{ForestGreen}0.97 &    \color{LimeGreen}0.85 &  \color{LimeGreen}0.57 &      \color{orange}0.4 \\\hline
\bf{32}  &  \color{ForestGreen}0.98 &  \color{ForestGreen}0.94 &  \color{LimeGreen}0.74 &  \color{LimeGreen}0.53 \\\hline
\bf{64}  &  \color{ForestGreen}0.96 &  \color{ForestGreen}0.95 &  \color{LimeGreen}0.83 &  \color{LimeGreen}0.66 \\\hline
\bf{128} &  \color{ForestGreen}0.93 &  \color{ForestGreen}0.92 &  \color{LimeGreen}0.64 &  \color{LimeGreen}0.61 \\
\hline
\end{tabular}
\caption{\label{table:cos-so}\centering $\lfloor \textbf{SO}\rceil$}
\end{subtable}
\begin{subtable}{0.42\linewidth}
\centering
\begin{tabular}{|c|c|c|c|c|}
\hline
D\textbackslash N&                 \bf{4} &             \bf{16} &           \bf{64} &          \bf{256} \\\hline 
\bf{2}   &  \color{LimeGreen}0.56 &  \color{orange}0.44 &   \color{red}0.11 &   \color{red}0.05 \\ \hline
\bf{4}   &   \color{LimeGreen}0.7 &  \color{orange}0.38 &   \color{red}0.12 &    \color{red}0.0 \\\hline
\bf{8}   &     \color{orange}0.36 &  \color{orange}0.19 &  \color{red}-0.07 &  \color{red}-0.13 \\\hline
\bf{16}  &       \color{red}-0.16 &    \color{red}-0.23 &  \color{red}-0.24 &  \color{red}-0.24 \\\hline
\bf{32}  &       \color{red}-0.36 &    \color{red}-0.42 &  \color{red}-0.42 &  \color{red}-0.38 \\\hline
\bf{64}  &       \color{red}-0.54 &    \color{red}-0.48 &  \color{red}-0.47 &  \color{red}-0.45 \\\hline
\bf{128} &       \color{red}-0.53 &    \color{red}-0.55 &  \color{red}-0.53 &  \color{red}-0.51 \\
\hline
\end{tabular}
\caption{\label{table:cos-ls}\centering $\lfloor \textbf{LS}\rceil$}
\end{subtable}
\begin{subtable}{0.42\linewidth}
\centering
\begin{tabular}{|c|c|c|c|c|}
\hline
D\textbackslash N &                   \bf{4} &                  \bf{16} &                \bf{64} &               \bf{256} \\ \hline
\bf{2}   &  \color{ForestGreen}0.86 &    \color{LimeGreen}0.83 &  \color{LimeGreen}0.65 &     \color{orange}0.42 \\ \hline
\bf{4}   &  \color{ForestGreen}0.91 &  \color{ForestGreen}0.87 &  \color{LimeGreen}0.75 &  \color{LimeGreen}0.52 \\ \hline
\bf{8}   &    \color{LimeGreen}0.75 &     \color{LimeGreen}0.7 &  \color{LimeGreen}0.52 &     \color{orange}0.26 \\ \hline
\bf{16}  &       \color{orange}0.23 &          \color{red}0.09 &       \color{red}-0.03 &       \color{red}-0.18 \\ \hline
\bf{32}  &         \color{red}-0.27 &         \color{red}-0.37 &        \color{red}-0.4 &       \color{red}-0.38 \\ \hline
\bf{64}  &         \color{red}-0.53 &         \color{red}-0.49 &       \color{red}-0.48 &       \color{red}-0.47 \\ \hline
\bf{128} &         \color{red}-0.57 &         \color{red}-0.57 &       \color{red}-0.55 &       \color{red}-0.54 \\ \hline
\end{tabular}
\caption{\label{table:cos-qp}\centering $\lfloor \textbf{MM:B}\rceil$}
\end{subtable}
\begin{subtable}{0.42\linewidth}
\centering
\begin{tabular}{|c|c|c|c|c|}
\hline
D\textbackslash N &                   \bf{4} &                  \bf{16} &                \bf{64} &               \bf{256} \\ \hline
\bf{2}   &    \color{LimeGreen}0.85 &     \color{LimeGreen}0.8 &  \color{LimeGreen}0.63 &     \color{orange}0.48 \\ \hline
\bf{4}   &  \color{ForestGreen}0.96 &   \color{ForestGreen}0.9 &  \color{LimeGreen}0.78 &  \color{LimeGreen}0.55 \\\hline
\bf{8}   &  \color{ForestGreen}0.94 &  \color{ForestGreen}0.87 &  \color{LimeGreen}0.69 &      \color{orange}0.5 \\\hline
\bf{16}  &    \color{LimeGreen}0.83 &    \color{LimeGreen}0.69 &  \color{LimeGreen}0.58 &     \color{orange}0.46 \\\hline
\bf{32}  &    \color{LimeGreen}0.56 &    \color{LimeGreen}0.61 &  \color{LimeGreen}0.52 &     \color{orange}0.42 \\\hline
\bf{64}  &    \color{LimeGreen}0.54 &     \color{LimeGreen}0.6 &  \color{LimeGreen}0.51 &     \color{orange}0.46 \\\hline
\bf{128} &        \color{orange}0.5 &       \color{orange}0.47 &     \color{orange}0.47 &     \color{orange}0.45 \\\hline
\end{tabular}
\caption{\label{table:cos-cowgan}\centering $\lfloor \textbf{MM:Bv2}\rceil$}
\end{subtable}
\begin{subtable}{0.42\linewidth}
\centering
\begin{tabular}{|c|c|c|c|c|}
\hline
D\textbackslash N &                   \bf{4} &                \bf{16} &                \bf{64} &               \bf{256} \\ \hline
\bf{2}   &     \color{LimeGreen}0.8 &  \color{LimeGreen}0.75 &  \color{LimeGreen}0.58 &     \color{orange}0.44 \\ \hline
\bf{4}   &  \color{ForestGreen}0.91 &  \color{LimeGreen}0.74 &  \color{LimeGreen}0.58 &  \color{LimeGreen}0.56 \\\hline
\bf{8}   &  \color{ForestGreen}0.95 &  \color{LimeGreen}0.74 &  \color{LimeGreen}0.52 &  \color{LimeGreen}0.54 \\\hline
\bf{16}  &  \color{ForestGreen}0.94 &  \color{LimeGreen}0.83 &  \color{LimeGreen}0.62 &     \color{orange}0.44 \\\hline
\bf{32}  &  \color{ForestGreen}0.89 &  \color{LimeGreen}0.78 &  \color{LimeGreen}0.58 &     \color{orange}0.36 \\\hline
\bf{64}  &    \color{LimeGreen}0.66 &     \color{orange}0.26 &     \color{orange}0.46 &     \color{orange}0.46 \\\hline
\bf{128} &       \color{orange}0.49 &  \color{LimeGreen}0.61 &     \color{orange}0.24 &  \color{LimeGreen}0.53 \\\hline

\end{tabular}
\caption{\label{table:cos-mm}\centering $\lfloor \textbf{MM}\rceil$}
\end{subtable}
\begin{subtable}{0.42\linewidth}
\centering
\begin{tabular}{|c|c|c|c|c|}
\hline
D\textbackslash N &                   \bf{4} &                  \bf{16} &                \bf{64} &               \bf{256} \\ \hline
\bf{2}   &    \color{LimeGreen}0.68 &    \color{LimeGreen}0.62 &     \color{orange}0.46 &     \color{orange}0.29 \\ \hline

\bf{4}   &  \color{ForestGreen}0.92 &     \color{LimeGreen}0.8 &  \color{LimeGreen}0.64 &     \color{orange}0.45 \\\hline
\bf{8}   &  \color{ForestGreen}0.97 &   \color{ForestGreen}0.9 &  \color{LimeGreen}0.74 &  \color{LimeGreen}0.56 \\\hline
\bf{16}  &  \color{ForestGreen}0.96 &  \color{ForestGreen}0.92 &  \color{LimeGreen}0.73 &  \color{LimeGreen}0.53 \\\hline
\bf{32}  &  \color{ForestGreen}0.94 &  \color{ForestGreen}0.87 &  \color{LimeGreen}0.73 &     \color{orange}0.47 \\\hline
\bf{64}  &    \color{LimeGreen}0.68 &    \color{LimeGreen}0.54 &  \color{LimeGreen}0.64 &      \color{orange}0.5 \\\hline
\bf{128} &    \color{LimeGreen}0.51 &    \color{LimeGreen}0.51 &     \color{orange}0.39 &     \color{orange}0.31 \\
\hline
\end{tabular}
\caption{\label{table:cos-mmr}\centering $\lfloor \textbf{MM:R}\rceil$}
\end{subtable}
\begin{subtable}{0.42\linewidth}
\centering
\begin{tabular}{|c|c|c|c|c|}
\hline
D\textbackslash N &   \bf{4}  &   \bf{16}  &   \bf{64}  &   \bf{256} \\ \hline
\bf{2}   &  \color{ForestGreen}0.86 &  \color{LimeGreen}0.84 &  \color{LimeGreen}0.72 &  \color{LimeGreen}0.52 \\\hline
\bf{4}   &    \color{LimeGreen}0.84 &  \color{LimeGreen}0.77 &  \color{LimeGreen}0.66 &     \color{orange}0.47 \\ \hline
\bf{8}   &    \color{LimeGreen}0.56 &     \color{orange}0.47 &     \color{orange}0.33 &      \color{orange}0.2 \\ \hline
\bf{16}  &       \color{orange}0.18 &        \color{red}0.12 &        \color{red}0.06 &         \color{red}0.0 \\ \hline
\bf{32}  &         \color{red}-0.09 &       \color{red}-0.13 &       \color{red}-0.15 &       \color{red}-0.15 \\ \hline
\bf{64}  &         \color{red}-0.28 &       \color{red}-0.26 &       \color{red}-0.26 &       \color{red}-0.26 \\ \hline
\bf{128} &         \color{red}-0.35 &       \color{red}-0.36 &       \color{red}-0.35 &       \color{red}-0.34 \\ \hline
\end{tabular}
\caption{\label{table:cos-dot}\centering $\lfloor \textbf{DOT}\rceil$}
\end{subtable}
\begin{subtable}{0.42\linewidth}
\centering
\begin{tabular}{|c|c|c|c|c|}
\hline
D\textbackslash N & \bf{4}   & \bf{16}  & \bf{64}  & \bf{256} \\ \hline
\bf{2}   &   1.00 &   1.00 &   1.00 &   1.00 \\\hline
\bf{4}   &   1.00 &   1.00 &   1.00 &   1.00 \\\hline
\bf{8}   &   1.00 &   1.00 &   1.00 &   1.00 \\\hline
\bf{16}  &   1.00 &   1.00 &   1.00 &   1.00 \\\hline
\bf{32}  &   1.00 &   1.00 &   1.00 &   1.00 \\\hline
\bf{64}  &   1.00 &   1.00 &   1.00 &   1.00 \\\hline
\bf{128} &   1.00 &   1.00 &   1.00 &   1.00 \\
\hline
\end{tabular}
\caption{\label{table:cos-true}\centering Ground truth}
\end{subtable}

\vspace{-1mm}\caption{\centering $\text{Cos}\!\uparrow$ metric values of the OT gradient estimated by OT solvers on our \textsc{\textbf{high-dimensional}} benchmark pairs $(\mathbb{P},\mathbb{Q})$. Colors indicate the metric value: \protect\linebreak $\cos>{\color{ForestGreen}0.85}$, $\cos\in {\color{LimeGreen}(0.5,0.85]}$,
$\cos\in {\color{orange}(0.15,0.5]}$,
$\cos\leq {\color{red}0.15}$.}
\label{table:cos-hd}
\end{table}

\newpage

\begin{table}[!t]
\vspace{-3mm}
\centering
\footnotesize
\begin{subtable}{0.42\linewidth}
\centering
\begin{tabular}{|c|c|c|c|c|}
\hline
D\textbackslash N &              \bf{4} &             \bf{16} &             \bf{64} &            \bf{256} \\ \hline
\bf{2}   &  \color{orange}0.85 &  \color{orange}0.89 &  \color{orange}0.85 &  \color{orange}0.82 \\ \hline
\bf{4}   &  \color{orange}1.14 &  \color{orange}1.09 &  \color{orange}1.15 &  \color{orange}1.08 \\\hline
\bf{8}   &     \color{red}2.02 &     \color{red}2.44 &     \color{red}1.98 &     \color{red}1.59 \\\hline
\bf{16}  &      \color{red}6.9 &      \color{red}9.5 &     \color{red}3.42 &     \color{red}1.77 \\\hline
\bf{32}  &    \color{red}12.35 &    \color{red}10.05 &     \color{red}4.42 &     \color{red}2.45 \\\hline
\bf{64}  &    \color{red}26.25 &    \color{red}22.06 &     \color{red}12.8 &     \color{red}5.35 \\\hline
\bf{128} &    \color{red}556.6 &   \color{red}568.87 &    \color{red}346.4 &   \color{red}175.97 \\\hline
\end{tabular}
\caption{\label{table:l2-wc}\centering $\lfloor \textbf{WC}\rceil$}
\end{subtable}
\begin{subtable}{0.42\linewidth}
\centering
\begin{tabular}{|c|c|c|c|c|}
\hline
D\textbackslash N &                   \bf{4} &                  \bf{16} &                \bf{64} &            \bf{256} \\\hline
\bf{2}   &  \color{ForestGreen}0.09 &    \color{LimeGreen}0.45 &     \color{orange}1.08 &      \color{red}1.4 \\ \hline

\bf{4}   &  \color{ForestGreen}0.09 &  \color{ForestGreen}0.24 &   \color{LimeGreen}0.6 &     \color{red}1.21 \\\hline
\bf{8}   &  \color{ForestGreen}0.09 &  \color{ForestGreen}0.18 &  \color{LimeGreen}0.41 &  \color{orange}0.82 \\\hline
\bf{16}  &  \color{ForestGreen}0.09 &   \color{ForestGreen}0.2 &  \color{LimeGreen}0.46 &  \color{orange}0.83 \\\hline
\bf{32}  &  \color{ForestGreen}0.11 &  \color{ForestGreen}0.24 &  \color{LimeGreen}0.58 &  \color{orange}0.92 \\\hline
\bf{64}  &  \color{ForestGreen}0.18 &    \color{LimeGreen}0.53 &     \color{orange}0.82 &  \color{orange}0.96 \\\hline
\bf{128} &  \color{ForestGreen}0.17 &    \color{LimeGreen}0.44 &     \color{orange}0.74 &  \color{orange}0.89 \\\hline
\end{tabular}
\caption{\label{table:l2-gp}\centering $\lfloor \textbf{GP}\rceil$}
\end{subtable}
\begin{subtable}{0.42\linewidth}
\centering
\begin{tabular}{|c|c|c|c|c|}
\hline
D\textbackslash N &                   \bf{4} &                  \bf{16} &                \bf{64} &               \bf{256} \\\hline
\bf{2}   &  \color{ForestGreen}0.17 &  \color{ForestGreen}0.15 &  \color{LimeGreen}0.34 &  \color{LimeGreen}0.63 \\ \hline
\bf{4}   &  \color{ForestGreen}0.09 &  \color{ForestGreen}0.17 &   \color{LimeGreen}0.3 &  \color{LimeGreen}0.54 \\\hline
\bf{8}   &   \color{ForestGreen}0.1 &  \color{ForestGreen}0.17 &  \color{LimeGreen}0.35 &  \color{LimeGreen}0.62 \\\hline
\bf{16}  &  \color{ForestGreen}0.09 &  \color{ForestGreen}0.19 &  \color{LimeGreen}0.46 &     \color{orange}0.79 \\\hline
\bf{32}  &  \color{ForestGreen}0.12 &  \color{ForestGreen}0.23 &  \color{LimeGreen}0.56 &     \color{orange}0.92 \\\hline
\bf{64}  &  \color{ForestGreen}0.17 &    \color{LimeGreen}0.52 &     \color{orange}0.83 &     \color{orange}0.98 \\\hline
\bf{128} &  \color{ForestGreen}0.16 &    \color{LimeGreen}0.52 &     \color{orange}0.77 &      \color{orange}0.9 \\\hline
\end{tabular}
\caption{\label{table:l2-lp}\centering $\lfloor \textbf{LP}\rceil$}
\end{subtable}
\begin{subtable}{0.42\linewidth}
\centering
\begin{tabular}{|c|c|c|c|c|}
\hline
D\textbackslash N &              \bf{4} &             \bf{16} &             \bf{64} &            \bf{256} \\ \hline
\bf{2}   &  \color{orange}0.74 &  \color{orange}0.84 &  \color{orange}0.82 &  \color{orange}0.79 \\ \hline
\bf{4}   &  \color{orange}0.85 &  \color{orange}0.93 &  \color{orange}1.02 &  \color{orange}0.99 \\\hline
\bf{8}   &  \color{orange}1.08 &  \color{orange}1.16 &  \color{orange}1.11 &   \color{orange}1.1 \\\hline
\bf{16}  &     \color{red}1.23 &     \color{red}1.35 &     \color{red}1.32 &  \color{orange}1.12 \\\hline
\bf{32}  &  \color{orange}1.16 &     \color{red}1.54 &     \color{red}1.54 &     \color{red}1.27 \\\hline
\bf{64}  &  \color{orange}1.13 &     \color{red}1.56 &     \color{red}1.61 &     \color{red}1.39 \\\hline
\bf{128} &     \color{red}1.25 &     \color{red}1.53 &      \color{red}1.7 &     \color{red}1.57 \\\hline
\end{tabular}
\caption{\label{table:l2-sn}\centering $\lfloor \textbf{SN}\rceil$}
\end{subtable}
\begin{subtable}{0.42\linewidth}
\centering
\begin{tabular}{|c|c|c|c|c|}
\hline
D\textbackslash N &                   \bf{4} &                  \bf{16} &                \bf{64} &               \bf{256} \\\hline
\bf{2}   &  \color{ForestGreen}0.21 &    \color{LimeGreen}0.32 &  \color{LimeGreen}0.45 &  \color{LimeGreen}0.56 \\ \hline
\bf{4}   &  \color{ForestGreen}0.13 &    \color{LimeGreen}0.44 &      \color{orange}0.7 &     \color{orange}0.87 \\\hline
\bf{8}   &  \color{ForestGreen}0.18 &    \color{LimeGreen}0.43 &     \color{orange}0.83 &        \color{red}1.26 \\\hline
\bf{16}  &  \color{ForestGreen}0.06 &    \color{LimeGreen}0.28 &     \color{orange}0.77 &     \color{orange}1.09 \\\hline
\bf{32}  &  \color{ForestGreen}0.03 &  \color{ForestGreen}0.12 &   \color{LimeGreen}0.5 &     \color{orange}0.87 \\\hline
\bf{64}  &  \color{ForestGreen}0.08 &  \color{ForestGreen}0.11 &  \color{LimeGreen}0.31 &     \color{orange}0.68 \\\hline
\bf{128} &  \color{ForestGreen}0.13 &  \color{ForestGreen}0.15 &     \color{orange}0.73 &     \color{orange}0.77 \\\hline
\end{tabular}
\caption{\label{table:l2-so}\centering $\lfloor \textbf{SO}\rceil$}
\end{subtable}
\begin{subtable}{0.42\linewidth}
\centering
\begin{tabular}{|c|c|c|c|c|}
\hline
D\textbackslash N &                 \bf{4} &             \bf{16} &             \bf{64} &            \bf{256} \\ \hline
\bf{2}   &  \color{LimeGreen}0.51 &  \color{orange}0.69 &  \color{orange}0.81 &  \color{orange}0.79 \\ \hline
\bf{4}   &  \color{LimeGreen}0.58 &  \color{orange}0.82 &  \color{orange}0.97 &   \color{orange}1.0 \\\hline
\bf{8}   &     \color{orange}0.83 &  \color{orange}0.98 &  \color{orange}1.12 &  \color{orange}1.15 \\\hline
\bf{16}  &        \color{red}1.28 &     \color{red}1.37 &     \color{red}1.39 &     \color{red}1.41 \\\hline
\bf{32}  &        \color{red}1.61 &     \color{red}1.68 &     \color{red}1.69 &     \color{red}1.64 \\\hline
\bf{64}  &        \color{red}1.93 &     \color{red}1.87 &     \color{red}1.87 &     \color{red}1.86 \\\hline
\bf{128} &         \color{red}2.1 &     \color{red}2.12 &      \color{red}2.1 &     \color{red}2.07 \\\hline
\end{tabular}
\caption{\label{table:l2-ls}\centering $\lfloor \textbf{LS}\rceil$}
\end{subtable}
\begin{subtable}{0.42\linewidth}
\centering
\begin{tabular}{|c|c|c|c|c|}
\hline
D\textbackslash N &                   \bf{4} &                  \bf{16} &                \bf{64} &               \bf{256} \\\hline
\bf{2}   &  \color{ForestGreen}0.07 &  \color{ForestGreen}0.14 &   \color{LimeGreen}0.4 &  \color{LimeGreen}0.62 \\ \hline
\bf{4}   &  \color{ForestGreen}0.15 &  \color{ForestGreen}0.23 &  \color{LimeGreen}0.44 &     \color{orange}0.71 \\\hline
\bf{8}   &     \color{LimeGreen}0.4 &    \color{LimeGreen}0.52 &     \color{orange}0.72 &     \color{orange}0.94 \\\hline
\bf{16}  &       \color{orange}0.96 &       \color{orange}1.05 &     \color{orange}1.12 &     \color{orange}1.19 \\\hline
\bf{32}  &           \color{red}1.4 &          \color{red}1.48 &        \color{red}1.51 &        \color{red}1.51 \\\hline
\bf{64}  &          \color{red}1.81 &          \color{red}1.77 &        \color{red}1.78 &        \color{red}1.77 \\\hline
\bf{128} &           \color{red}2.0 &          \color{red}2.02 &         \color{red}2.0 &        \color{red}1.98 \\\hline
\end{tabular}
\caption{\label{table:l2-qp}\centering $\lfloor \textbf{MM:B}\rceil$}
\end{subtable}
\begin{subtable}{0.42\linewidth}
\centering
\begin{tabular}{|c|c|c|c|c|}
\hline
D\textbackslash N &                   \bf{4} &                  \bf{16} &                \bf{64} &            \bf{256} \\\hline
\bf{2}   &  \color{ForestGreen}0.08 &  \color{ForestGreen}0.14 &  \color{LimeGreen}0.51 &  \color{orange}0.82 \\ \hline
\bf{4}   &  \color{ForestGreen}0.06 &  \color{ForestGreen}0.17 &  \color{LimeGreen}0.44 &  \color{orange}1.18 \\\hline
\bf{8}   &    \color{LimeGreen}0.28 &    \color{LimeGreen}0.47 &     \color{orange}1.01 &     \color{red}2.56 \\\hline
\bf{16}  &          \color{red}2.18 &          \color{red}2.24 &        \color{red}2.84 &     \color{red}4.78 \\\hline
\bf{32}  &          \color{red}11.0 &          \color{red}6.05 &        \color{red}6.45 &     \color{red}9.52 \\\hline
\bf{64}  &         \color{red}23.54 &         \color{red}13.24 &       \color{red}12.27 &    \color{red}14.97 \\\hline
\bf{128} &         \color{red}95.33 &         \color{red}56.65 &        \color{red}19.0 &    \color{red}25.47 \\\hline
\end{tabular}
\caption{\label{table:l2-cowgan}\centering $\lfloor \textbf{MM:Bv2}\rceil$}
\end{subtable}
\begin{subtable}{0.42\linewidth}
\centering
\begin{tabular}{|c|c|c|c|c|}
\hline
D\textbackslash N &                   \bf{4} &                \bf{16} &           \bf{64} &            \bf{256} \\\hline
\bf{2}   &          \color{red}1.29 &        \color{red}1.94 &   \color{red}1.76 &     \color{red}1.71 \\ \hline
\bf{4}   &          \color{red}1.63 &        \color{red}1.78 &   \color{red}9.27 &     \color{red}5.15 \\\hline
\bf{8}   &          \color{red}1.28 &     \color{orange}1.12 &  \color{red}17.12 &    \color{red}10.74 \\\hline
\bf{16}  &  \color{ForestGreen}0.19 &  \color{LimeGreen}0.64 &   \color{red}1.92 &     \color{red}7.97 \\\hline
\bf{32}  &    \color{LimeGreen}0.27 &      \color{orange}0.7 &   \color{red}2.54 &     \color{red}5.37 \\\hline
\bf{64}  &       \color{orange}1.18 &       \color{red}26.94 &   \color{red}3.07 &     \color{red}1.75 \\\hline
\bf{128} &          \color{red}2.16 &     \color{orange}0.91 &    \color{red}4.7 &  \color{orange}0.73 \\\hline
\end{tabular}
\caption{\label{table:l2-mm}\centering $\lfloor \textbf{MM}\rceil$}
\end{subtable}
\begin{subtable}{0.42\linewidth}
\centering
\begin{tabular}{|c|c|c|c|c|}
\hline
D\textbackslash N &                   \bf{4} &                  \bf{16} &                \bf{64} &            \bf{256} \\\hline
\bf{2}   &    \color{LimeGreen}0.53 &    \color{LimeGreen}0.61 &      \color{orange}0.9 &     \color{red}1.21 \\ \hline
\bf{4}   &  \color{ForestGreen}0.14 &    \color{LimeGreen}0.38 &     \color{orange}0.71 &  \color{orange}1.08 \\\hline
\bf{8}   &  \color{ForestGreen}0.06 &  \color{ForestGreen}0.19 &  \color{LimeGreen}0.52 &   \color{orange}0.9 \\\hline
\bf{16}  &  \color{ForestGreen}0.07 &  \color{ForestGreen}0.16 &  \color{LimeGreen}0.53 &  \color{orange}0.93 \\\hline
\bf{32}  &  \color{ForestGreen}0.13 &  \color{ForestGreen}0.25 &  \color{LimeGreen}0.53 &  \color{orange}1.06 \\\hline
\bf{64}  &    \color{LimeGreen}0.63 &       \color{orange}0.92 &     \color{orange}0.72 &   \color{orange}1.0 \\\hline
\bf{128} &       \color{orange}0.98 &       \color{orange}0.99 &        \color{red}1.22 &     \color{red}1.38 \\\hline
\end{tabular}
\caption{\label{table:l2-mmr}\centering $\lfloor \textbf{MM:R}\rceil$}
\end{subtable}
\begin{subtable}{0.42\linewidth}
\centering
\begin{tabular}{|c|c|c|c|c|}
\hline
D\textbackslash N &                   \bf{4} &                 \bf{16} &                \bf{64} &               \bf{256} \\\hline
\bf{2}   &  \color{ForestGreen}0.16 &  \color{ForestGreen}0.22 &  \color{LimeGreen}0.38 &  \color{orange}0.69 \\ \hline
\bf{4}   &     \color{LimeGreen}0.3 &   \color{LimeGreen}0.44 &     \color{orange}0.67 &     \color{orange}1.02 \\\hline
\bf{8}   &       \color{orange}0.88 &      \color{orange}1.05 &        \color{red}1.31 &        \color{red}1.59 \\\hline
\bf{16}  &          \color{red}1.67 &         \color{red}1.77 &         \color{red}1.9 &        \color{red}1.99 \\\hline
\bf{32}  &          \color{red}2.17 &         \color{red}2.27 &         \color{red}2.3 &         \color{red}2.3 \\\hline
\bf{64}  &          \color{red}2.56 &         \color{red}2.53 &        \color{red}2.53 &        \color{red}2.53 \\\hline
\bf{128} &           \color{red}2.7 &         \color{red}2.72 &         \color{red}2.7 &        \color{red}2.68 \\\hline
\end{tabular}
\caption{\label{table:l2-dot}\centering $\lfloor \textbf{DOT}\rceil$}
\end{subtable}
\begin{subtable}{0.42\linewidth}
\centering
\begin{tabular}{|c|c|c|c|c|}
\hline
D\textbackslash N & \bf{4} & \bf{16} & \bf{64} & \bf{256} \\ \hline
\bf{2}   &   0.00 &   0.00 &   0.00 &   0.00 \\\hline
\bf{4}   &   0.00 &   0.00 &   0.00 &   0.00 \\\hline
\bf{8}   &   0.00 &   0.00 &   0.00 &   0.00 \\\hline
\bf{16}  &   0.00 &   0.00 &   0.00 &   0.00 \\\hline
\bf{32}  &   0.00 &   0.00 &   0.00 &   0.00 \\\hline
\bf{64}  &   0.00 &   0.00 &   0.00 &   0.00 \\\hline
\bf{128} &   0.00 &   0.00 &   0.00 &   0.00 \\
\hline
\end{tabular}
\caption{\label{table:l2-true}\centering Ground truth}
\end{subtable}

\vspace{-1mm}\caption{\centering $\mathcal{L}^{2}\!\downarrow$ metric values of the OT gradient estimated by OT solvers on our \textsc{\textbf{high-dimensional}} benchmark pairs $(\mathbb{P},\mathbb{Q})$. Colors indicate the metric value:\protect\linebreak $\mathcal{L}^{2}<{\color{ForestGreen}0.25}$, $\mathcal{L}^{2}\in {\color{LimeGreen}[0.25,0.65)}$,
$\mathcal{L}^{2}\in {\color{orange}[0.65,1.2)}$,
$\mathcal{L}^{2}\geq {\color{red}1.2}$.}
\label{table:l2-hd}
\end{table}

\begin{table}[!t]
\vspace{-4mm}
\centering
\footnotesize
\begin{subtable}{0.42\linewidth}
\centering
\begin{tabular}{|c|c|c|c|c|}
\hline
D\textbackslash N &                   \bf{4} &                  \bf{16} &            \bf{64} &                 \bf{256} \\\hline
\bf{2}   &          \color{red}0.03 &          \color{red}0.04 &   \color{red}-0.01 &         \color{red}-0.01 \\ \hline
\bf{4}   &          \color{red}0.41 &          \color{red}0.19 &    \color{red}0.06 &          \color{red}0.02 \\\hline
\bf{8}   &       \color{orange}1.11 &           \color{red}0.6 &    \color{red}0.21 &          \color{red}0.09 \\\hline
\bf{16}  &  \color{ForestGreen}1.72 &  \color{ForestGreen}1.59 &    \color{red}0.52 &          \color{red}0.29 \\\hline
\bf{32}  &       \color{orange}2.85 &        \color{orange}1.0 &    \color{red}0.52 &           \color{red}0.3 \\\hline
\bf{64}  &          \color{red}3.31 &  \color{ForestGreen}1.31 &  \color{orange}0.7 &          \color{red}0.36 \\\hline
\bf{128} &          \color{red}9.44 &          \color{red}5.91 &     \color{red}2.8 &  \color{ForestGreen}1.09 \\\hline
\end{tabular}
\caption{\label{table:w1-wc}\centering $\lfloor \textbf{WC}\rceil$}
\end{subtable}
\begin{subtable}{0.42\linewidth}
\centering
\begin{tabular}{|c|c|c|c|c|}
\hline
D\textbackslash N &                   \bf{4} &                  \bf{16} &                  \bf{64} &               \bf{256} \\\hline
\bf{2}   &  \color{ForestGreen}0.86 &  \color{ForestGreen}0.42 &       \color{orange}0.13 &        \color{red}0.04 \\ \hline
\bf{4}   &  \color{ForestGreen}1.45 &  \color{ForestGreen}1.06 &    \color{LimeGreen}0.58 &        \color{red}0.22 \\\hline
\bf{8}   &   \color{ForestGreen}2.0 &   \color{ForestGreen}1.7 &  \color{ForestGreen}1.16 &  \color{LimeGreen}0.71 \\\hline
\bf{16}  &  \color{ForestGreen}2.07 &  \color{ForestGreen}1.81 &  \color{ForestGreen}1.36 &     \color{orange}0.91 \\\hline
\bf{32}  &  \color{ForestGreen}1.98 &  \color{ForestGreen}1.71 &    \color{LimeGreen}1.19 &     \color{orange}0.87 \\\hline
\bf{64}  &  \color{ForestGreen}1.38 &    \color{LimeGreen}1.09 &       \color{orange}0.81 &     \color{orange}0.67 \\\hline
\bf{128} &  \color{ForestGreen}1.14 &  \color{ForestGreen}0.92 &       \color{orange}0.71 &      \color{orange}0.6 \\\hline
\end{tabular}
\caption{\label{table:w1-gp}\centering $\lfloor \textbf{GP}\rceil$}
\end{subtable}
\begin{subtable}{0.42\linewidth}
\centering
\begin{tabular}{|c|c|c|c|c|}
\hline
D\textbackslash N &                   \bf{4} &                  \bf{16} &                  \bf{64} &               \bf{256} \\ \hline
\bf{2}   &  \color{ForestGreen}0.89 &  \color{ForestGreen}0.48 &  \color{ForestGreen}0.22 &   \color{LimeGreen}0.1 \\ \hline
\bf{4}   &  \color{ForestGreen}1.48 &  \color{ForestGreen}1.06 &  \color{ForestGreen}0.69 &  \color{LimeGreen}0.39 \\\hline
\bf{8}   &  \color{ForestGreen}1.98 &  \color{ForestGreen}1.66 &  \color{ForestGreen}1.18 &  \color{LimeGreen}0.78 \\\hline
\bf{16}  &  \color{ForestGreen}2.12 &  \color{ForestGreen}1.81 &  \color{ForestGreen}1.35 &     \color{orange}0.91 \\\hline
\bf{32}  &  \color{ForestGreen}1.92 &  \color{ForestGreen}1.61 &    \color{LimeGreen}1.18 &      \color{orange}0.8 \\\hline
\bf{64}  &  \color{ForestGreen}1.46 &    \color{LimeGreen}1.15 &       \color{orange}0.83 &     \color{orange}0.69 \\\hline
\bf{128} &  \color{ForestGreen}1.14 &    \color{LimeGreen}0.87 &       \color{orange}0.69 &     \color{orange}0.64 \\\hline
\end{tabular}
\caption{\label{table:w1-lp}\centering $\lfloor \textbf{LP}\rceil$}
\end{subtable}
\begin{subtable}{0.42\linewidth}
\centering
\begin{tabular}{|c|c|c|c|c|}
\hline
D\textbackslash N &           \bf{4} &          \bf{16} &           \bf{64} &          \bf{256} \\\hline
\bf{2}   &  \color{red}0.15 &  \color{red}0.08 &   \color{red}0.0 &   \color{red}0.0 \\ \hline
\bf{4}   &  \color{red}0.44 &  \color{red}0.18 &   \color{red}0.08 &   \color{red}0.02 \\\hline
\bf{8}   &  \color{red}0.59 &  \color{red}0.34 &   \color{red}0.18 &   \color{red}0.08 \\\hline
\bf{16}  &  \color{red}0.63 &  \color{red}0.46 &   \color{red}0.19 &   \color{red}0.14 \\\hline
\bf{32}  &  \color{red}0.73 &  \color{red}0.29 &   \color{red}0.19 &   \color{red}0.12 \\\hline
\bf{64}  &  \color{red}0.65 &  \color{red}0.32 &   \color{red}0.15 &   \color{red}0.12 \\\hline
\bf{128} &  \color{red}0.37 &  \color{red}0.29 &   \color{red}0.19 &    \color{red}0.1 \\\hline
\end{tabular}
\caption{\label{table:w1-sn}\centering $\lfloor \textbf{SN}\rceil$}
\end{subtable}
\begin{subtable}{0.42\linewidth}
\centering
\begin{tabular}{|c|c|c|c|c|}
\hline
D\textbackslash N&                   \bf{4} &                  \bf{16} &                \bf{64} &            \bf{256} \\ \hline
\bf{2}   &  \color{ForestGreen}0.76 &       \color{orange}0.28 &        \color{red}0.09 &     \color{red}0.04 \\ \hline
\bf{4}   &  \color{ForestGreen}1.27 &       \color{orange}0.56 &        \color{red}0.26 &     \color{red}0.09 \\ \hline
\bf{8}   &  \color{ForestGreen}1.68 &       \color{orange}1.04 &        \color{red}0.54 &     \color{red}0.29 \\ \hline
\bf{16}  &  \color{ForestGreen}1.89 &    \color{LimeGreen}1.46 &     \color{orange}0.85 &     \color{red}0.55 \\ \hline
\bf{32}  &  \color{ForestGreen}1.78 &  \color{ForestGreen}1.52 &     \color{orange}1.06 &  \color{orange}0.73 \\ \hline
\bf{64}  &  \color{ForestGreen}1.43 &  \color{ForestGreen}1.37 &  \color{LimeGreen}1.08 &   \color{orange}0.8 \\ \hline
\bf{128} &  \color{ForestGreen}1.05 &  \color{ForestGreen}0.98 &     \color{orange}0.69 &  \color{orange}0.65 \\ \hline
\end{tabular}
\caption{\label{table:w1-so}\centering $\lfloor \textbf{SO}\rceil$}
\end{subtable}
\begin{subtable}{0.42\linewidth}
\centering
\begin{tabular}{|c|c|c|c|c|}
\hline
D\textbackslash N&              \bf{4} &           \bf{16} &           \bf{64} &          \bf{256} \\ \hline
\bf{2}   &  \color{orange}0.43 &   \color{red}0.09 &    \color{red}0.0 &    \color{red}0.0 \\ \hline
\bf{4}   &     \color{red}0.45 &   \color{red}0.15 &   \color{red}0.01 &  \color{red}-0.01 \\\hline
\bf{8}   &     \color{red}0.24 &   \color{red}0.07 &  \color{red}-0.06 &  \color{red}-0.06 \\\hline
\bf{16}  &    \color{red}-0.16 &   \color{red}-0.2 &   \color{red}-0.2 &  \color{red}-0.18 \\\hline
\bf{32}  &    \color{red}-0.36 &  \color{red}-0.37 &  \color{red}-0.35 &  \color{red}-0.29 \\\hline
\bf{64}  &    \color{red}-0.43 &  \color{red}-0.37 &  \color{red}-0.35 &  \color{red}-0.32 \\\hline
\bf{128} &    \color{red}-0.38 &  \color{red}-0.36 &  \color{red}-0.34 &  \color{red}-0.32 \\\hline
\end{tabular}
\caption{\label{table:w1-ls}\centering $\lfloor \textbf{LS}\rceil$}
\end{subtable}
\begin{subtable}{0.42\linewidth}
\centering
\begin{tabular}{|c|c|c|c|c|}
\hline
D\textbackslash N &                   \bf{4} &                  \bf{16} &                \bf{64} &          \bf{256} \\\hline
\bf{2}   &  \color{ForestGreen}0.79 &  \color{ForestGreen}0.44 &  \color{LimeGreen}0.19 &   \color{red}0.06 \\ \hline
\bf{4}   &    \color{LimeGreen}1.08 &       \color{orange}0.67 &        \color{red}0.35 &   \color{red}0.11 \\\hline
\bf{8}   &          \color{red}0.74 &          \color{red}0.52 &        \color{red}0.21 &   \color{red}0.05 \\\hline
\bf{16}  &          \color{red}0.11 &           \color{red}0.0 &       \color{red}-0.05 &   \color{red}-0.1 \\\hline
\bf{32}  &         \color{red}-0.23 &         \color{red}-0.27 &       \color{red}-0.26 &  \color{red}-0.24 \\\hline
\bf{64}  &         \color{red}-0.39 &         \color{red}-0.35 &       \color{red}-0.33 &  \color{red}-0.32 \\\hline
\bf{128} &         \color{red}-0.38 &         \color{red}-0.37 &       \color{red}-0.35 &  \color{red}-0.33 \\\hline
\end{tabular}
\caption{\label{table:w1-qp}\centering $\lfloor \textbf{MM:B}\rceil$}
\end{subtable}
\begin{subtable}{0.42\linewidth}
\centering
\begin{tabular}{|c|c|c|c|c|}
\hline
D\textbackslash N &                   \bf{4} &                  \bf{16} &                  \bf{64} &                 \bf{256} \\ \hline
\bf{2}   &   \color{ForestGreen}0.8 &  \color{ForestGreen}0.47 &  \color{ForestGreen}0.21 &  \color{ForestGreen}0.12 \\ \hline
\bf{4}   &  \color{ForestGreen}1.42 &  \color{ForestGreen}1.01 &  \color{ForestGreen}0.74 &  \color{ForestGreen}0.52 \\\hline
\bf{8}   &  \color{ForestGreen}2.09 &  \color{ForestGreen}1.82 &    \color{LimeGreen}1.46 &    \color{LimeGreen}1.21 \\\hline
\bf{16}  &       \color{orange}2.86 &       \color{orange}2.54 &    \color{LimeGreen}2.14 &    \color{LimeGreen}1.68 \\\hline
\bf{32}  &          \color{red}4.05 &       \color{orange}3.16 &       \color{orange}2.37 &       \color{orange}2.03 \\\hline
\bf{64}  &           \color{red}4.6 &          \color{red}3.58 &          \color{red}2.71 &          \color{red}2.61 \\\hline
\bf{128} &           \color{red}7.2 &          \color{red}3.91 &          \color{red}2.37 &          \color{red}2.56 \\\hline
\end{tabular}
\caption{\label{table:w1-cowgan}\centering $\lfloor \textbf{MM:Bv2}\rceil$}
\end{subtable}
\begin{subtable}{0.42\linewidth}
\centering
\begin{tabular}{|c|c|c|c|c|}
\hline
D\textbackslash N &                   \bf{4} &                  \bf{16} &                  \bf{64} &                 \bf{256} \\\hline
\bf{2}   &  \color{ForestGreen}0.83 &  \color{ForestGreen}0.48 &    \color{LimeGreen}0.27 &    \color{LimeGreen}0.15 \\\hline
\bf{4}   &  \color{ForestGreen}1.35 &  \color{ForestGreen}1.12 &       \color{orange}1.35 &       \color{orange}0.91 \\\hline
\bf{8}   &  \color{ForestGreen}1.85 &  \color{ForestGreen}1.58 &          \color{red}2.75 &           \color{red}2.1 \\\hline
\bf{16}  &  \color{ForestGreen}1.96 &  \color{ForestGreen}1.76 &  \color{ForestGreen}1.74 &    \color{LimeGreen}1.87 \\\hline
\bf{32}  &  \color{ForestGreen}1.76 &  \color{ForestGreen}1.69 &  \color{ForestGreen}1.68 &  \color{ForestGreen}1.35 \\\hline
\bf{64}  &  \color{ForestGreen}1.22 &    \color{LimeGreen}1.84 &  \color{ForestGreen}1.17 &       \color{orange}0.79 \\\hline
\bf{128} &  \color{ForestGreen}0.97 &    \color{LimeGreen}0.76 &        \color{orange}0.6 &          \color{red}0.33 \\\hline
\end{tabular}
\caption{\label{table:w1-mm}\centering $\lfloor \textbf{MM}\rceil$}
\end{subtable}
\begin{subtable}{0.42\linewidth}
\centering
\begin{tabular}{|c|c|c|c|c|}
\hline
D\textbackslash N&                   \bf{4} &                  \bf{16} &                  \bf{64} &                 \bf{256} \\\hline
\bf{2}   &  \color{ForestGreen}0.79 &  \color{ForestGreen}0.47 &    \color{LimeGreen}0.28 &    \color{LimeGreen}0.16 \\ \hline
\bf{4}   &  \color{ForestGreen}1.37 &  \color{ForestGreen}0.99 &  \color{ForestGreen}0.66 &  \color{ForestGreen}0.43 \\\hline
\bf{8}   &  \color{ForestGreen}1.89 &  \color{ForestGreen}1.54 &  \color{ForestGreen}1.28 &  \color{ForestGreen}1.04 \\\hline
\bf{16}  &  \color{ForestGreen}2.06 &  \color{ForestGreen}1.84 &     \color{LimeGreen}1.8 &  \color{ForestGreen}1.55 \\\hline
\bf{32}  &    \color{LimeGreen}2.26 &    \color{LimeGreen}2.01 &       \color{orange}2.22 &     \color{LimeGreen}1.8 \\\hline
\bf{64}  &          \color{red}3.32 &          \color{red}4.35 &  \color{ForestGreen}1.51 &    \color{LimeGreen}1.02 \\\hline
\bf{128} &        \color{orange}0.6 &       \color{orange}0.73 &       \color{orange}0.56 &          \color{red}0.39 \\\hline
\end{tabular}
\caption{\label{table:w1-mmr}\centering $\lfloor \textbf{MM:R}\rceil$}
\end{subtable}
\begin{subtable}{0.42\linewidth}
\centering
\begin{tabular}{|c|c|c|c|c|}
\hline
D\textbackslash N &                   \bf{4} &                  \bf{16} &                  \bf{64} &               \bf{256} \\\hline
\bf{2}   &  \color{ForestGreen}0.82 &  \color{ForestGreen}0.47 &  \color{ForestGreen}0.24 &  \color{LimeGreen}0.15 \\ \hline
\bf{4}   &  \color{ForestGreen}1.43 &  \color{ForestGreen}1.07 &   \color{ForestGreen}0.8 &  \color{LimeGreen}0.64 \\\hline
\bf{8}   &    \color{LimeGreen}2.42 &    \color{LimeGreen}2.19 &       \color{orange}1.92 &     \color{orange}1.79 \\\hline
\bf{16}  &          \color{red}4.13 &          \color{red}4.04 &          \color{red}3.98 &        \color{red}3.93 \\\hline
\bf{32}  &          \color{red}7.26 &          \color{red}7.23 &          \color{red}7.23 &        \color{red}7.26 \\\hline
\bf{64}  &         \color{red}12.01 &         \color{red}12.05 &         \color{red}12.07 &       \color{red}12.09 \\\hline
\bf{128} &         \color{red}18.88 &          \color{red}18.9 &         \color{red}18.92 &       \color{red}18.93 \\\hline
\end{tabular}
\caption{\label{table:w1-dot}\centering $\lfloor \textbf{DOT}\rceil$}
\end{subtable}
\begin{subtable}{0.42\linewidth}
\centering
\begin{tabular}{|c|c|c|c|c|}
\hline
D\textbackslash N & \bf{4} & \bf{16} & \bf{64} & \bf{256} \\ \hline
\bf{2}  & 0.82 & 0.47 &   0.23 &  0.13\\\hline
\bf{4}   &  1.39 &     1 &  0.71 &  0.49 \\\hline
\bf{8}   &  1.88 &  1.59 &  1.26 &  1.01 \\\hline
\bf{16}  &  1.94 &  1.74 &  1.55 &   1.4 \\\hline
\bf{32}  &  1.82 &  1.67 &  1.54 &  1.41 \\\hline
\bf{64}  &  1.41 &  1.36 &  1.32 &  1.28 \\\hline
\bf{128} &  1.14 &  1.07 &  1.04 &  1.04 \\\hline
\end{tabular}
\caption{\label{table:w1-true}\centering Ground truth}
\end{subtable}

\vspace{-1mm}
\caption{\centering $\mathbb{W}_{1}$ values estimated by OT solvers on our \textsc{\textbf{high-dimensional}} benchmark pairs $(\mathbb{P},\mathbb{Q})$. Colors indicate the value of the relative deviation of $\widehat{\mathbb{W}}_{1}$ from $\mathbb{W}_{1}$, i.e., $\text{dev}\stackrel{def}{=}100\%\cdot \frac{|\mathbb{W}_{1}-\widehat{\mathbb{W}}_{1}|}{\mathbb{W}_{1}}$: $\text{dev}<{\color{ForestGreen}15\%}$, $\text{dev}\in {\color{LimeGreen}[15,30\%)}$,
$\text{dev}\in {\color{orange}[30,50)\%}$,
$\text{dev}\geq {\color{red}50\%}$.}
\label{table:w1-hd}
\end{table}

\clearpage
\section{Transport Rays}

\begin{figure}[!h]
\centering
\includegraphics[width=0.95\textwidth]{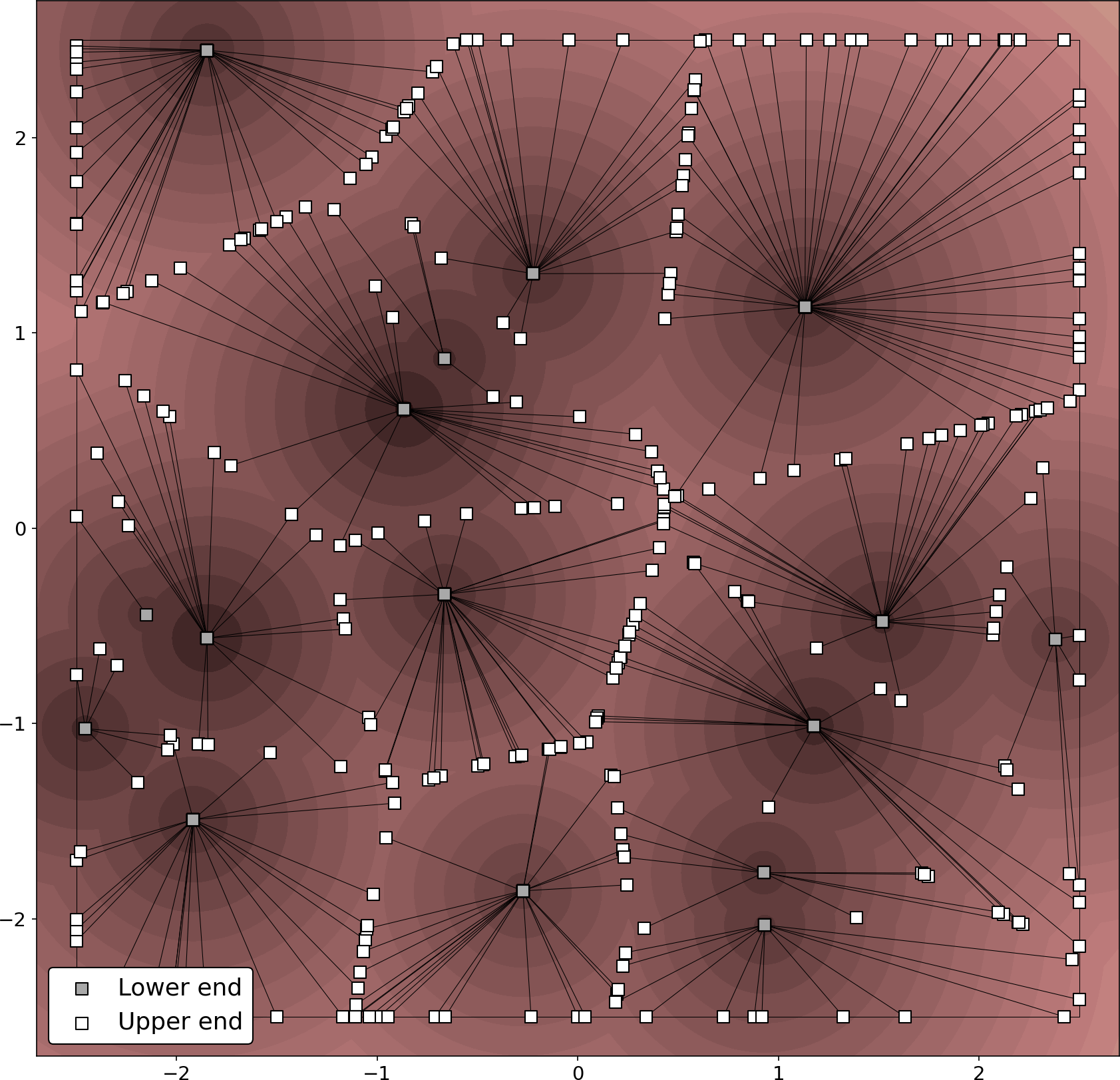}
\caption{Truncated transport rays of a random MinFunnel in dimension $D=2$ with $N=16$.}
\label{fig:transport-rays}
\vspace{-4mm}
\end{figure}

\end{document}